\newcommand{\model}{IGNN\xspace}
\newcounter{proof}
\newcommand{\pf}[1]{\proofname\ \addtocounter{proof}{1}{\theproof}\textit{#1}}
\newcommand{\ceil}[1]{\lceil {#1} \rceil}
\newcommand{\specialcell}[2][c]{%
  \begin{tabular}[#1]{@{}l@{}}#2\end{tabular}}
\newcommand{\specialcellc}[2][c]{%
  \begin{tabular}[#1]{@{}c@{}}#2\end{tabular}}
\theoremstyle{plain}
\newtheorem{theorem}{Theorem}[section]
\newtheorem{proposition}[theorem]{Proposition}
\newtheorem{lemma}[theorem]{Lemma}
\newtheorem{corollary}[theorem]{Corollary}
\theoremstyle{definition}
\newtheorem{definition}[theorem]{Definition}
\theoremstyle{remark}
\setlist[itemize]{leftmargin=*,itemsep=0mm}
\setlist[enumerate]{leftmargin=*,itemsep=0mm}
\newenvironment{reptheorem}[1]{%
  \begin{trivlist}
  \item[\hskip \labelsep{\bfseries Restatement of Theorem~\ref{#1}.}] \itshape
}{%
  \end{trivlist}
}
\newenvironment{repcorollary}[1]{%
  \begin{trivlist}
  \item[\hskip \labelsep{\bfseries Restatement of Corollary~\ref{#1}.}] \itshape
}{%
  \end{trivlist}
}
\newenvironment{repproposition}[1]{%
  \begin{trivlist}
  \item[\hskip \labelsep{\bfseries Restatement of Proposition~\ref{#1}.}] \itshape
}{%
  \end{trivlist}
}
\definecolor{Gray}{gray}{0.85}
\definecolor{Orange}{RGB}{255,230,200}  
\definecolor{Green}{rgb}{0.88,1,0.88}
\definecolor{Blue}{rgb}{0.85,0.85,1}
\definecolor{LightCyan}{rgb}{0.88,1,1}
\newcolumntype{a}{>{\columncolor{Gray}}c}
\title{Making Classic GNNs Strong Baselines Across Varying Homophily: A Smoothness–Generalization Perspective}
\author{%
    Ming Gu$^\bigstar$$^\spadesuit$,~Zhuonan Zheng$^\bigstar$$^\spadesuit$,~Sheng Zhou$^\bigstar$$^\ddag$,~Meihan Liu$^\dag$, \\
    \textbf{Jiawei Chen$^\spadesuit$,~Qiaoyu Tan$^\S$,~Liangcheng Li$^\bigstar$$^\spadesuit$\thanks{Corresponding Author: liangcheng\_li@zju.edu.cn.},~Jiajun Bu$^\bigstar$$^\spadesuit$}\\
    $^\bigstar$ Zhejiang Key Laboratory of Accessible Perception and Intelligent Systems, Zhejiang University\\
    $^\spadesuit$College of Computer Science and Technology, Zhejiang University\\
    $^\ddag$School of Software Technology, Zhejiang University\\
    $^\dag$China University of Mining and Technology\\
    $^\S$ Department of Computer Science, New York University Shanghai\\
}
\begin{document}
\maketitle
\begin{abstract}
Graph Neural Networks (GNNs) have achieved great success but are often considered to be challenged by varying levels of homophily in graphs. Recent \textit{empirical} studies have surprisingly shown that homophilic GNNs can perform well across datasets of different homophily levels with proper hyperparameter tuning, but the underlying theory and effective architectures remain unclear. To advance GNN universality across varying homophily, we theoretically revisit GNN message passing and uncover a novel \textit{smoothness-generalization dilemma}, where increasing hops inevitably enhances smoothness at the cost of generalization. This dilemma hinders learning in high-order homophilic neighborhoods and all heterophilic ones, where generalization is critical due to complex neighborhood class distributions that are sensitive to shifts induced by noise or sparsity. To address this, we introduce the Inceptive Graph Neural Network (\model) built on three simple yet effective design principles, which alleviate the dilemma by enabling distinct hop-wise generalization alongside improved overall generalization with adaptive smoothness. Benchmarking against 30 baselines demonstrates \model's superiority and reveals notable universality in certain homophilic GNN variants. Our code and datasets are available at \href{https://github.com/galogm/IGNN}{https://github.com/galogm/IGNN}.
\end{abstract}

\section{Introduction}
\label{sec:intro}

Graph Neural Networks (GNNs) \cite{old,gcn, gat,mpnn} have attracted substantial attention, achieving notable success across various domains~\cite{transac,recom,towards,recommendation}.
Broadly, GNNs are classified into homophilic GNNs (homoGNNs)~\cite{sage} and heterophilic GNNs (heteroGNNs)~\cite{H2GCN}.
HomoGNNs operate under the homophily assumption, which posits that adjacent nodes tend to share similar labels.
In contrast, heteroGNNs are tailored for heterophilic graphs, where connected nodes are more likely to have differing labels. 

However, real-world graphs do not exhibit a clear dichotomy between homophily and heterophily, but instead present a continuous spectrum. 
As illustrated in \cref{fig:eh} and~\ref{fig:nh p}, \textit{\textbf{varying homophily} appears within a single graph across hops and nodes}.
Therefore, it is essential to develop GNNs that generalize to different levels of homophily, rather than making separate designs for homophily and heterophily as in existing methods.
Recent studies~\cite{classic} have \textit{empirically} shown that homoGNNs, after hyperparameter tuning with residual connections and dropout, can outperform advanced methods designed for heterophily.
This suggests that homoGNNs possess an inherent potential to adapt to varying homophily, but the underlying theory and effective architectures remain unclear.
A question arises: 
\textit{What enables universality across varying homophily in GNNs, or even in homoGNNs?}\begin{wrapfigure}{R}{0.56\textwidth}
    \centering    
    \begin{subfigure}[]{0.56\textwidth}
        \begin{subfigure}[]{0.5\columnwidth}
            \centering    
            \includegraphics[width=\textwidth]{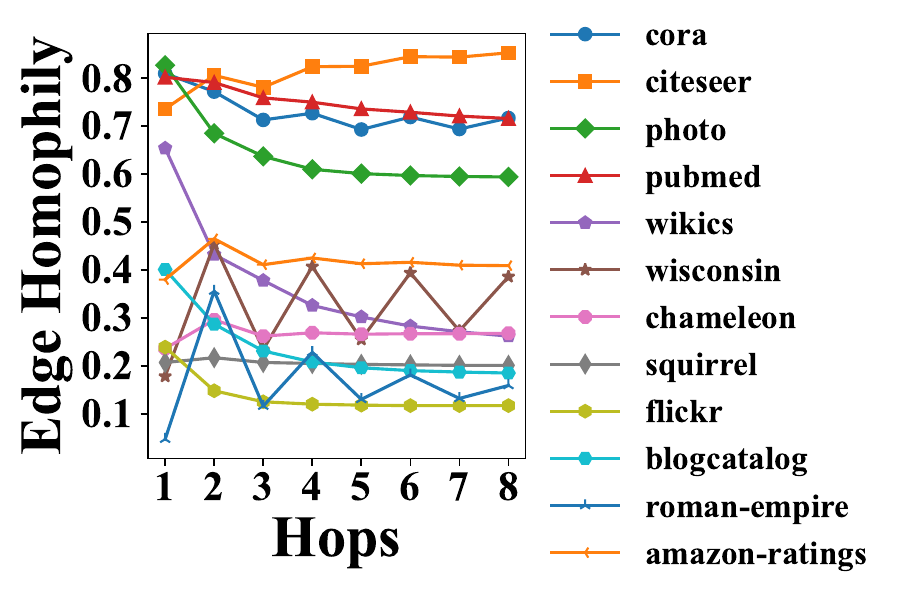}
            \caption{}
            \label{fig:eh}
        \end{subfigure}
        \begin{subfigure}[]{0.48\columnwidth}
            \centering    
            \includegraphics[width=\textwidth]{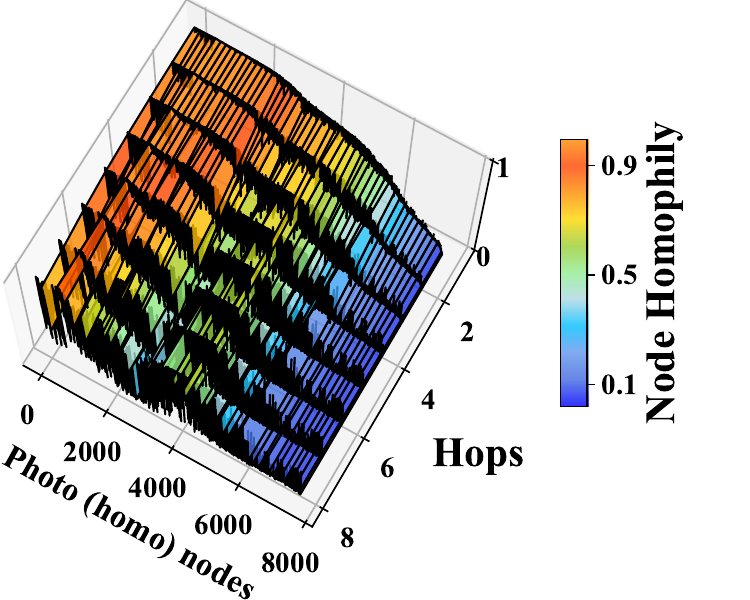}
            \caption{}
            \label{fig:nh p}
        \end{subfigure}
    \end{subfigure}
    
    \begin{subfigure}[]{0.56\textwidth}
        \begin{subfigure}[]{.5\columnwidth}
            \centering    
            \includegraphics[width=0.78\textwidth]{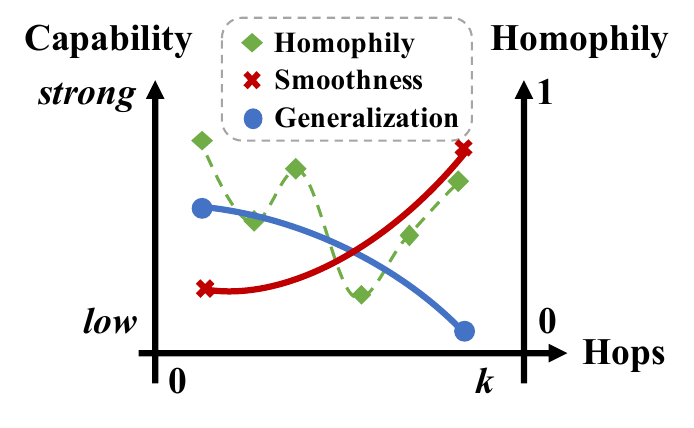}
            \caption{Dilemma}
            \label{fig:dilemma}
        \end{subfigure}
        \begin{subfigure}[]{.48\columnwidth}
            \centering    
            \includegraphics[width=.82\textwidth]{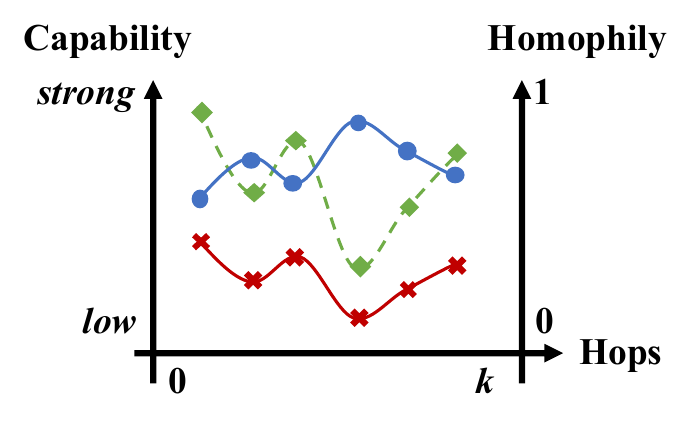}
            \caption{Adaptive}
            \label{fig:expectation}
        \end{subfigure}
    \end{subfigure}
    \caption{
    Varying homophily across (a) hops or (b) nodes.
    Conceptual illustration of the theoretical insight:
    (c) Smoothness-Generalization dilemma identified in GNNs;
    (d) Expected adaptive capabilities for varying homophily.
    }
    \label{fig:intro}
\end{wrapfigure}

To gain a deeper understanding, we theoretically revisit the classic GNN message-passing process and identify a novel \textit{\textbf{smoothness-generalization dilemma}}, as depicted in~\cref{fig:dilemma}.
Here, \textit{smoothness} refers to the alignment of node representations within neighborhoods, while \textit{generalization} denotes the ability to handle distribution shifts across neighborhoods.
\textit{As the number of hops increases, smoothness inevitably rises, while generalization correspondingly declines due to the intrinsic trade-off between the two}.
This dilemma is negligible in low-order homophilic neighborhoods, where strong homophily naturally aligns with smoothness, rendering generalization less critical.
However, it becomes detrimental in higher-order homophilic neighborhoods and all heterophilic ones. 
We show that strong generalization is crucial in these cases to address complex neighborhood class distributions, which are highly sensitive to shifts induced by noise or sparsity. Yet, it remains constrained by the increasing smoothness imposed by the dilemma.
\textit{This insight suggests that resolving the smoothness-generalization dilemma can benefit both homophilic and heterophilic settings without requiring separate designs~(\cref{fig:expectation}), thereby unlocking the full potential of classic GNNs and paving the way toward achieving universality.}

\newcommand{\SN}{SN\xspace}
\newcommand{\IN}{IN\xspace}
\newcommand{\RN}{NR\xspace}
“\textit{More is in vain when less will serve, for Nature is pleased with simplicity}”~\cite{newton}, echoing Sir Isaac Newton, we seek to \textit{make \textbf{minimal changes}} to classic GNNs to reveal the dilemma as a fundamental impediment to universality.
We introduce \underline{I}nceptive \underline{G}raph \underline{N}eural \underline{N}etwork (\model), where the term \textit{inceptive}~\citep{inception} signifies concurrent learning of multiple receptive fields.
\model is built upon three minimal design principles: separative neighborhood transformation (\SN), inceptive neighborhood aggregation (\IN), and neighborhood relationship learning (\RN).
Theoretically and empirically, we demonstrate that these changes alleviate the dilemma from two perspectives:
\textit{\textbf{First}, inceptive neighborhood relationship learning, IN \&NR, enable GNNs to approximate arbitrary graph filters for adaptive smoothness capabilities. 
\textit{\textbf{Second}}, incorporating \SN allows distinct hop-wise generalization and improved overall generalization.}
Our main contributions are:
\begin{itemize}
    \item \textbf{Theoretical Insights}.  We advance the theoretical understanding of GNN universality across varying levels of homophily by uncovering the smoothness-generalization dilemma, providing a foundation for theoretically grounded universal designs.
    \item \textbf{Universal Framework}. We introduce \model, a universal message-passing framework based on three minimal yet effective design principles. \model mitigates the dilemma without relying on specialized modules tailored for either homophilic or heterophilic graphs.
    \item \textbf{Benchmark and Empirical Findings}. We establish a comprehensive benchmark consisting of 30 representative baselines to assess the effectiveness of our design principles. Our results demonstrate that not only can classic GCNs enhanced with these principles achieve state-of-the-art (SOTA) performance, but also that certain existing homoGNNs inherently possess universal capabilities.
\end{itemize}

\section{Related Works}

\textbf{Homophilic Graph Neural Networks}.
GNNs have demonstrated remarkable abilities in managing graph-structured data, particularly under the assumption of homophily. 
Traditional GNNs can be broadly categorized into two categories.
Spectral GNNs, such as the GCN \cite{gcn}, leverage various graph filters to derive node representations. 
In contrast, spatial GNNs aggregate information from neighboring nodes and combine it with the ego node to update representations, employing methods such as attention mechanisms \cite{gat} and sampling strategies \cite{sage}. 
Unified frameworks \cite{de, op} have been proposed to integrate and elucidate these diverse message-passing approaches.
Several multi-hop techniques were proposed to address the limitations of long-range dependencies, such as residual connections \cite{gcnii} and jumping knowledge \cite{jknet}.
\textit{However, these homophilic methods are often considered less effective when dealing with heterophilic settings, while a recent empirical study shows its potential to universality~\cite{classic} but lacks a theoretical understanding}.

\textbf{Heterophilic Graph Neural Networks}.
Addressing the challenges posed by heterophily, several innovative approaches have been proposed:
(1) Neighborhood extension: Techniques such as high-order neighborhood concatenation \cite{H2GCN,n2}, neighborhood discovery \cite{geom}, neighborhood refinement \cite{ggcn}, and global information capture \cite{glo}.
(2) Neighborhood discrimination: Methods including ordered neighborhood encoding \cite{OGNN}, ego-neighbor separation \cite{H2GCN}, and hetero-/homo-phily neighborhood separation \cite{dgcn}.
(3) Fine-grained information utilization: Strategies such as multi-filter signal usage \cite{acm,uni}, intermediate layer combination \cite{H2GCN}, and refined gating or attention mechanisms \cite{gbk}.
These methods generally retain the practice of message passing~\citep{cmgnn} that aggregates multi-hop neighborhood information.
\textit{However, these methods often treat homophily and heterophily separately, leading to a paradox: effectively separating them would require prior knowledge of node labels, while it is precisely the labels that need to be learned.}
A holistic understanding is needed to guide the development of an architecture that adapts to both settings without different treatments.

\textbf{Oversmoothing, Heterophily and Generalization}.
Early studies \cite{kerivennot,subspace,wu2023} investigate oversmoothing or generalization without considering varying homophily,
while later works reveal that oversmoothing and heterophily are often intertwined leaving generalization unexamined.
\citet{nsd} attribute both oversmoothing and heterophily to the underlying graph geometry using a sheaf-based formulation.
\citet{reverse} counter the two by reversing the diffusion process, yet their approach remains architecturally motivated without theoretical insight into generalization.
Meanwhile, several heterophily-oriented models~\cite{OGNN, uni, gpr} have been shown to alleviate oversmoothing, while oversmoothing-focused designs~\cite{gcnii, mixhop} also perform well under heterophily.
In contrast, \citet{nece} explore the link between heterophily and generalization while omitting oversmoothing.
In summary, \textit{existing studies have examined all pairwise combinations among oversmoothing, heterophily, and generalization, yet no unified framework has bridged all three.}
We fill this gap through a unified theoretical lens, demonstrating that the issues of oversmoothing, poor generalization, and heterophily all stem from a shared underlying trade-off between smoothness and generalization, thereby offering a principled foundation for a unified understanding and guides the design of more universal GNNs.
\section{Notations and Preliminaries}
Given an undirected graph $\mathcal{G}(\mathcal{V},\mathbf{X}, \mathcal{E},\mathbf{A})$ with the node set $\mathcal{V}=\{v_1, \dots,v_N\}$ and feature matrix $\mathbf{X}=[\mathbf{x}_0,\dots,\mathbf{x}_N]^\top\in \mathbb{R}^{N\times D}$, the edge set $\mathcal{E}$ is represented by the adjacency matrix $\mathbf{A}\in \mathbb{R}^{N\times N}$.
$\mathbf{A}_{ij}=1$ if $(v_i,v_j)\in \mathcal{E}$, otherwise $\mathbf{A}_{ij}=0$. 
The degree matrix is $\mathbf{D}=\text{diag}(d_1,\dots,d_N)\in \mathbb{R}^{N\times N}$, $d_i=\sum^N_{j}\mathbf{A}_{ij}$.
The re-normalization of $\mathbf{A}$ is $\widehat{\mathbf{A}}=\widehat{\mathbf{D}}^{-\frac{1}{2}}   (\mathbf{\mathbf{A}} + \mathbf{I}_N)  \widehat{\mathbf{D}}^{-\frac{1}{2}}$, where $\mathbf{I}_N$ is the identity matrix.
The symmetrically normalized graph Laplacian matrix is $\widehat{\mathbf{L}}=\mathbf{I}_N-\widehat{\mathbf{A}}$.
Edge and node homophily are computed as: $h_e = (1/|\mathcal{E}|)\sum_{(v_i,v_j)\in \mathcal{E}} \mathbb{I}(c_i=c_j)$, $h_n = 1/N\sum_{{v_i}\in\mathcal{V}} \sum_{(v_i,v_j)\in \mathcal{E}}\mathbb{I}(c_i=c_j)/d_i$.

\subsection{Smoothness of GNNs}
\label{sec:smooth}
\citet{subspace} describe the smoothness characteristic of GNNs with information loss from $\mathbf{X}$ on asymptotic behaviors of GNNs from a dynamical systems perspective.
They
demonstrate that when it extends with more layers, the GNN representation~(i.e., $\mathbf{H}^{(k)}_G=\sigma(\widehat{\mathbf{A}}\mathbf{H}^{(k-1)}\mathbf{W}^{(k)})$, see~\cref{sec:revisit}) exponentially approaches information-less states, which is a subspace $\mathcal{M}$ in \cref{df:sub}.
\begin{definition}[subspace]
\label{df:sub}
Let $\mathcal{M}:=\left\{\mathbf{E} \mathbf{B} \mid \mathbf{B} \in \mathbb{R}^{M \times D}\right\}$ be an $M$-dimensional subspace in $\mathbb{R}^{N \times D}$, where $\mathbf{E} \in \mathbb{R}^{N \times M}$ is orthogonal, i.e. $\mathbf{E}^{\mathrm{T}} \mathbf{E}=\mathbf{I}_M$, and $M \leq N$.
\end{definition}
Following their notations, we denote the maximum singular value of $\mathbf{W}^{(l)}$ by $s_l$ and set $s:=\sup _{l \in \mathbb{N}_{+}} s_l$.
Denote the distance that induced as the Frobenius norm from $\mathbf{X}$ to $\mathcal{M}$ by $d_{\mathcal{M}}(\mathbf{X}):=\inf _{\mathbf{Y} \in \mathcal{M}}\|\mathbf{X}-\mathbf{Y}\|_{\mathrm{F}}=\mathcal{D}$. 
The following Corollary~\ref{cl:dm} shows the information loss as layer $l$ goes.
\begin{corollary}[\citet{subspace}]
\label{cl:dm}
    Let $\lambda_1 \leq \cdots \leq \lambda_N$ be the eigenvalues of $\widehat{A}$, sorted in ascending order. Suppose the multiplicity of the largest eigenvalue $\lambda_N$ is $M(\leq N)$, i.e., $\lambda_{N-M}<\lambda_{N-M+1}=\cdots=\lambda_N$ and the second largest eigenvalue is defined as
    $ \lambda:=\max _{n=1}^{N-M}\left|\lambda_n\right|<\left|\lambda_N\right|=1 .$
Let $\mathbf{E}$ to be the eigenspace associated with $\lambda_{N-M+1}, \cdots, \lambda_N$. Then we have $\lambda<\lambda_N=1$, and
\begin{equation}
    d_{\mathcal{M}}\left(\mathbf{H}^{(l)}\right) \leq s_l \lambda d_{\mathcal{M}}\left(\mathbf{H}^{(l-1)}\right),
\end{equation}
where $\mathcal{M}:=\left\{\mathbf{E} \mathbf{B} \mid \mathbf{B} \in \mathbb{R}^{M \times D}\right\}$.
If $s_l \lambda<1$, the $l$-th layer output
exponentially approaches $\mathcal{M}$.
\end{corollary}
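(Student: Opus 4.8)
The plan is to follow the dynamical-systems argument of \citet{subspace}: decompose one GCN layer $\mathbf{H}^{(l)}=\sigma(\widehat{\mathbf{A}}\mathbf{H}^{(l-1)}\mathbf{W}^{(l)})$ into its three constituent maps --- left multiplication by the propagation matrix $\widehat{\mathbf{A}}$, right multiplication by $\mathbf{W}^{(l)}$, and the entrywise nonlinearity $\sigma$ --- and control how each of them changes the distance $d_{\mathcal{M}}(\cdot)$ to the invariant subspace. Since $\mathbf{E}$ has orthonormal columns, the infimum in $d_{\mathcal{M}}(\mathbf{Z})=\inf_{\mathbf{Y}\in\mathcal{M}}\|\mathbf{Z}-\mathbf{Y}\|_{\mathrm F}$ is attained at $\mathbf{Y}=\mathbf{E}\mathbf{E}^{\mathrm T}\mathbf{Z}$, so writing $\mathbf{P}:=\mathbf{I}_N-\mathbf{E}\mathbf{E}^{\mathrm T}$ for the orthogonal projection onto $\mathcal{M}^{\perp}$ one has $d_{\mathcal{M}}(\mathbf{Z})=\|\mathbf{P}\mathbf{Z}\|_{\mathrm F}$ for every $\mathbf{Z}$, and the whole argument reduces to estimates on $\|\mathbf{P}(\cdot)\|_{\mathrm F}$.

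First I would handle the propagation step. Because $\widehat{\mathbf{A}}$ is symmetric and $\mathcal{M}$ is exactly the span of eigenvectors for $\lambda_N=1$, the subspaces $\mathcal{M}$ and $\mathcal{M}^{\perp}$ are $\widehat{\mathbf{A}}$-invariant, so $\mathbf{P}$ commutes with $\widehat{\mathbf{A}}$ and the restriction of $\widehat{\mathbf{A}}$ to $\mathcal{M}^{\perp}$ has operator norm $\max_{n=1}^{N-M}|\lambda_n|=\lambda$. Hence $d_{\mathcal{M}}(\widehat{\mathbf{A}}\mathbf{Z})=\|\widehat{\mathbf{A}}\mathbf{P}\mathbf{Z}\|_{\mathrm F}\le\lambda\,d_{\mathcal{M}}(\mathbf{Z})$. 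The linear transformation is then controlled by $\|\mathbf{P}\mathbf{Z}\mathbf{W}^{(l)}\|_{\mathrm F}\le\|\mathbf{P}\mathbf{Z}\|_{\mathrm F}\,\|\mathbf{W}^{(l)}\|_2=s_l\,d_{\mathcal{M}}(\mathbf{Z})$, using submultiplicativity of the Frobenius and operator norms together with the fact that right multiplication does not interfere with $\mathbf{P}$ acting on the left. Along the way I would also record that $\lambda<1$: the self-loops introduced by the renormalization $\widehat{\mathbf{A}}=\widehat{\mathbf{D}}^{-1/2}(\mathbf{A}+\mathbf{I}_N)\widehat{\mathbf{D}}^{-1/2}$ keep the smallest eigenvalue strictly above $-1$, so $\lambda=\max_{n\le N-M}|\lambda_n|<1=\lambda_N$.

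The delicate step --- and the one I expect to be the main obstacle --- is the nonexpansiveness of $\sigma$ with respect to $d_{\mathcal{M}}$, i.e. $d_{\mathcal{M}}(\sigma(\mathbf{Z}))\le d_{\mathcal{M}}(\mathbf{Z})$. The key structural observation is that for this $\widehat{\mathbf{A}}$ the eigenspace of $\lambda_N=1$ is spanned by the vectors $\widehat{\mathbf{D}}^{1/2}\mathbf{1}_{\mathcal{C}}$ indexed by the connected components $\mathcal{C}$ of $\mathcal{G}$; these are entrywise nonnegative with mutually disjoint supports, so $\mathcal{M}$ is closed under the entrywise $\mathrm{ReLU}$. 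Consequently, if $\mathbf{Z}^{\star}:=\mathbf{E}\mathbf{E}^{\mathrm T}\mathbf{Z}$ denotes the projection of $\mathbf{Z}$ onto $\mathcal{M}$, then $\sigma(\mathbf{Z}^{\star})\in\mathcal{M}$, and since $\sigma$ is $1$-Lipschitz entrywise, $d_{\mathcal{M}}(\sigma(\mathbf{Z}))\le\|\sigma(\mathbf{Z})-\sigma(\mathbf{Z}^{\star})\|_{\mathrm F}\le\|\mathbf{Z}-\mathbf{Z}^{\star}\|_{\mathrm F}=d_{\mathcal{M}}(\mathbf{Z})$. Verifying that $\mathbf{E}\mathbf{E}^{\mathrm T}$ genuinely commutes with $\widehat{\mathbf{A}}$ and that the component vectors above really exhaust the eigenvalue-$1$ eigenspace are the routine-but-load-bearing checks here.

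Chaining the three bounds with $\mathbf{Z}=\mathbf{H}^{(l-1)}$ gives $d_{\mathcal{M}}(\mathbf{H}^{(l)})\le s_l\lambda\,d_{\mathcal{M}}(\mathbf{H}^{(l-1)})$, the claimed one-step contraction. Unrolling the recursion from $\mathbf{H}^{(0)}=\mathbf{X}$ and using $s=\sup_l s_l$ yields $d_{\mathcal{M}}(\mathbf{H}^{(l)})\le(s\lambda)^{l}\mathcal{D}$, so whenever $s_l\lambda<1$ --- in particular when $s\lambda<1$ --- the layerwise output of GCN approaches $\mathcal{M}$ exponentially fast, which is the asserted oversmoothing behavior.
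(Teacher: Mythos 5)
Your reconstruction is correct and follows essentially the same route that \citet{subspace} take and that the paper reuses in its appendix when extending this corollary to Theorem~\ref{tm:infoloss}: decompose the layer into propagation, linear transformation, and pointwise nonlinearity, bound the contraction of $d_{\mathcal{M}}$ under each of the three maps separately, and chain them. The only cosmetic difference is that you phrase the spectral step via the orthogonal projector $\mathbf{P}=\mathbf{I}_N-\mathbf{E}\mathbf{E}^{\mathrm T}$ and the identity $d_{\mathcal{M}}(\mathbf{Z})=\|\mathbf{P}\mathbf{Z}\|_{\mathrm F}$, whereas the paper's appendix works with the explicit eigenbasis expansion $\mathbf{Z}=\sum_m e_m\otimes\boldsymbol{\omega}_m$ and $d_{\mathcal{M}}^2(\mathbf{Z})=\sum_{m>M}\|\boldsymbol{\omega}_m\|_2^2$; these are two notations for the same computation. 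Your argument that $\mathcal{M}$ is closed under entrywise $\mathrm{ReLU}$ (disjoint-support, entrywise-nonnegative component indicators $\widehat{\mathbf{D}}^{1/2}\mathbf{1}_{\mathcal{C}}$) is exactly the content of the paper's Lemma~\ref{lm:sigma}, and the remark that the added self-loops push the smallest eigenvalue strictly above $-1$ is the standard justification for $\lambda<1$.
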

\textit{\textbf{Greater smoothness with larger information loss is indicated by a smaller distance $d_{\mathcal{M}}(\mathbf{H}^{(l)})$ from the representations to the subspace $\mathcal{M}$}}~\citep{subspace}.
This is because the subspace denotes the convergence state of minimal information retained from the original node features $\mathbf{X}$, with the only information of the connected components and node degrees of $\widehat{\mathbf{A}}$.
This means that for any $\mathbf{Y}\in\mathcal{M}$, if two nodes $v_i, v_j \in\mathcal{V}$ are in the same connected component and their degrees are identical, then the corresponding column vectors of $\mathbf{Y}$ are identical, i.e., they cannot be distinguished.

\subsection{Generalization of GNNs}
GNN generalization can be governed by the Lipschitz constant as discussed in existing works~\cite{lip1,lip2}:
\begin{definition}[Lipschitz constant]
\label{def:lip}
    A function $f: \mathbb{R}^n \rightarrow \mathbb{R}^m$ is called Lipschitz continuous if there exists a constant $L$ such that
    $
        \forall x, y \in \mathbb{R}^n,\|f(x)-f(y)\|_2 \leq L\|x-y\|_2,
    $
where the smallest $L$ for which the previous inequality is true is called the Lipschitz constant of $f$ and denoted $\hat{L}$.
\end{definition}

\textit{\textbf{Better generalization is exhibited by GNNs with a smaller Lipschitz constant $\hat{L}$}}~\cite{gene}.
This paper does not discuss generalization on graph domain adaption~\cite{domain}, but discusses generalization regarding inherent structural disparity~\cite{disparity} and data distribution shifts from training to test sets~\cite{gene}.


\section{Theoretical Analysis of Classic GNNs}
\label{sec:revisit}
Generally, most GNNs capture multi-hop information by stacking message-passing (MP) layers~\citep{gin}:
\begin{equation}
\label{eq:h0}
    \mathbf{h}^{(0)}_v = \mathbf{x}_v,\ 
    \mathbf{m}^{(k)}_v = \text{AGG}^{(k)}(\{\mathbf{h}^{(k-1)}_u\mid u\in\mathcal{N}(v)\}),\ 
    \mathbf{h}^{(k)}_v = \text{COM}^{(k)}(h_v^{(k-1)},m_v^{(k)}),
\end{equation}

where $\mathbf{h}_v^{(k)}$ is the hidden representation and $\mathbf{m}^{(k)}_v$ is the message for node $v$ in the $k$-th layer.
$\text{AGG}(\cdot)$ and $\text{COM}(\cdot)$ denote the aggregation and combination function, while
$\mathcal{N}(v)$ is the set of neighbors adjacent to node $v$.
Denoting $\mathbf{H}^{(k)}=[\mathbf{h}_0^{(k)},\mathbf{h}_1^{(k)},\cdots,\mathbf{h}_N^{(k)}]^\top\in \mathbb{R}^{N\times F}$, the widely used GCN implementation can be written as $\mathbf{H}^{(k)}_G=\sigma(\widehat{\mathbf{A}}\mathbf{H}^{(k-1)}\mathbf{W}^{(k)})$, where $\sigma(\cdot)$ is the activation function.

\subsection{Smoothness-Generalization Dilemma}
The following~\cref{tm:infoloss} reveals a dilemma in classic GCNs of $k$ layers. 
See proof in \cref{sec:infoloss}.
\begin{theorem}
\label{tm:infoloss}
    Given a graph $\mathcal{G}(\mathbf{X},\mathbf{A})$, let the representation obtained via $k$ rounds of GCN message passing on symmetrically normalized $\widehat{\mathbf{A}}$  be denoted as $\mathbf{H}^{(k)}_G = \sigma(\widehat{\mathbf{A}}\mathbf{H}^{(k-1)}\mathbf{W}^{(k)})$, 
    and the Lipschitz constant of this $k$-layer graph neural network be denoted as $\hat{L}_G$.
    Given the distance from $\mathbf{X}$ to the subspace $\mathcal{M}$ as $d_\mathcal{M}(\mathbf{X})=\mathcal{D}$, then the distance from $\mathbf{H}^{(k)}_G$ to  $\mathcal{M}$ satisfies: 
    \begin{equation}
        d_\mathcal{M}(\mathbf{H}_G^{(k)})\le  \hat{L}_G \lambda^k \mathcal{D},
    \end{equation}
    where $\hat{L}_G=\|\prod_{i=0}^k \mathbf{W}^{(i)}\|_2$, and $\lambda<1$ is the second largest eigenvalue of $\widehat{\mathbf{A}}$.
\end{theorem}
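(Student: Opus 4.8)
The plan is to express $d_\mathcal{M}(\cdot)$ as the Frobenius norm of an orthogonal projection, and then to track how a single operator norm shrinks as it is pushed through the $k$ message-passing layers. Writing $\mathbf{P} := \mathbf{I}_N - \mathbf{E}\mathbf{E}^\top$ for the orthogonal projector onto $\mathcal{M}^{\perp}$, the optimal choice $\mathbf{B} = \mathbf{E}^\top\mathbf{Y}$ in Definition~\ref{df:sub} gives $d_\mathcal{M}(\mathbf{Y}) = \|\mathbf{P}\mathbf{Y}\|_F$ for every $\mathbf{Y}$. Since $\mathbf{E}$ spans the eigenspace of the top eigenvalue $\lambda_N = 1$ of $\widehat{\mathbf{A}}$, the projector commutes with the shift, $\mathbf{P}\widehat{\mathbf{A}} = \widehat{\mathbf{A}}\mathbf{P}$, and on $\mathcal{M}^{\perp}$ the shift has spectral norm at most $\lambda$ (the second largest eigenvalue in modulus, as in Corollary~\ref{cl:dm}); hence $\|\widehat{\mathbf{A}}^{k}\mathbf{P}\|_2 \le \lambda^{k}$.

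Next I would unfold the $k$ layers of $\mathbf{H}^{(k)}_G = \sigma(\widehat{\mathbf{A}}\,\mathbf{H}^{(k-1)}_G\mathbf{W}^{(k)})$. The activation is handled with the non-expansiveness property underlying Corollary~\ref{cl:dm}, namely $d_\mathcal{M}(\sigma(\mathbf{Z})) \le d_\mathcal{M}(\mathbf{Z})$, which holds here because the per-component Perron vector of $\widehat{\mathbf{A}}$ is entrywise positive, so $\sigma$ maps $\mathcal{M}$ into $\mathcal{M}$ and cannot push a representation away from it; equivalently, comparing $\mathbf{X}$ with its closest point $(\mathbf{I}_N-\mathbf{P})\mathbf{X}\in\mathcal{M}$ and using that the $k$-layer map keeps $\mathcal{M}$ invariant. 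Peeling off the activation at each layer and using $d_\mathcal{M}(\mathbf{Z}\mathbf{W}) \le \|\mathbf{W}\|_2\, d_\mathcal{M}(\mathbf{Z})$ for the feature-axis weights together with the commutation above, the $k$ copies of $\widehat{\mathbf{A}}$ contribute the single factor $\|\widehat{\mathbf{A}}^{k}\mathbf{P}\|_2 \le \lambda^{k}$, the weights contribute $\|\prod_{i=0}^{k}\mathbf{W}^{(i)}\|_2 = \hat{L}_G$ (the Lipschitz constant of the composed map, since $\widehat{\mathbf{A}}$ and $\sigma$ are $1$-Lipschitz), and the residual term is $d_\mathcal{M}(\mathbf{X}) = \|\mathbf{P}\mathbf{X}\|_F = \mathcal{D}$. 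Concretely, I would chain
\[
d_\mathcal{M}(\mathbf{H}^{(k)}_G) \;=\; \|\mathbf{P}\,\mathbf{H}^{(k)}_G\|_F \;\le\; \|\widehat{\mathbf{A}}^{k}\mathbf{P}\|_2\;\Bigl\|\textstyle\prod_{i=0}^{k}\mathbf{W}^{(i)}\Bigr\|_2\;\|\mathbf{P}\mathbf{X}\|_F \;\le\; \hat{L}_G\,\lambda^{k}\,\mathcal{D}.
\]

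I expect the only real obstacle to be the interplay between the nonlinearity and the weights. Iterating Corollary~\ref{cl:dm} verbatim would yield the looser constant $\prod_{i} s_i = \prod_i \|\mathbf{W}^{(i)}\|_2$; obtaining the tighter $\|\prod_i \mathbf{W}^{(i)}\|_2$ requires not splitting the weight norm across layers, which is legitimate precisely because the projector $\mathbf{P}$ acts on the node axis and therefore commutes freely past every feature-axis weight $\mathbf{W}^{(i)}$, letting the accumulated linear transformation be treated as a single operator $\prod_i \mathbf{W}^{(i)}$ whose norm is $\hat{L}_G$. The delicate step is to certify that interleaving $\sigma$ between these commutations costs nothing, i.e.\ that $\sigma$ increases neither the distance to $\mathcal{M}$ nor the norm of the linear skeleton, which is exactly where the structure of $\mathcal{M}$ as the (positive) top eigenspace of $\widehat{\mathbf{A}}$, and hence the lemma of \citet{subspace}, must be invoked rather than a generic $1$-Lipschitz bound. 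Everything else is a routine sequence of submultiplicativity and $\|\widehat{\mathbf{A}}\|_2 = 1$ estimates.
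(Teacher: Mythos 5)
Your proposal takes essentially the same route as the paper's proof: both peel off the activations using the non-expansiveness lemma $d_\mathcal{M}(\sigma(\cdot))\le d_\mathcal{M}(\cdot)$ of \citet{subspace}, reduce to the linear skeleton $\widehat{\mathbf{A}}^k\mathbf{X}\prod_{i}\mathbf{W}^{(i)}$, and then extract the factor $\lambda^k$ from the restriction of $\widehat{\mathbf{A}}$ to $\mathcal{M}^{\perp}$ while keeping the weights as the \emph{single} operator $\prod_i\mathbf{W}^{(i)}$. The only substantive difference is bookkeeping: you phrase everything through the orthogonal projector $\mathbf{P}=\mathbf{I}_N-\mathbf{E}\mathbf{E}^\top$ and the identity $d_\mathcal{M}(\mathbf{Y})=\|\mathbf{P}\mathbf{Y}\|_F$, whereas the paper expands $\mathbf{X}=\sum_m e_m\otimes\boldsymbol{\omega}_m$ in the eigenbasis and tracks $d_\mathcal{M}^2(\mathbf{X})=\sum_{m>M}\|\boldsymbol{\omega}_m\|_2^2$; these are equivalent, and your final chain of norm estimates mirrors the paper's~\eqref{eq:dmlgcn} in projector dress.

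Two small cautions. First, your parenthetical justification that $\hat L_G=\|\prod_i\mathbf{W}^{(i)}\|_2$ "since $\widehat{\mathbf{A}}$ and $\sigma$ are $1$-Lipschitz" does not establish the equality on its own — generic composition of $1$-Lipschitz maps with the weights only yields the looser $\hat L_G\le\prod_i\|\mathbf{W}^{(i)}\|_2$. The paper imports the tighter identification as Lemma~\ref{lm:gcnlip} from \citet{gcnlip}, and your sketch should cite that result rather than assert it from generic composition. Second, the observation that $\mathbf{P}$ acts on the node axis and commutes past every $\mathbf{W}^{(i)}$ is correct but is not, by itself, what lets you collapse the weights into one product: the activations $\sigma$ sit between consecutive $\mathbf{W}^{(i)}$'s and do \emph{not} commute with them, so the iterated peeling via Lemma~\ref{lm:sigma} must be carried out first (this is precisely the paper's~\eqref{eq:prooftm1}), and only then does the commutation argument apply to the resulting purely linear skeleton. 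You flag this interleaving as "the delicate step," which is the right instinct, but in a full write-up that peeling should be the opening move rather than appear as a commutation footnote.
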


\begin{corollary}
\label{cl:infoloss}
    $\forall \hat{L}_G, \epsilon>0, \exists k^*=\ceil{(\log \frac{\epsilon}{\hat{L}_G \mathcal{D}})/\log \lambda}$, such that $d_\mathcal{M}(\mathbf{H}_G^{(k^*)}) < \epsilon$, where  $\ceil{\cdot}$ is the ceil of the input.
\end{corollary}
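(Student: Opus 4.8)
The plan is to obtain Corollary~\ref{cl:infoloss} as an immediate consequence of the geometric contraction bound $d_\mathcal{M}(\mathbf{H}_G^{(k)}) \le \hat{L}_G \lambda^k \mathcal{D}$ proved in Theorem~\ref{tm:infoloss}. Since $0<\lambda<1$, the right-hand side decays geometrically in $k$, so it suffices to exhibit the least number of message-passing rounds $k$ for which $\hat{L}_G \lambda^k \mathcal{D} \le \epsilon$; transitivity of $\le$ then delivers $d_\mathcal{M}(\mathbf{H}_G^{(k)})<\epsilon$.

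First I would isolate $k$ in the inequality $\hat{L}_G \lambda^k \mathcal{D} \le \epsilon$. Assuming $\mathcal{D}>0$ and $\hat{L}_G>0$ (if $\mathcal{D}=0$ then $\mathbf{X}\in\mathcal{M}$ and there is nothing to show), divide by $\hat{L}_G\mathcal{D}$ to get $\lambda^k \le \epsilon/(\hat{L}_G\mathcal{D})$ and take logarithms, giving $k\log\lambda \le \log\bigl(\epsilon/(\hat{L}_G\mathcal{D})\bigr)$. The one point that needs care is that $\log\lambda<0$ because $\lambda<1$, so dividing through by $\log\lambda$ \emph{reverses} the inequality: $k \ge \log\bigl(\epsilon/(\hat{L}_G\mathcal{D})\bigr)/\log\lambda$. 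The smallest integer meeting this condition is exactly $k^* = \ceil{\log(\epsilon/(\hat{L}_G\mathcal{D}))/\log\lambda}$.

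I would then close the argument by substituting back: for this $k^*$ we have $k^* \ge \log(\epsilon/(\hat{L}_G\mathcal{D}))/\log\lambda$, and monotonicity of $t\mapsto\lambda^t$ (decreasing, since $\lambda<1$) gives $\lambda^{k^*} \le \epsilon/(\hat{L}_G\mathcal{D})$, hence $\hat{L}_G\lambda^{k^*}\mathcal{D} \le \epsilon$. Combining with Theorem~\ref{tm:infoloss} yields $d_\mathcal{M}(\mathbf{H}_G^{(k^*)}) \le \hat{L}_G\lambda^{k^*}\mathcal{D} \le \epsilon$, which is the claim.

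There is no substantive obstacle here; the proof is a one-line manipulation of the exponential bound, and the theorem does the real work. The only items requiring bookkeeping are: (i) the sign flip when dividing by $\log\lambda$; (ii) the degenerate cases $\mathcal{D}=0$ or $\epsilon \ge \hat{L}_G\mathcal{D}$, where the formula returns $k^*\le 0$ and the statement holds trivially (reading $k^*$ as $\max\{k^*,0\}$); and (iii) the strict-versus-non-strict inequality — the ceiling as written gives $d_\mathcal{M}(\mathbf{H}_G^{(k^*)})\le\epsilon$, and the strict bound is recovered either by noting that the ceiling is a strict overshoot whenever $\log(\epsilon/(\hat{L}_G\mathcal{D}))/\log\lambda$ is not an integer, or by running the computation with $\epsilon/2$ in place of $\epsilon$.
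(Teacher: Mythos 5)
Your proof matches the paper's essentially verbatim: both start from the bound $d_\mathcal{M}(\mathbf{H}_G^{(k)}) \le \hat{L}_G \lambda^k \mathcal{D}$ in Theorem~\ref{tm:infoloss}, divide by $\hat{L}_G\mathcal{D}$, take logarithms, flip the inequality on dividing by the negative $\log\lambda$, and read off $k^*$ as the ceiling. In fact you are slightly more careful than the paper: you explicitly flag the degenerate cases ($\mathcal{D}=0$, or $\epsilon \ge \hat{L}_G\mathcal{D}$ forcing $k^*\le 0$) and the boundary where $\log(\epsilon/(\hat{L}_G\mathcal{D}))/\log\lambda$ is an integer, in which case the ceiling only gives $d_\mathcal{M}(\mathbf{H}_G^{(k^*)})\le\epsilon$ rather than the strict inequality the corollary asserts — a small gap the paper silently glosses over.
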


\textbf{Remark}.
As $\mathcal{D}$ is constant with respect to  $\mathbf{X}$, we observe that the distance is upper-bounded by three factors: the second largest eigenvalue  $\lambda$ of $\widehat{{\mathbf{A}}}$, the Lipschitz constant $\hat{L}_G$ corresponding to the norm of the product of all $\mathbf{W}^{(i)}$, and the layer depth $k$.
Several conclusions can be drawn.

\textit{\textbf{First, there exists a smoothness-generalization dilemma}}.
Since $\lim_{k\rightarrow \infty} \lambda^k = 0 $,  $\hat{L}_G$ has to rise when $k$ increases to prevent $d_\mathcal{M}(\mathbf{H}_G^{(k)})$ from convergent to $0$.
This is evidenced by the upper bound of the Lipschitz constant continuing to increase as training progresses~\cite{lip2}.
However, a large $\hat{L}_G$ implies reduced generalization, leading to a significant performance gap between training and test accuracy~\cite{gene}.
Consequently, either oversmoothing or poor generalization will occur at large $k$.

\textit{\textbf{Second}, 
}
from \cref{cl:infoloss}, we see that for any given $\hat{L}_G$, there exists a $k$ such that the distance from the representations to the subspace 
is smaller than any arbitrarily small $\epsilon$.
Thus, extremely small distance with indistinguishable representations becomes inevitable for sufficiently large $k$, as $\hat{L}_G$ computing from weight matrices can not be infinitely large due to the finite computational precision.

In summary, although oversmoothing has been associated with generalization before~\cite{subspace}, this dilemma reveals a more intricate balance in an \textit{either-or} situation.
When the classic GCN attempts to counter oversmoothing and recover discriminative representations from the over-smoothed $\mathbf{A}^k\mathbf{X}$ by increasing the spectral norm of $\mathbf{W}^{(i)}$, the resulting larger Lipschitz constant inevitably worsens generalization.
Conversely, constraining the norm of  $\mathbf{W}^{(i) }$ to maintain a low Lipschitz constant and preserve generalization prevents the model from effectively reversing the over-smoothed $\mathbf{A}^k\mathbf{X}$, yielding indistinguishable node embeddings.
\textit{This interplay constitutes \textbf{the core of the smoothness–generalization dilemma}: efforts to improve one aspect inherently compromise the other}.

\subsection{How this Dilemma Hinders Performance across Varying Homophily}
\label{sec:dilemma}
Next, we bridge the smoothness-generalization dilmma with varying homophily to elucidate \textit{the intrinsic relationship among oversmoothing, generalization, and heterophily}.
In essence, graph learning requires adaptive capabilities in both smoothness and generalization for neighborhoods of varying homophily.
\cref{tab:sgd} summarizes these dilemma impacts.

\newcommand{\st}{{\color{red} \textbf{+}}\xspace}
\newcommand{\po}{{\color{blue} \textbf{--}}\xspace}
\newcommand{\til}{{\color{ForestGreen} $\boldsymbol{\sim}$}\xspace}
\begin{wraptable}{R}{0.56\textwidth}
    \centering
    \caption{{Dilemma Impacts}. S. and G. are short for smoothness and generalization,
    while \st, \po and \til denote {\color{red} strong}, {\color{blue} poor} and {\color{ForestGreen} adaptive} capability.
    $\bigcirc$ means inconsequential (when S. aligns with the homophily bias).
    }
    \label{tab:sgd}
    \resizebox{0.56\textwidth}{!}{
    \begin{tabular}{c|c|ccc|ccc}
    \toprule[1pt]
        \multicolumn{1}{c}{\cellcolor{Green} Homophily}& \multicolumn{1}{c|}{\cellcolor{Orange} Oversmoothing}& \multicolumn{3}{c|}{\textbf{Low Orders}} & \multicolumn{3}{c}{\textbf{High Orders}}\\
        \multicolumn{1}{c}{\cellcolor{Blue} Heterophily} & \multicolumn{1}{c|}{\cellcolor{LightCyan} Mixed}& $h_e$ &\textbf{S.} & \textbf{G.}&  $h_e$ &\textbf{S.} & \textbf{G.}\\
    \midrule[0.8pt]
        \multicolumn{2}{c|}{\textbf{Classic MP Capability}}  &  & \st & \po &  & \st & \po \\
    \midrule[0.8pt]
     \textbf{ Learning}  & \textbf{Homo}  & \cellcolor{Green} high &\cellcolor{Green}\st & \cellcolor{Green}$\bigcirc$ & \cellcolor{Orange} low/varying & \cellcolor{Orange}\po\xspace / \til&\cellcolor{Orange} \st\\
    \cmidrule[0.8pt]{2-8}
        \textbf{Requirements}  & \textbf{Hetero} & \cellcolor{Blue} low/varying & \cellcolor{Blue} \po\xspace / \til& \cellcolor{Blue} \st & \cellcolor{LightCyan} low/varying &\cellcolor{LightCyan} \po\xspace / \til &\cellcolor{LightCyan}\st \\
    \bottomrule[1pt]
    \end{tabular}
    }
\end{wraptable}\textit{In homophilic settings, the dilemma primarily affects high-order neighborhoods, whereas low-order ones are less impacted.}
This can be intuitively understood as smoothness and generalization aligning in low-order homophilic neighborhoods, which always favors pulling together the representations of same-label nodes within these hops.
However, smoothness begins to conflict with generalization in high orders of low or varying homophily, as bringing closer nodes of different labels in these neighborhoods is detrimental. 
This discrepancy in generalization is clearly exemplified in PMLP~\cite{generalizer}.

\begin{wrapfigure}{R}{0.56\textwidth}
    \centering
    \includegraphics[width=0.56\textwidth]{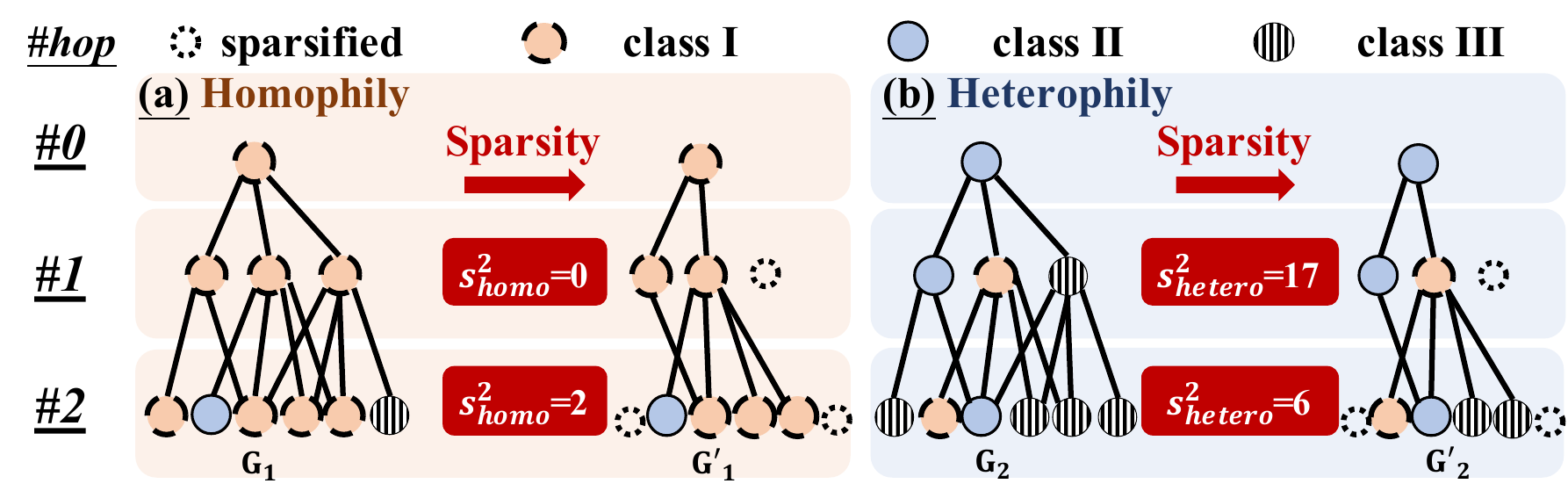}
    \caption{{Toy Example of the Sparsity Influence}.
    Three nodes at the \textit{same positions} are sparsified from the (a) homo- and (b) hetero-philic neighborhoods of the \textit{same structure}.
    Statistics of the neighborhood information and NCD shift variances $s^2$ are presented as:
    }
    \resizebox{0.56\textwidth}{!}{
            \setlength{\tabcolsep}{1.2pt}
            \begin{tabular}{c| ccc |c |ccc |c|ccc |c|ccc |c }
                \toprule[1pt]
                     & \multicolumn{3}{c}{\textbf{Neigbors}} & $\mathbf{G}_1$ \textbf{NCD} &\multicolumn{3}{c}{\textbf{Neigbors}} & $\mathbf{G}'_1$ \textbf{NCD} &\multicolumn{3}{c}{\textbf{Neigbors}} & $\mathbf{G}_2$ \textbf{NCD} &\multicolumn{3}{c}{\textbf{Neigbors}} & $\mathbf{G}'_2$ \textbf{NCD}\\
                \cmidrule{1-17}
                     \textbf{class} &  I & II & III& [I, II, III] &  I & II & III& [I, II, III] &  I & II & III&[I, II, III]&  I & II & III&[I, II, III]\\
                \midrule[0.8pt]
                     \textbf{hop 1}& 3&0&0 &[1,0,0] &2&0&0&[1,0,0] &1&1&1&[$\frac{1}{3}$,$\frac{1}{3}$,$\frac{1}{3}$]&1&1&0&[$\frac{1}{2}$,$\frac{1}{2}$,0]\\
                     \textbf{hop 2}& 8&1&1 &[0.8,0.1,0.1] &4&1&0&[0.8,0.2,0]&1&3&6&[0.1,0.3,0.6]&1&2&2&[0.2,0.4,0.4]\\
                \midrule[0.8pt]
                     \textbf{hop 1}& \multicolumn{8}{c|}{$s_{homo}^2=\|[1, 0, 0]-[1, 0, 0]\|^2 * 100=0$} &\multicolumn{8}{c}{$s_{hetero}^2=\|[\frac{1}{3}, \frac{1}{3}, \frac{1}{3}]-[\frac{1}{2}, \frac{1}{2}, 0]\|^2 * 100=17$}\\
                \midrule[0.8pt]
                     \textbf{hop 2}& \multicolumn{8}{c|}{$s_{homo}^2=\|[0.8, 0.1, 0.1]-[0.8, 0.2, 0]\|^2 * 100=2$}&\multicolumn{8}{c}{$s_{hetero}^2=\|[0.1, 0.3, 0.6]-[0.2, 0.4, 0.4]\|^2 * 100=6$}\\
                 \bottomrule[1pt]
            \end{tabular}
        }
    \label{fig:sparsity}
\end{wrapfigure}
\textit{In heterophilic settings, the dilemma exhibits negative effects across both low- and high-order neighborhoods.}
\textit{\textbf{First}}, the complex neighborhood class distribution~(NCD)~\citep{nece} in heterophilic neighborhoods makes it easy for noise or even sparsity to result in mismatched or incomplete NCDs for nodes of the same label, which requires strong generalization ability to mitigate.
A toy example in \cref{fig:sparsity} demonstrates that heterophilic neighborhoods suffer from larger NCD shifts caused by the same sparsity, as evidenced by larger distribution variances $s^2_{hetero}$ both in hop 1 and 2 compared to those $s^2_{homo}$ of homophilic neighborhoods.
\textit{\textbf{Second}}, there is a greater structural inconsistency between the training and test sets in heterophilic graphs compared to homophilic ones~\cite{disparity}, as heterophilic graphs exhibit a mixture of homophilic and heterophilic patterns, which also requires good generalization.

In summary, \textit{the \textbf{core insight} is that challenges are posed by the smoothness-generalization dilemma in both homophily and heterophily, resulting in the absence of universality across varying homophily}.

\section{Making Classic GNNs Strong Baselines: Inceptive Message Passing}

An \textbf{\textit{intuitive approach}} to addressing the dilemma is to (1) decouple \emph{smoothness} and \emph{generalization} from a rigid trade-off, endowing them with the capacity to adapt independently to varying homophily; and (2) preserve the embeddings of low/medium orders, acknowledging that oversmoothing is inevitable at sufficiently large hops.
To this end, we propose a unified message-passing architecture termed \underline{I}nceptive \underline{G}raph \underline{N}eural \underline{N}etworks (\textbf{\model}), which is designed to realize this adaptivity with \textbf{\textit{minimal cost}}.
Instead of introducing additional complex modules, \model can \textit{easily empower even the classic GNNs by addressing the dilemma through \textbf{three simple yet effective design principles}}.

\subsection{Inceptive GNN Framework~(\model)}

\textbf{Separative Neighborhood Transformation~(\SN)} avoids sharing or coupling transformation layers across neighborhoods:
$\mathbf{h}^{(k)}_v = f^{(k)}(\mathbf{x}_v)=\mathbf{x}_v\mathbf{W}^{(k)},$
where $f^{(k)}(\cdot)$ is the transformation for the $k$-th neighborhood.
The absence of \SN implies all $k$-hop neighborhood transformations either share the same parameters $\mathbf{W}_\theta$  or are cascade-coupled in a multiplicative manner, such as $\prod_i^k \mathbf{W}^{i}$ (see \cref{sec:comp}).
This design aims to capture the unique characteristics of each neighborhood, enabling personalized generalization capability with distinct Lipschitz constants for each neighborhood.

\textbf{Inceptive Neighborhood Aggregation~(\IN)} simultaneously embeds different receptive fields:
$\mathbf{m}^{(k)}_v = \text{\textbf{AGG}}^{(k)}\left(
\{ \mathbf{h}_u^{(k)} \mid u\in\mathcal{N}^{(k)}_v \}
\right),$
where $\text{\textbf{AGG}}^{({k})}(\cdot)$ represents the neighborhood aggregation function of the $k$-th hop. 
The simplest approach involves partitioning the $k$-th order rooted tree of neighborhoods into  $k$ distinct neighborhoods $\mathcal{N}_v^{(k)}=\mathcal{N}_v(\mathbf{A}^k)$ with $ \mathcal{N}^{(0)}_v=\{v\}$.
The inceptive nature of the architecture preserves the embedding of low orders and  prevents high-order neighborhood representations from being computed based on low-order ones, which avoids cascading the learning of different hops and propagating errors if one becomes corrupted. Moreover, it prevents the product-type amplification of the Lipschitz constant (\cref{tm:infoloss} and \ref{tm:ignn}), which would otherwise limit the generalization ability.
Notably, some \textit{dynamic message-passing methods}~\cite{n2, cognn} unconstrained by the fixed neighborhood structure $\mathbf{A}$ can be viewed as advanced variants of inceptive architectures with \textit{skip connections}~\cite{jknet,skip}.
However, as our goal is to enhance classical GNNs with minimal overhead rather than adopt complex dynamic aggregations, we do not employ them in \model.

\textbf{Neighborhood Relationship Learning~(\RN)} adds a neighborhood-wise relationship learning module to learn the correlations among neighborhoods:
$\mathbf{h}_v = \text{\text{\textbf{REL}}}\left(\{\mathbf{m}_v^{(k)}\mid 0\leq k \leq K\}\right),$
where \textbf{REL}$(\cdot)$ is the \underline{rel}ationship learning function of multiple neighborhoods.
The relationships among various neighborhoods represent a new characteristic in \model, extending the combination field from a single neighborhood of ego and neighboring nodes in COM($\cdot$) to multiple neighborhoods of various hops in REL$(\cdot)$.
Based on the learning mechanism of relationships, \model can be divided into three variants.


\begin{wraptable}{R}{0.6\textwidth}
    \centering
    \caption{Three \model variants with GCN AGG($\cdot$).}
    \label{tab:igs}
    \resizebox{0.6\textwidth}{!}{
    \begin{tabular}{c|c|c|c}
     \toprule
         & \textbf{\SN}\ - $\mathbf{h}_{v}^{(k)}$ & \textbf{\IN}\ - $\mathbf{m}_{v}^{(k)}$ & \textbf{\RN} \\
     \midrule
        \textbf{r-\model}  & \multirow{2}{*}{ \specialcell{\textbf{No \SN}. Coupled \\or shared $\mathbf{W}^{(k)}$.}}
                  & \multirow{2}{*}{ $\sum \sigma(\widehat{\mathbf{A}}_{v,u}^k\mathbf{h}_u^{(k-1)})$} 
                  & $\mathbf{h}_{v}^{(k)}=\sigma(\mathbf{m}_v^{(k)}\mathbf{W}^{({k})})+\mathbf{h}_v^{(k-1)}$\\
    \cmidrule{1-1}\cmidrule{4-4}
        \textbf{a-\model} &  & & $\mathbf{h}_{v}^{(k)}=\alpha_v^{(k)} \mathbf{m}_v^{(k)}+ (1-\alpha_v^{(k)}) \mathbf{h}_v^{(k-1)}$\\
    \cmidrule{1-4}
        \textbf{c-\model} & $\mathbf{x}_v\mathbf{W}^{(k)}$ & $\sum \sigma(\widehat{\mathbf{A}}_{v,u}^k\mathbf{h}_u^{(k)})$ & $\mathbf{h}_{v}=\sigma \left((||_{i=0}^k\sigma(\mathbf{m}_v^{(i)}))\mathbf{W} \right)$\\
    \bottomrule
    \end{tabular}
    }
\end{wraptable}A brief overview of the variants is presented in \cref{tab:igs} with a comparison in \cref{sec:comp}.
\textbf{\textit{The classic GCN AGG($\cdot$) is consistently used, and layers formed by these three principles can be further stacked}}.
Other AGG($\cdot$) can be applied, but as long as they can achieve GCN, the introduced advantages of \model always hold.
\Cref{tab:comp} and \ref{tab:iGNNs} illustrates how existing works falls into IGNN variants.

\textbf{Residual r-\model} variants leverage the residual connection~\cite{res} as: 
$\mathbf{h}_{v}^{(k)}=\sigma(\mathbf{m}_v^{(k)}\mathbf{W}^{({k})})+\mathbf{h}_v^{(k-1)},$
whose matrix format is $\mathbf{H}^{(k)}=\sigma(\widehat{\mathbf{A}}\mathbf{H}^{(k-1)} \mathbf{W}^{(k)})+ \mathbf{H}^{(k-1)}$.
It is easy to observe that the expansion of $\mathbf{H}^{(k)}$ covers all $\widehat{\mathbf{A}}^i, 0<i<k$ (see \cref{prf:residual}), which is an inceptive variant with \IN\&\RN designs.
Besides, some methods~\cite{ppnp,gcnii} adopt an initial residual connection, constructing connections to the initial representation $\mathbf{H}^{(0)}$ (see~\cref{sec: initial residual}).
\citet{classic} \textit{empirically} demonstrated that this variant equipped with dropout and batch normalization establishes a strong baseline, but the theoretical rationale remains unclear. Our work extends this understanding by explaining its effectiveness under varying homophily through the lens of the smoothness-generalization dilemma. We first prove its adaptive smoothness capability in \cref{tm:poly} and further expose its inherent generalization limitations via quantitative analysis in \cref{sec:quant}, thereby elucidating the necessity of dropout and batch normalization, which can improve generalization and  prevent feature collapse~\cite{luobeyond}.

\textbf{Attentive a-\model} variants leverage the attention mechanism to realize \textit{node-wise personalized} neighborhood relationship learning, defined as:
$\mathbf{h}_v^{(k)}
    =\alpha_{v}^{(k)} \mathbf{m}_v^{(k)}+ (1-\alpha^{(k)}_v) \mathbf{h}_v^{(k-1)},$
where $\alpha^{(k)}_v=g^{({k})}(\mathbf{m}_v^{(k)},\mathbf{h}_v^{(k-1)})$, and
$g^{({k})}(\cdot)$ is the mechanism function.
Several methods, such as DAGNN~\citep{dagnn}, GPRGNN~\citep{gpr}, ACMGCN~\citep{acm}, and OrderedGNN~\citep{OGNN}, employ different attention mechanisms yet unintentionally share the same \IN\&\RN design.

\textbf{Concatenative c-\model} variants concatenate multi-neighborhoods with a learnable transformation:
$\mathbf{h}_{v}=\sigma \left((||_{i=0}^k\sigma(\mathbf{m}_v^{(i)}))\mathbf{W} \right),$
where $||$ means concatenation.
A c-\model with GCN AGG($\cdot$) is $ \mathbf{H}_{IG,k}
=\sigma( (||_{i=0}^k\sigma(\widehat{\mathbf{A}}^i \mathbf{X} \mathbf{W}^{(i)}))\mathbf{W} )$,
$\mathbf{W}^{(i)} \in \mathbb{R}^{D\times F}$, and $\mathbf{W} \in \mathbb{R}^{kF\times F'}$.
Although simple, its power is strong, as it can achieve various relationships, such as 
\textit{general layer-wise neighborhood mixing},
\textit{personalized}
and \textit{generalized PageRank}
as in \cref{prop:sign}.
Notably, when SN is incorporated in c-\model, the \textbf{REL($\cdot$)} becomes optional, as the SN and NR transformations can be merged.

\subsection{Theoretical and Empirical Analysis of Dilemma Alleviation}
\label{sec:ana-theo}

\subsubsection{\IN\&\RN: Adaptive Smoothness Capabilities}
\begin{theorem}
\label{tm:poly}
    \textbf{Inceptive neighborhood relationship learning (\IN\&\RN) can approximate arbitrary graph filters for adaptive smoothness capabilities} extending beyond simple low- or high-pass ones, expressing the $K$ order polynimial graph filter~($\sum_{i=0}^K \theta_i \widehat{\mathbf{L}}^i$) with arbitrary coefficients $\theta_i$
    , including \textbf{c-\model} (\SN, \IN and \RN), as well as \textbf{r-\model} and \textbf{a-\model} (\IN\&\RN).
\end{theorem}

\begin{proposition}
\label{prop:sign}
    \model-s can achieve (1) SIGN, (2) APPNP with personalized PageRank, (3) MixHop with general layerwise neighborhood mixing, and (4) GPRGNN with generalized PageRank.
\end{proposition}

\textbf{Remark}.
\citet{sgc} found that the vanilla GCN essentially simulates a K-order polynomial filter~\cite{poly} with\textit{ predetermined coefficients}, limited to a low-pass filter.
However, many works has highlighted the significance of high-frequency signals for heterophily~\cite{acm,fagcn}.
\textit{The inceptive neighborhood relationship learning module (\IN+\RN) benefits \model with the expressive power beyond simple low-pass or high-pass filters} as in~\cref{tm:poly}, achieving the $K$-order polynomial graph filter with \textit{arbitrary coefficients,} which has been proven able to approximate any graph filter~\citep{aprox}.
Consequently, many existing methods are just simplified cases of \model as in \cref{prop:sign}.

\subsubsection{\SN: Improved Hop-wise and  Overall Generalization}
\begin{theorem}
    \label{tm:ignn}
    Let the representation of c-\model incorporating the \SN principle be denoted as 
    $\mathbf{H}_{IG,k} = \sigma( (||_{i=0}^k\sigma(\widehat{\mathbf{A}}^i \mathbf{X} \mathbf{W}^{(i)}))\mathbf{W} )$, 
    and the Lipschitz constant of it be denoted as $\hat{L}_{IG}$.
    Given 
    $d_\mathcal{M}(\mathbf{X})=\mathcal{D}$ and $\mathbf{W}=\left[\begin{smallmatrix}\mathbf{W}_0\\\cdots\\\mathbf{W}_k\end{smallmatrix}\right]$,
    then the distance from $\mathbf{H}_{IG,k}$ to  $\mathcal{M}$ satisfies: 
    \begin{equation}
        d_\mathcal{M}(\mathbf{H}_{IG,k})\le  \left\|\sum_{i=0}^k \lambda^i \mathbf{W}^{(i)} \mathbf{W}_i \right\|_2 \mathcal{D} ,
    \end{equation}
    where $\lambda<1$ is the second largest eigenvalue of $\widehat{\mathbf{A}}$, and $\hat{L}_{IG}=\| \sum_{i=0}^k \mathbf{W}^{(i)} \mathbf{W}_i  \|_2$.
\end{theorem}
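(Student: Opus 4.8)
\textit{Plan of proof.} The strategy is to mirror the argument behind Theorem~\ref{tm:infoloss}, replacing the single power $\widehat{\mathbf{A}}^{k}$ of the cascade by the hop-wise sum defining \model. Write $\mathbf{P}^{\perp}:=\mathbf{I}_N-\mathbf{E}\mathbf{E}^{\top}$ for the orthogonal projector onto the complement of $\mathcal{M}$, so that $d_{\mathcal{M}}(\mathbf{Z})=\|\mathbf{P}^{\perp}\mathbf{Z}\|_{\mathrm F}$ for every $\mathbf{Z}$. I will use three facts from Section~\ref{sec:smooth} and \citet{subspace}: (i) since $\mathcal{M}$ is exactly the eigenspace of the top eigenvalue $\lambda_N=1$ of $\widehat{\mathbf{A}}$, the projector $\mathbf{P}^{\perp}$ commutes with every $\widehat{\mathbf{A}}^{i}$ and $\|\widehat{\mathbf{A}}^{i}\mathbf{P}^{\perp}\|_{2}\le\lambda^{i}$; (ii) $\sigma=\mathrm{ReLU}$ is non-expansive for $d_{\mathcal{M}}$, i.e. $d_{\mathcal{M}}(\sigma(\mathbf{Z}))\le d_{\mathcal{M}}(\mathbf{Z})$; and (iii) by the block form $\mathbf{W}=[\mathbf{W}_0^{\top}\,\cdots\,\mathbf{W}_k^{\top}]^{\top}$, the concatenation step collapses to $(\,||_{i=0}^{k}\mathbf{Z}_i)\mathbf{W}=\sum_{i=0}^{k}\mathbf{Z}_i\mathbf{W}_i$.

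First, peel the outer non-linearity with (ii) and, dropping the inner non-linearities for the tight constant (consistent with the linearized derivations used elsewhere in the paper, e.g. for the initial-residual expansion), reduce the claim to bounding $d_{\mathcal{M}}\big(\sum_{i}\widehat{\mathbf{A}}^{i}\mathbf{X}\mathbf{W}^{(i)}\mathbf{W}_i\big)$. Push $\mathbf{P}^{\perp}$ through via (i) to get $\sum_{i}\widehat{\mathbf{A}}^{i}(\mathbf{P}^{\perp}\mathbf{X})\mathbf{W}^{(i)}\mathbf{W}_i$, and expand $\mathbf{P}^{\perp}\mathbf{X}=\sum_{n:\,|\lambda_n|\le\lambda}\mathbf{e}_n\mathbf{y}_n^{\top}$ in the eigenbasis $\{\mathbf{e}_n\}$ of $\widehat{\mathbf{A}}$, where $\mathbf{y}_n^{\top}=\mathbf{e}_n^{\top}\mathbf{X}$ and $\sum_n\|\mathbf{y}_n\|_2^{2}=\|\mathbf{P}^{\perp}\mathbf{X}\|_{\mathrm F}^{2}=\mathcal{D}^{2}$. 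Since $\widehat{\mathbf{A}}^{i}\mathbf{e}_n=\lambda_n^{i}\mathbf{e}_n$, this equals $\sum_n\mathbf{e}_n\,\mathbf{y}_n^{\top}\mathbf{M}(\lambda_n)$ with $\mathbf{M}(t):=\sum_{i=0}^{k}t^{i}\mathbf{W}^{(i)}\mathbf{W}_i$; orthonormality of the $\mathbf{e}_n$ together with $\|\mathbf{v}^{\top}\mathbf{B}\|_2\le\|\mathbf{v}\|_2\|\mathbf{B}\|_2$ then gives $\|\mathbf{P}^{\perp}\mathbf{H}_{IG,k}\|_{\mathrm F}^{2}=\sum_n\|\mathbf{y}_n^{\top}\mathbf{M}(\lambda_n)\|_2^{2}\le\big(\max_n\|\mathbf{M}(\lambda_n)\|_2^{2}\big)\mathcal{D}^{2}$. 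Bounding $\|\mathbf{M}(\lambda_n)\|_2\le\|\mathbf{M}(\lambda)\|_2$ and taking square roots finishes the distance bound; running the identical computation with the top eigenvalue $1$ in place of $\lambda$ (so that $\mathbf{M}(1)=\sum_i\mathbf{W}^{(i)}\mathbf{W}_i$) yields the Lipschitz constant $\hat L_{IG}$.

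The step I expect to be the genuine obstacle is the collapse $\|\mathbf{M}(\lambda_n)\|_2\le\|\mathbf{M}(\lambda)\|_2$: the spectral norm of the matrix polynomial $t\mapsto\mathbf{M}(t)$ need not be monotone in $t$, and $\widehat{\mathbf{A}}$ may carry negative eigenvalues, so $\lambda_n$ ranges over $[-\lambda,\lambda]$ rather than $[0,\lambda]$. The cleanest way through is to exploit positive semidefiniteness of $\widehat{\mathbf{A}}$ (ensured when the self-loops dominate, so $0\le\lambda_n\le\lambda$) together with a sign argument on the $\mathbf{W}^{(i)}\mathbf{W}_i$; alternatively one states the bound with $\sup_{|t|\le\lambda}\|\mathbf{M}(t)\|_2$ on the right, or settles for the weaker but assumption-free triangle-inequality form $\sum_i\lambda^{i}\|\mathbf{W}^{(i)}\|_2\|\mathbf{W}_i\|_2\,\mathcal{D}$ (which has the side benefit of retaining the inner non-linearities through (ii)). Regardless of which variant one commits to, the conceptual payoff is the same: the $\lambda^{i}$ weights make $\mathbf{M}(\lambda)$ stay bounded as $k\to\infty$ by geometric decay, so — unlike the cascade bound $\hat L_G\lambda^{k}$ of Theorem~\ref{tm:infoloss}, in which $\hat L_G=\|\prod_i\mathbf{W}^{(i)}\|_2$ must grow to fight oversmoothing — \model never needs a growing Lipschitz constant, which is precisely the resolution of the dilemma.
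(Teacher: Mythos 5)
Your proposal follows essentially the same route as the paper's proof: peel the outer $\sigma$ with Lemma~\ref{lm:sigma}, drop the inner $\sigma$s, rewrite the concatenation $(||_{i=0}^k \mathbf{Z}_i)\mathbf{W} = \sum_{i=0}^k \mathbf{Z}_i \mathbf{W}_i$ via the block structure of $\mathbf{W}$, expand $\mathbf{X}$ in the eigenbasis of $\widehat{\mathbf{A}}$, and arrive at the intermediate quantity $\sum_{m=M+1}^N \big\|\sum_{i=0}^k \lambda_m^i (\mathbf{W}^{(i)}\mathbf{W}_i)^\top \boldsymbol{\omega}_m\big\|_2^2$ --- the same expression the paper reaches. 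Your notation differs (you write it with the projector $\mathbf{P}^{\perp}$ and a matrix polynomial $\mathbf{M}(t)$ where the paper works directly with the $\boldsymbol{\omega}_m$ components), but the content of the argument is identical up to that point.

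The step you flagged as ``the genuine obstacle'' --- passing from $\|\mathbf{M}(\lambda_m)\|_2$ to $\|\mathbf{M}(\lambda)\|_2$ --- is indeed a real gap, and the paper's own proof commits exactly the unjustified substitution you anticipated: it replaces $\lambda_m^i$ by $\lambda^i$ \emph{term by term inside the matrix sum}, asserting $\big\|\sum_i \lambda_m^i (\mathbf{W}^{(i)}\mathbf{W}_i)^\top \boldsymbol{\omega}_m\big\|_2 \leqslant \big\|\sum_i \lambda^i (\mathbf{W}^{(i)}\mathbf{W}_i)^\top \boldsymbol{\omega}_m\big\|_2$ with no justification. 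Unlike the proof of Theorem~\ref{tm:infoloss}, where a single scalar $\lambda_m^k$ can be pulled out of the norm and bounded by $\lambda^k$, here each eigenvalue enters the polynomial with a different power and the terms can partially cancel, so the map $t \mapsto \|\mathbf{M}(t)^\top\boldsymbol{\omega}_m\|_2$ is not monotone on $[0,\lambda]$ --- and the paper nowhere assumes $\widehat{\mathbf{A}}$ is positive semidefinite, so $\lambda_m$ may be negative, compounding the problem. A correct version of the bound would be $\sup_{|t|\leq\lambda}\|\mathbf{M}(t)\|_2\,\mathcal{D}$, or the assumption-free triangle-inequality estimate $\sum_i \lambda^i\|\mathbf{W}^{(i)}\mathbf{W}_i\|_2\,\mathcal{D}$, exactly as you noted; the qualitative message (summation rather than cascade multiplication of weight norms) survives either fix, but the theorem as stated overreaches.

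For the Lipschitz constant the paper takes a different route from yours: rather than evaluating the same matrix polynomial at the top eigenvalue, it invokes Theorem 3.1 of \citet{lip2} (reproduced as Theorem~\ref{tm:genelip}), which concerns a recursive convolutional layer $\mathbf{H}^{(k)}=\sigma(\mathbf{H}^{(k-1)}\mathbf{W}_0^{(k)} + \mathbf{M}\mathbf{H}^{(k-1)}\mathbf{W}_1^{(k)})$ with a single propagation operator $\mathbf{M}$ and non-negative weights, and whose conclusion is a product $\varphi(\mu)=\|\prod_k(\mathbf{W}_0^{(k)}+\mu\mathbf{W}_1^{(k)})\|$. \model's concatenative one-shot architecture is not of that recursive form, so the invocation is at best heuristic. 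Your proposal of plugging $t=1$ into $\mathbf{M}(t)$ is more internally consistent with the distance derivation and lands on the same expression $\|\sum_i\mathbf{W}^{(i)}\mathbf{W}_i\|_2$, though it inherits the same monotonicity caveat: one only gets $\max_n\|\mathbf{M}(\lambda_n)\|_2$, and equating that to $\|\mathbf{M}(1)\|_2$ again assumes what needs to be proved.
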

\textbf{Remark}.
\cref{tm:ignn} demonstrates the mitigation of the dilemma from two perspectives.
\textit{From the \textbf{local perspective}, each $i$-th hop has a distinct Lipschitz constant with isolated transformations ($\mathbf{W}^{(i)} \mathbf{W}_i$), allowing for a separate handle of its own generalization expectations.}
High-order homophilic neighborhoods with extremely small $\lambda^i$ demand large Lipschitz constants to mitigate massive information loss from oversmoothing, while low-order or heterophilic ones can enjoy small Lipschitz constants to guarantee good generalization.
\textit{From the \textbf{global perspective}, the entire network's Lipschitz constant is effectively shrunk from cascade multiplication to summation, avoiding the extreme decline in overall generalization ability}.
The overall Lipschitz constant is a summation of individual multiplication of each layer transformation ($\hat{L}_{IG}=\| \sum_{i=0}^k \mathbf{W}^{(i)} \mathbf{W}_i  \|_2$) in c-\model, whose increase in magnitude will be much smaller than that of cascade multiplication $\hat{L}_G=\|\prod_{i=0}^k \mathbf{W}^{(i)}\|_2$ in the traditional framework, which will grow exponentially as the layer increases since each high-order neighborhoods suffering from oversmoothing all demand large $\mathbf{W}^{(i)}$.

\subsubsection{Quantitative Analysis on Smoothness-Generalization Delimma}
\label{sec:quant}
We conducted a quantitative study of the dilemma using three GNNs on the Cora and Squiirel dataset: (1) vanilla GCN, (2) r-IGNN (\textbf{\IN} and \textbf{\RN}), and (3) c-IGNN (\textbf{\IN}, \textbf{\RN}, and \textbf{\SN}).
The trends of $d_\mathcal{M}(\mathbf{H}^{(k)})$ and Lipschitz constant $\hat{L}$, computed following \citet{qua}, are presented in \cref{fig:empirical}. 

\textbf{First}, as $k$ increases in vanilla GCN, $d_\mathcal{M}(\mathbf{H}^{(k)})$ initially decreases (indicating increased smoothness) before rising again due to strong supervision from the classifier.
In contrast, $\hat{L}$ follows an inverse pattern.
\textit{This behavior aligns with the smoothness–generalization dilemma.}
\textbf{Second},while r-IGNN alleviates oversmoothing, as evidenced by the increased $d_\mathcal{M}(\mathbf{H}^{(k)})$,  \textit{it exhibits a steadily increasing $\hat{L}$, suggesting degraded generalization}.
\textbf{Finally}, c-IGNN, which integrates all three principles, demonstrates stable and moderate trends in both $d_\mathcal{M}(\mathbf{H}^{(k)})$ and $\hat{L}$, indicating \textit{its ability to preserve generalization while avoiding excessive smoothness.}
See \cref{sec:qua} for more details.

\section{Experiments}
\label{sec:exp}
Research questions are: 
\textbf{RQ1}: How does \model perform compared to SOTA methods? \textbf{RQ2}: What are the contributions of the three principles? \textbf{RQ3}: How is the dilemma resolved across various hops?

\begin{figure}[t]
    \centering
    \includegraphics[width=.3\textwidth]{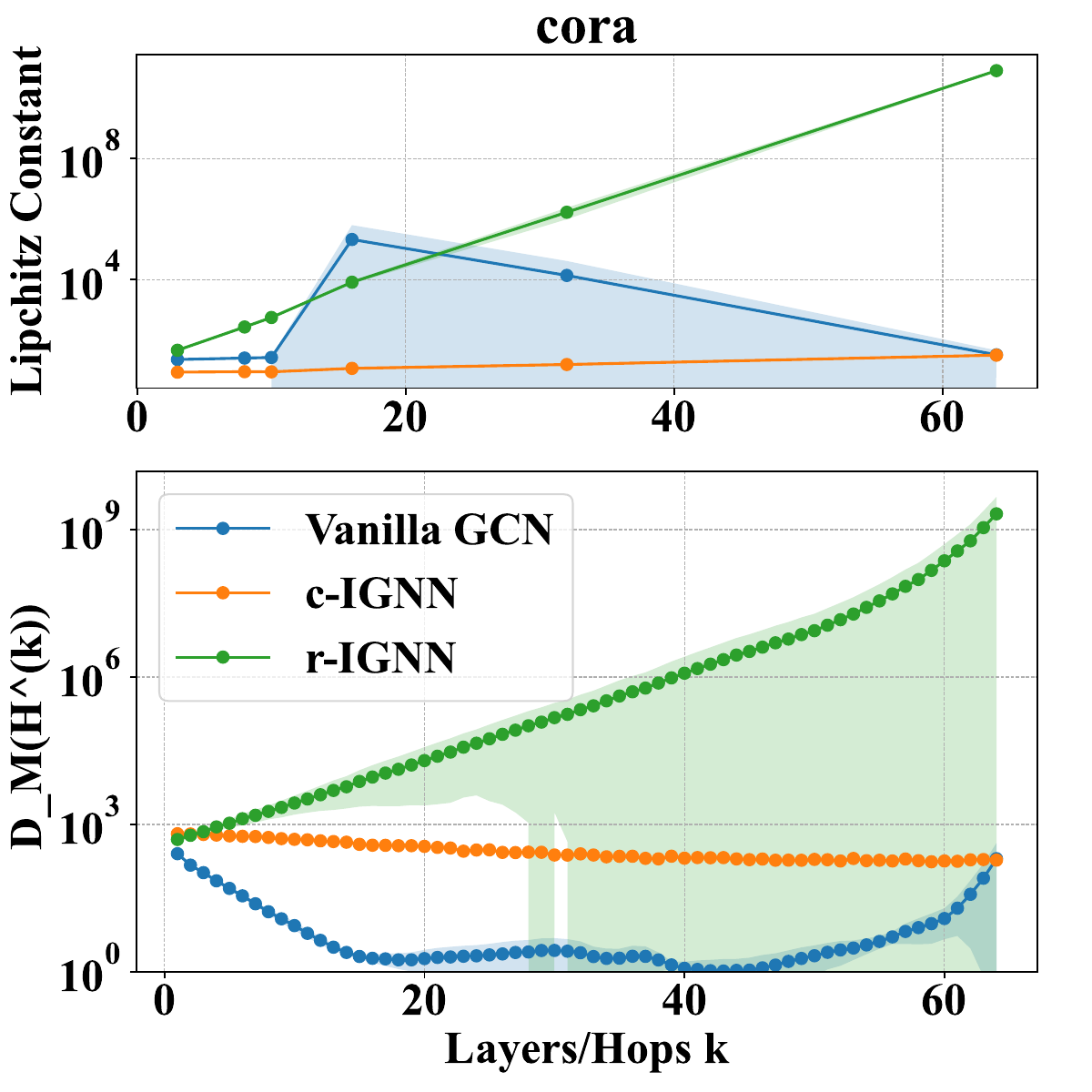}
    \includegraphics[width=.3\textwidth]{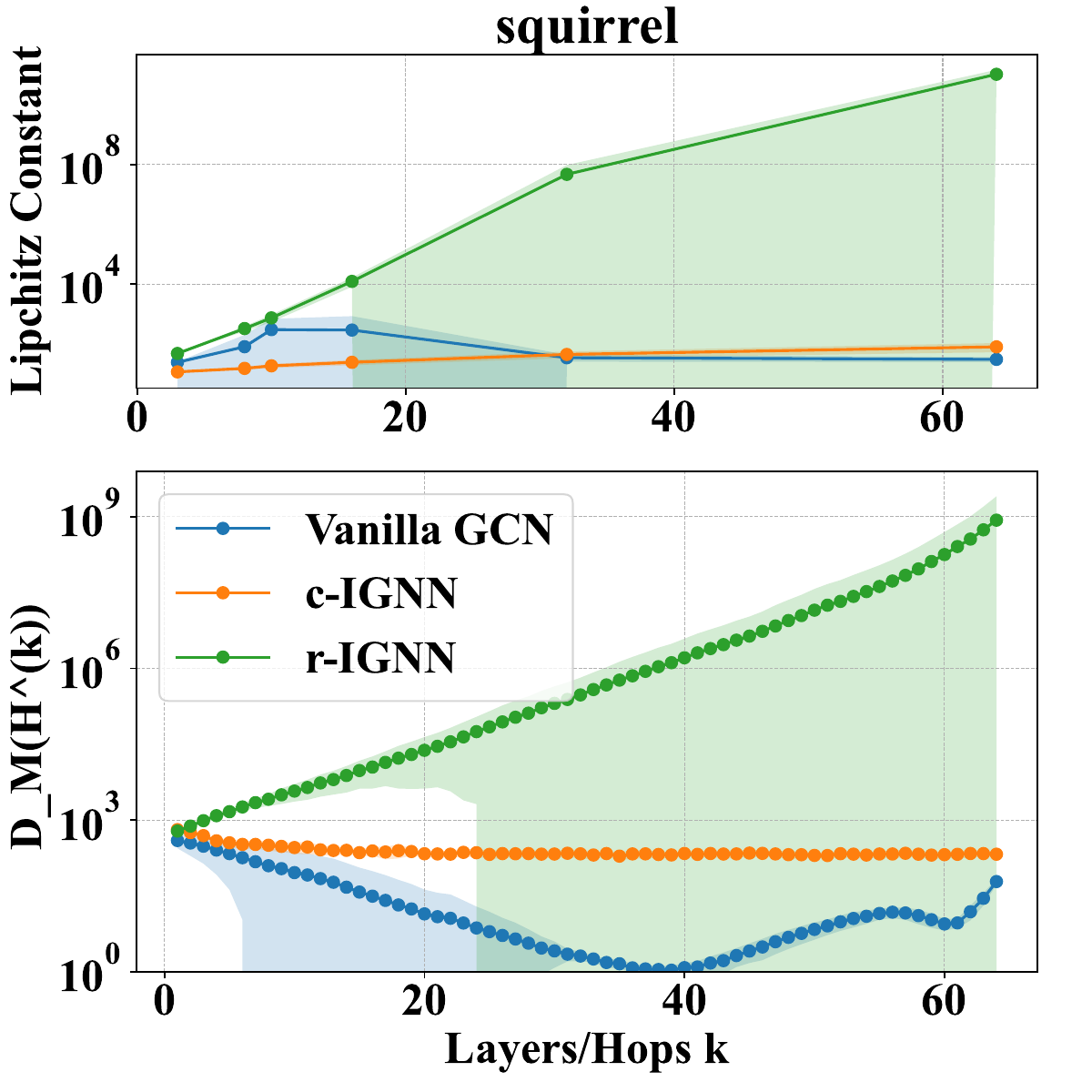}
    \caption{Quantitative Analysis on the Cora (Homophily) and Squirrel (Heterophily) Datasets.}
    \label{fig:empirical}
\end{figure}
\subsection{Datasets, Baselines and Settings}

\textbf{Datasets}: Following recent works \cite{handbook}, we select 13 representative datasets of various sizes, excluding those too small or class-imbalanced \cite{cmgnn}:
\textit{(i) Heterophily}: Roman-empire, BlogCatalog, Flickr, Actor, Squirrel-filtered, Chameleon-filtered, Amazon-ratings, Pokec;
\textit{(ii) Homophily}: PubMed, Photo, wikics, ogbn-arxiv, ogbn-products.
The statistics are in~\cref{tab:overallperformance} and~\ref{tab:large}.

\textbf{Baselines}: We selected 30 representative baselines, as shown in~\cref{tab:baselines}.
These models are categorized into four types: graph-agnostic models, homophilic GNNs,  heterophilic GNNs, and graph transformers.
GNNs are further divided into Non-inceptive and Inceptive ones.

\textbf{Settings}: We randomly construct 10 splits with proportions of 48\%/32\%/20\% for training/validation/testing, which is guided by our theoretical emphasis on generalization.
Prior work~\cite{disparity} has shown that different splitting strategies can lead to substantial variations in structural distributions, thereby influencing generalization behavior.
To mitigate this, we adopt a unified split scheme~\cite{geom,OGNN}, reducing variance across datasets that may arise from the heterogeneous splitting policies used in earlier studies.
For the large-size datasets~(ogbn-arxiv, Pokec, and ogbn-products), we use the public splits.
The network is optimized using the Adam~\citep{adam}, with
hyperparameter settings provided in \cref{sec:hyper}.
Our code with best hyperparamter settings and search scripts are available at \href{https://github.com/galogm/IGNN}{https://github.com/galogm/IGNN}.
Additional results and code for \textit{public splits} are also provided in the repository.
We report the mean and standard deviation of classification accuracy across splits,
\textit{with complexity, paramter count and runtime analysis and comparison} documented in \cref{sec:complex}.

\subsection{Performance Analysis~(RQ1)}
\begin{table*}[t]
    \centering
    \renewcommand{\arraystretch}{.9}
    \caption{Overall Performance of Node Classification.
    The best results are in \colorbox{Gray}{\makebox(12,4){\textbf{bold}}}, and the second-best results are \underline{underlined}.
    A.R is the average of all ranks across datasets.
    OOM means out of memory.
    }
    \resizebox{\textwidth}{!}{
    \begin{tabular}{c|c|c|rrrrrrr|rrr|r}
    \toprule[1pt]
         
        &&\textbf{Dataset} & \textbf{Actor} & \textbf{Blog} & \textbf{Flickr} & \textbf{Roman-E} & \textbf{Squirrel-f} & \textbf{Chame-f} & \textbf{Amazon-R} & \textbf{Pubmed} & \textbf{Photo} & \textbf{Wikics} &  \multirow{6}{*}{\textbf{A.R.}} \\ 

    \cmidrule[1pt]{1-13}
        
        &&\textit{$h_e$} & \textit{0.2163} & \textit{0.4011} & \textit{0.2386} & \textit{0.0469} & \textit{0.2072}&\textit{0.2361} &  \textit{0.3804} &  \textit{0.8024} & \textit{0.8272} & \textit{0.6543}&  \\ 
        &&\textit{\#Nodes} & \textit{7,600} & \textit{5,196} & \textit{7,575} & \textit{22,662}& \textit{2,223} & \textit{890} & \textit{24,492} & \textit{19,717} &\textit{ 7,650} & \textit{11,701} &  \\ 
        &&\textit{\#Edges} & \textit{33,544} &  \textit{171,743} &  \textit{239,738} &  \textit{32,927} & \textit{46,998} & \textit{8,854} & \textit{ 93,050} & \textit{44,338} & \textit{238,162} &\textit{431,206} &  \\  
        &&\textit{\#Feats} &  \textit{931} & \textit{8,189} &  \textit{12,047} & \textit{300} & \textit{2,089} & \textit{2,325} &  \textit{300} & \textit{500} & \textit{745} & \textit{300} &  \\ 

    \midrule[1pt]
    
        &&\textbf{MLP} & 34.69{\scriptsize $\pm$0.71} & 93.08{\scriptsize $\pm$0.63} & 89.41{\scriptsize $\pm$0.73} & 62.12{\scriptsize $\pm$1.79} & 34.00{\scriptsize $\pm$2.44} & 35.00{\scriptsize $\pm$3.29} & 42.25{\scriptsize $\pm$0.73} & 87.68{\scriptsize $\pm$0.51} & 86.73{\scriptsize $\pm$2.20} & 73.51{\scriptsize $\pm$1.18} & 29.5 \\
    \cmidrule[1pt]{1-14}
    
        \multirowcell{11}{{\rotatebox[origin=c]{90}{\textbf{Homophilic}}}}&\multirowcell{4}{{\rotatebox[origin=c]{90}{\textbf{Non.}}}}
        &\textbf{SGC} & 29.46{\scriptsize $\pm$0.96} & 72.85{\scriptsize $\pm$1.15} & 59.02{\scriptsize $\pm$1.48} & 42.90{\scriptsize $\pm$0.50} & 39.75{\scriptsize $\pm$1.85} & 42.42{\scriptsize $\pm$3.28} & 41.32{\scriptsize $\pm$0.80} & 87.14{\scriptsize $\pm$0.57} & 92.38{\scriptsize $\pm$0.49} & 77.63{\scriptsize $\pm$0.88} & 27.2 \\
        && \textbf{GCN} & 30.82{\scriptsize $\pm$1.41} & 77.28{\scriptsize $\pm$1.43} & 69.06{\scriptsize $\pm$1.70} & 36.23{\scriptsize $\pm$0.57} & 37.06{\scriptsize $\pm$1.42} & 41.46{\scriptsize $\pm$3.42} & 44.96{\scriptsize $\pm$0.40} & 87.70{\scriptsize $\pm$0.58} & 94.88{\scriptsize $\pm$2.08} & 78.59{\scriptsize $\pm$1.07} & 26.7 \\
        && \textbf{GAT} & 30.94{\scriptsize $\pm$0.95}  & 85.36{\scriptsize $\pm$1.37} & 57.87{\scriptsize $\pm$2.22} & 62.31{\scriptsize $\pm$0.93} & 34.22{\scriptsize $\pm$1.41} & 40.69{\scriptsize $\pm$3.20} & 47.41{\scriptsize $\pm$0.80} & 87.64{\scriptsize $\pm$0.54} & 94.72{\scriptsize $\pm$0.52} & 76.92{\scriptsize $\pm$0.81} & 28.0 \\
        && \textbf{GraphSAGE} & 34.52{\scriptsize $\pm$0.64} & 95.73{\scriptsize $\pm$0.53} & 91.74{\scriptsize $\pm$0.58} & 66.39{\scriptsize $\pm$2.16} & 34.83{\scriptsize $\pm$2.24} & 41.24{\scriptsize $\pm$1.65} & 46.71{\scriptsize $\pm$2.83} & 88.71{\scriptsize $\pm$0.65} & 94.52{\scriptsize $\pm$1.27} & 80.85{\scriptsize $\pm$1.00} & 23.2 \\

    \cmidrule{2-14}
        &\multirowcell{9}{\cellcolor{Gray}{\rotatebox[origin=c]{90}{\textbf{\quad\quad\quad\ Incep.\quad\quad\quad}}}}
        &\textbf{APPNP} & 35.09{\scriptsize $\pm$0.79} & 96.13{\scriptsize $\pm$0.58} & 91.21{\scriptsize $\pm$0.52} & 71.76{\scriptsize $\pm$0.34} & 34.18{\scriptsize $\pm$1.68} & 41.12{\scriptsize $\pm$3.25} & 47.72{\scriptsize $\pm$0.54} & 87.97{\scriptsize $\pm$0.62} & 95.05{\scriptsize $\pm$0.43} & 83.04{\scriptsize $\pm$0.94} & 21.9 \\
        &&\textbf{JKNet-GCN} & 30.49{\scriptsize $\pm$1.71} & 84.25{\scriptsize $\pm$0.71} & 71.72{\scriptsize $\pm$1.47} & 69.61{\scriptsize $\pm$0.42} & 40.11{\scriptsize $\pm$2.54} & 43.31{\scriptsize $\pm$3.12} & 48.15{\scriptsize $\pm$0.93} & 87.41{\scriptsize $\pm$0.38} & 94.39{\scriptsize $\pm$0.40} & 83.80{\scriptsize $\pm$0.65} & 23.0 \\
        &&\textbf{IncepGCN} & 35.69{\scriptsize $\pm$0.75} & 96.67{\scriptsize $\pm$0.48} & 90.42{\scriptsize $\pm$0.71} & 80.97{\scriptsize $\pm$0.49} & 38.27{\scriptsize $\pm$1.36} & 43.31{\scriptsize $\pm$2.18} & 52.72{\scriptsize $\pm$0.80} & 89.32{\scriptsize $\pm$0.47} & 95.66{\scriptsize $\pm$0.40} & 85.22{\scriptsize $\pm$0.48} & 12.0 \\
        &&\textbf{SIGN} & 36.76{\scriptsize $\pm$1.00} & 96.06{\scriptsize $\pm$0.68} & 91.81{\scriptsize $\pm$0.58} & 81.56{\scriptsize $\pm$0.57} & 42.13{\scriptsize $\pm$1.99} & 44.66{\scriptsize $\pm$3.46} & 52.47{\scriptsize $\pm$0.95} & \underline{90.29{\scriptsize $\pm$0.50}} & 95.53{\scriptsize $\pm$0.43} & 85.59{\scriptsize $\pm$0.79} & 7.7 \\
        &&\textbf{MixHop} & 36.82{\scriptsize $\pm$0.98} & 96.05{\scriptsize $\pm$0.48} & 89.78{\scriptsize $\pm$0.63} & 79.39{\scriptsize $\pm$0.40} & 41.35{\scriptsize $\pm$1.04} & 44.61{\scriptsize $\pm$3.16} & 47.91{\scriptsize $\pm$0.53} & 89.40{\scriptsize $\pm$0.37} & 94.91{\scriptsize $\pm$0.45} & 83.15{\scriptsize $\pm$0.96} & 15.8 \\
        &&\textbf{FAGCN} & 35.98{\scriptsize$\pm$1.34} & 96.67{\scriptsize $\pm$0.35} & 92.74{\scriptsize $\pm$0.79} & 75.65{\scriptsize $\pm$1.01} & 40.83{\scriptsize $\pm$3.08} & 42.70{\scriptsize $\pm$3.33} & 50.14{\scriptsize $\pm$0.76} & 90.24{\scriptsize $\pm$0.51} & 95.31{\scriptsize $\pm$0.45} & 85.02{\scriptsize $\pm$0.51} & 10.6 \\
        &&\textbf{$\omega$GAT} & 34.66{\scriptsize $\pm$0.97} & 94.95{\scriptsize $\pm$0.61} & 90.20{\scriptsize $\pm$1.13} & 80.98{\scriptsize $\pm$1.00} & 34.07{\scriptsize $\pm$2.16} & 41.07{\scriptsize $\pm$4.23} & 48.81{\scriptsize $\pm$0.92} & 89.58{\scriptsize $\pm$0.50} & 95.19{\scriptsize $\pm$0.47} & 85.17{\scriptsize $\pm$0.83} & 19.1 \\
        &&\textbf{DAGNN} & 35.04{\scriptsize $\pm$1.03} & 96.73{\scriptsize $\pm$0.61} & 92.18{\scriptsize $\pm$0.73} & 73.94{\scriptsize $\pm$0.45} & 35.62{\scriptsize $\pm$1.48} & 40.96{\scriptsize $\pm$2.91} & 50.44{\scriptsize $\pm$0.52} & 89.76{\scriptsize $\pm$0.55} & 95.70{\scriptsize $\pm$0.40} & 85.07{\scriptsize $\pm$0.73} & 14.2 \\
        &&\textbf{GCNII} & 35.69{\scriptsize $\pm$1.08} & 96.25{\scriptsize $\pm$0.61} & 91.36{\scriptsize $\pm$0.68} & 80.55{\scriptsize $\pm$0.82} & 38.43{\scriptsize $\pm$2.10} & 42.13{\scriptsize $\pm$2.04} & 47.65{\scriptsize $\pm$0.48} & 90.00{\scriptsize $\pm$0.46} & 95.54{\scriptsize $\pm$0.34} & 85.15{\scriptsize $\pm$0.56} & 13.7 \\
    \cmidrule[1pt]{1-14}

        \multirowcell{11}{{\rotatebox[origin=c]{90}{\textbf{Heterophilic}}}}&\multirow{5}{*}{{ \rotatebox[origin=c]{90}{\textbf{Non.}}}}
        &\textbf{H2GCN} & 32.74{\scriptsize $\pm$1.23} & 96.32{\scriptsize $\pm$0.62} & 91.33{\scriptsize $\pm$0.59} & 68.70{\scriptsize $\pm$1.66} & 33.89{\scriptsize $\pm$1.01} & 38.09{\scriptsize $\pm$2.63} & 36.65{\scriptsize $\pm$0.73} & 89.50{\scriptsize $\pm$0.43} & 91.56{\scriptsize $\pm$1.49} & 74.76{\scriptsize $\pm$3.39} & 25.5 \\
        &&\textbf{GBKGNN} & 35.74{\scriptsize $\pm$4.46} & OOM & OOM & 66.10{\scriptsize $\pm$4.61} & 34.58{\scriptsize $\pm$1.63} & 41.52{\scriptsize $\pm$2.36} & 41.00{\scriptsize $\pm$1.62} & 88.66{\scriptsize $\pm$0.43} & 93.39{\scriptsize $\pm$2.00} & 81.85{\scriptsize $\pm$1.83} & 26.7 \\
        &&\textbf{GGCN} & 35.72{\scriptsize $\pm$1.48} & 96.09{\scriptsize $\pm$0.55} & 90.17{\scriptsize $\pm$0.76} & OOM & 36.04{\scriptsize $\pm$2.61} & 38.54{\scriptsize $\pm$3.99} & OOM & 89.19{\scriptsize $\pm$0.43} & 95.32{\scriptsize $\pm$0.27} & 83.67{\scriptsize $\pm$0.75} & 23.1 \\
        &&\textbf{GloGNN} & 35.82{\scriptsize $\pm$1.27} & 92.53{\scriptsize $\pm$0.80} & 88.18{\scriptsize $\pm$0.85} & 70.87{\scriptsize $\pm$0.89} & 35.39{\scriptsize $\pm$1.70} & 40.28{\scriptsize $\pm$2.91} & 49.01{\scriptsize $\pm$0.74} & 88.14{\scriptsize $\pm$0.25} & 92.15{\scriptsize $\pm$0.33} & 84.20{\scriptsize $\pm$0.55} & 23.6 \\
        &&\textbf{HOGGCN} & 36.05{\scriptsize $\pm$1.06} & 95.79{\scriptsize $\pm$0.59} & 90.40{\scriptsize $\pm$0.64} & OOM & 35.10{\scriptsize $\pm$1.81} & 38.43{\scriptsize $\pm$3.66} & OOM & OOM & 94.48{\scriptsize $\pm$0.50} & 83.57{\scriptsize $\pm$0.63} & 25.5 \\
    \cmidrule{2-14}

        &\multirowcell{5}{\cellcolor{Gray}{\rotatebox[origin=c]{90}{\textbf{\quad\quad Incep.\quad\ \ }}}}
        &\textbf{GPRGNN} & 35.79{\scriptsize $\pm$1.04} & 96.26{\scriptsize $\pm$0.62} & 91.52{\scriptsize $\pm$0.56} & 72.36{\scriptsize $\pm$0.38} & 38.00{\scriptsize $\pm$1.58} & 41.63{\scriptsize $\pm$2.86} & 46.07{\scriptsize $\pm$0.78} & 89.45{\scriptsize $\pm$0.61} & 95.51{\scriptsize $\pm$0.39} & 83.16{\scriptsize $\pm$1.23} & 17.6 \\
        &&\textbf{ACMGCN} & 35.68{\scriptsize $\pm$1.17} & 96.01{\scriptsize $\pm$0.53} & 68.63{\scriptsize $\pm$1.87} & 72.58{\scriptsize $\pm$0.35} & 37.60{\scriptsize $\pm$1.70} & 43.03{\scriptsize $\pm$3.08} & 50.51{\scriptsize $\pm$0.66} & 89.95{\scriptsize $\pm$0.50} & 92.35{\scriptsize $\pm$0.39} & 84.13{\scriptsize $\pm$0.66} & 19.1 \\
        &&\textbf{OrderedGNN} & 36.95{\scriptsize $\pm$0.85} & 96.39{\scriptsize $\pm$0.69} & 91.13{\scriptsize $\pm$0.59} & 82.65{\scriptsize $\pm$0.91} & 36.27{\scriptsize $\pm$1.95} & 42.13{\scriptsize $\pm$3.04} & 51.58{\scriptsize $\pm$0.99} & 90.01{\scriptsize $\pm$0.40} & 95.87{\scriptsize $\pm$0.24} & 85.60{\scriptsize $\pm$0.77} & 9.9 \\
        &&\textbf{N$^2$} & 37.41{\scriptsize $\pm$0.60} & 94.72{\scriptsize $\pm$0.57} & 91.08{\scriptsize $\pm$0.79} & 75.32{\scriptsize $\pm$0.41} & 39.35{\scriptsize $\pm$2.39} & 38.60{\scriptsize $\pm$1.12} & 48.08{\scriptsize $\pm$0.76} & 89.16{\scriptsize $\pm$0.24} & \textbf{\cellcolor{Gray}95.92{\scriptsize $\pm$0.27}} & 84.07{\scriptsize $\pm$0.39} & 16.4 \\
        &&\textbf{CoGNN} & 37.52{\scriptsize $\pm$1.66} & 96.41{\scriptsize $\pm$0.56} & 89.91{\scriptsize $\pm$0.93} & 87.57{\scriptsize $\pm$0.46} & 37.89{\scriptsize $\pm$2.23} & 40.45{\scriptsize $\pm$2.48} & 52.89{\scriptsize $\pm$0.81} & 89.49{\scriptsize $\pm$0.53} & 95.15{\scriptsize $\pm$0.55} & 85.70{\scriptsize $\pm$0.71} & 12.6 \\
        &&\textbf{UniFilter} & 36.11{\scriptsize $\pm$1.04} & 96.53{\scriptsize $\pm$0.47} & 91.89{\scriptsize $\pm$0.75} & 74.90{\scriptsize $\pm$0.91} & 42.40{\scriptsize $\pm$2.58} & 46.07{\scriptsize $\pm$4.74} & 49.36{\scriptsize $\pm$0.98} & 90.15{\scriptsize $\pm$0.39} & 94.91{\scriptsize $\pm$0.62} & 85.43{\scriptsize $\pm$0.67} & 9.8 \\
    \cmidrule[1pt]{1-14}

        &&\textbf{NodeFormer} & 36.10{\scriptsize $\pm$1.09} & 94.28{\scriptsize $\pm$0.67} & 89.05{\scriptsize $\pm$0.99} & 70.24{\scriptsize $\pm$1.58} & 38.38{\scriptsize $\pm$1.81} & 38.93{\scriptsize $\pm$3.68} & 42.67{\scriptsize $\pm$0.77} & 88.36{\scriptsize $\pm$0.43} & 93.81{\scriptsize $\pm$0.75} & 80.98{\scriptsize $\pm$0.84} & 23.7 \\
        &&\textbf{DIFFormer} & 36.13{\scriptsize $\pm$1.19} & 96.50{\scriptsize $\pm$0.71} & 90.86{\scriptsize $\pm$0.58} & 79.36{\scriptsize $\pm$0.54} & 41.12{\scriptsize $\pm$1.09} & 41.69{\scriptsize $\pm$2.96} & 49.33{\scriptsize $\pm$0.97} & 88.90{\scriptsize $\pm$0.47} & 95.67{\scriptsize $\pm$0.29} & 84.27{\scriptsize $\pm$0.75} & 13.6 \\
        &&\textbf{SGFormer} & 37.36{\scriptsize $\pm$1.11} & \underline{96.98{\scriptsize $\pm$0.59}} & 91.62{\scriptsize $\pm$0.55} & 75.71{\scriptsize $\pm$0.44} & 42.22{\scriptsize $\pm$2.45} & 44.44{\scriptsize $\pm$3.01} & 51.60{\scriptsize $\pm$0.62} & 89.75{\scriptsize $\pm$0.44} & 95.84{\scriptsize $\pm$0.41} & 84.72{\scriptsize $\pm$0.72} & 8.4 \\
        &&\textbf{GOAT} & 35.90{\scriptsize $\pm$1.31} & 95.20{\scriptsize $\pm$0.54} & 89.43{\scriptsize $\pm$1.28} & 79.41{\scriptsize $\pm$0.81} & 36.27{\scriptsize $\pm$2.13} & 44.10{\scriptsize $\pm$4.06} & 51.47{\scriptsize $\pm$0.96} & 89.85{\scriptsize $\pm$0.57} & 95.48{\scriptsize $\pm$0.33} & 85.56{\scriptsize $\pm$0.72} & 14.3 \\
        &&\textbf{Polynormer} & 37.27{\scriptsize $\pm$1.52} & 96.73{\scriptsize $\pm$0.45} & 91.98{\scriptsize $\pm$0.74} & \textbf{\cellcolor{Gray}92.46{\scriptsize $\pm$0.43}} & 40.13{\scriptsize $\pm$2.28} & 43.60{\scriptsize $\pm$3.29} & \textbf{\cellcolor{Gray}53.35{\scriptsize $\pm$1.06}} & 89.98{\scriptsize $\pm$0.44} & 95.75{\scriptsize $\pm$0.22} & 84.76{\scriptsize $\pm$0.82} & 6.5 \\
    \midrule[1pt]

        \multirowcell{3}{{\rotatebox[origin=c]{90}{\textbf{Ours}}}}&\multirowcell{3}{\cellcolor{Gray}{\rotatebox[origin=c]{90}{\textbf{\ Incep.\ }}}}
        &\textbf{r-IGNN} & 37.58{\scriptsize $\pm$1.39} & 96.49{\scriptsize $\pm$0.39} & 92.32{\scriptsize $\pm$0.66} & 90.36{\scriptsize $\pm$0.43} & 44.67{\scriptsize $\pm$2.08} & 46.63{\scriptsize $\pm$3.80} & 52.10{\scriptsize $\pm$1.02} & 89.76{\scriptsize $\pm$0.49} & 95.53{\scriptsize $\pm$0.42} & 85.20{\scriptsize $\pm$0.61} & 6.5 \\
        &&\textbf{a-IGNN} & \underline{38.04{\scriptsize $\pm$1.00}} & 96.77{\scriptsize $\pm$0.42} & \underline{93.24{\scriptsize $\pm$0.73}} & 90.96{\scriptsize $\pm$0.53} & \underline{45.01{\scriptsize $\pm$2.65}} & \underline{47.53{\scriptsize $\pm$3.09}} & 52.22{\scriptsize $\pm$0.66} & 90.22{\scriptsize $\pm$0.52} & 95.73{\scriptsize $\pm$0.38} & \underline{85.75{\scriptsize $\pm$0.59}} & \underline{3.2} \\
        &&\textbf{c-IGNN} & \textbf{\cellcolor{Gray}38.51{\scriptsize $\pm$0.94}} & \textbf{\cellcolor{Gray}97.24{\scriptsize $\pm$0.34}} & \textbf{\cellcolor{Gray}93.27{\scriptsize $\pm$0.40}} & \underline{90.97{\scriptsize $\pm$0.36}} & \textbf{\cellcolor{Gray}45.71{\scriptsize $\pm$2.13}} & \textbf{\cellcolor{Gray}50.79{\scriptsize $\pm$4.92}} & \underline{53.03{\scriptsize $\pm$0.61}} & \textbf{\cellcolor{Gray}90.41{\scriptsize $\pm$0.59}} & \underline{95.91{\scriptsize $\pm$0.29}} & \textbf{\cellcolor{Gray}86.37{\scriptsize $\pm$0.44}} & \textbf{\cellcolor{Gray}1.3} \\
        \bottomrule[1pt]
    \end{tabular}
    }
    \label{tab:overallperformance}
\end{table*}

\begin{wraptable}{R}{0.4\textwidth}
    \centering
    \caption{Performance on Large Datasets.
    }
    \resizebox{0.4\textwidth}{!}{
    \begin{tabular}{l|rrr}
    \toprule[1pt]
         
        \textbf{Dataset} & \textbf{ogbn-arxiv} & \textbf{pokec} & \textbf{ogbn-products} \\ 

    \cmidrule[1pt]{1-4}
        
        \textit{$h_e$} &  \textit{0.66} & \textit{0.44} & \textit{0.81} \\ 
        \textit{\#Nodes} &   \textit{169,343} & \textit{1,632,803} & \textit{2,440,029}  \\ 
        \textit{\#Edges} &   \textit{1,166,243} &  \textit{30,622,564} &  \textit{123,718,280}   \\  
        \textit{\#Feats} &   \textit{128} &  \textit{65}  & \textit{100}   \\ 
        
    \midrule[1pt]

        \textbf{MLP} &  55.50{\scriptsize $\pm$0.23} & 63.27{\scriptsize $\pm$0.12} &   61.06{\scriptsize $\pm$0.12}  \\ 
        \textbf{GCN} & 71.74{\scriptsize $\pm$0.29} & 74.45{\scriptsize $\pm$0.27} &  75.45{\scriptsize $\pm$0.16}   \\ 
        \textbf{GAT} & 71.74{\scriptsize $\pm$0.29} & 72.77{\scriptsize $\pm$3.18} & 79.45{\scriptsize $\pm$0.28}   \\ 
        \textbf{SGC} &  70.74{\scriptsize $\pm$0.29} & 73.77{\scriptsize $\pm$3.18} &   74.78{\scriptsize $\pm$0.17}  \\ 
        \textbf{SIGN} & 70.28{\scriptsize $\pm$0.25} & 77.98{\scriptsize $\pm$0.14} &  77.60{\scriptsize $\pm$0.13}  \\ 
        \textbf{GPRGNN}&  71.40{\scriptsize $\pm$0.32}& 78.62{\scriptsize $\pm$0.15} &  78.23{\scriptsize $\pm$0.25} \\

    \hline
        \textbf{NodeFormer} & 67.72{\scriptsize $\pm$0.52} & 70.12{\scriptsize $\pm$0.42} &  71.23{\scriptsize $\pm$1.40}  \\ 
        \textbf{DIFFormer} & 69.85{\scriptsize $\pm$0.34} & 72.89{\scriptsize $\pm$0.56} & 74.16{\scriptsize $\pm$0.32}\\
       
        \textbf{SGFormer} &  72.62{\scriptsize $\pm$0.18} & 73.24{\scriptsize $\pm$0.54} &  76.24{\scriptsize $\pm$0.45}   \\

    \midrule[1pt]
      
       \textbf{ r-IGNN} & \underline{72.63{\scriptsize $\pm$0.23}} & \textbf{82.74{\scriptsize $\pm$0.41}}& \underline{80.92{\scriptsize $\pm$0.19}} \\
        \textbf{a-IGNN}& 72.60{\scriptsize $\pm$0.31} & \underline{82.09{\scriptsize $\pm$0.25}} & 78.89{\scriptsize $\pm$0.47} \\
        \textbf{c-\model} & \textbf{73.26{\scriptsize $\pm$0.10}}& \underline{82.09{\scriptsize $\pm$0.11}}& \textbf{82.04{\scriptsize $\pm$0.45}}\\ 

        \bottomrule[1pt]
    \end{tabular}
    }
    \label{tab:large}
\end{wraptable}

\textit{A subset of homoGNNs, which happen to be inceptive variants, outperform many recent heteroGNNs, highlighting the strength of inceptive architectures in addressing the dilemma hindering universality.}
Specifically, the average ranks of inceptive homoGNNs exceed those of all non-inceptive heteroGNNs, and in many cases, surpass those of inceptive heteroGNNs.
These homoGNNs have been largely overlooked previously, as their designs are not tailored for heterophily.
Only DAGNN and GCNII have specific features to mitigate oversmoothing.
Surprisingly, the mere incorporation of inceptive designs is sufficient to achieve superior performance.
This strongly suggests that the key factor limiting universality is the dilemma.

\textit{Inceptive heteroGNNs demonstrate better performance compared to non-inceptive heteroGNNs, while graph transformers also show relatively strong performance.}
First, inceptive heteroGNNs  are mostly attentive variants employing different attention mechanisms. 
Interestingly, these models exhibit significant differences in performance, indicating that the design of the attention mechanism plays a critical role. 
Second, graph transformers excel likely because they move beyond the traditional message passing process, which utilizes the global attention mechanisms.
Notably, Polynormer shows a great advantage on roman-empire which is not observed in other datasets.
Upon examination, we found it was a long-chain graph derived from words, aligning with the inherent strengths of transformers in natural language processing.
Nevertheless, we observe an interesting \textbf{\textit{insight for language graphs}}: for the same receptive field size $k$, they achieve better performance when stacking $k$ IGNN layers than when using a single IGNN layer with RN across $k$ hops.
As we focus on general graphs and the A.R. of \model-s show consistent advantages, we leave such graphs to future studies.

\textit{\model outperforms all baselines with or without inceptive architectures, while inceptive GNNs also vary in performance, suggesting that the effectiveness is significantly influenced by whether all principles are integrated and how they are implemented. }
In particular, concatenative variants (e.g., c-\model, SIGN, and IncepGCN) generally outperform residual and attentive ones, with the ordered gating mechanism of OrderedGNN standing out as evidence that order information is crucial for capturing neighborhood relationships. 
However, two concatenative variants show low performance due to unique designs: original JKNet does not include ego features without propagation, and MixHop requires stacking layers, reintroducing transforamtion decoupling.
Furthermore, most inceptive GNNs fail to incorporate all three principles, thereby not fully resolving the dilemma and degrading their performance on universality.
See a detailed comparison of inceptive GNNs in \cref{sec:comp}

\subsection{Ablation Studies of \SN, \IN and \RN~(RQ2)}

\begin{table*}[t]
    \centering
    \renewcommand{\arraystretch}{.8}
    
    \caption{
   Ablation of Three Principles.
    \textit{A.R.} denotes the average of all ranks across datasets.
    }
\label{tab:nst-f}
    \resizebox{\textwidth}{!}{
    \begin{tabular}{r|ccc|c|rrrrrrrrrr|c}
    \toprule[1pt]
           & \multicolumn{3}{|c|}{\textbf{GCN AGG($\cdot$)+}} & \textbf{Equivalent}& \multirow{2}{*}{\textbf{Actor}}      & \multirow{2}{*}{\textbf{Blog}} & \multirow{2}{*}{\textbf{Flickr}}   &   \multirow{2}{*}{\textbf{Roman-E}}    & \multirow{2}{*}{\textbf{Squirrel-f}}   & \multirow{2}{*}{\textbf{Chame-f}}  & \multirow{2}{*}{\textbf{Amazon-R}} & \multirow{2}{*}{\textbf{Pubmed}}     & \multirow{2}{*}{\textbf{Photo}}      & \multirow{2}{*}{\textbf{Wikics}} & \multirow{2}{*}{\textbf{A.R.}}    \\
        &\textbf{\SN} & \textbf{\IN} & \textbf{\RN}  & \textbf{Variant} &&&&&&&&&&\\ 
        \midrule[0.8pt]

      1&  &  &   &                                  \textbf{GCN} & 30.82{\scriptsize $\pm$1.41} & 77.28{\scriptsize $\pm$1.43} & 69.06{\scriptsize $\pm$1.70} & 36.23{\scriptsize $\pm$0.57} & 37.06{\scriptsize $\pm$1.42} & 41.46{\scriptsize $\pm$3.42} & 44.96{\scriptsize $\pm$0.40} & 87.70{\scriptsize $\pm$0.58} & 94.88{\scriptsize $\pm$2.08} & 78.59{\scriptsize $\pm$1.07} & 5.7 \\
\cmidrule{1-5}

     2&    & \checkmark &   &             \textbf{SIGN w/o SN} & 36.32{\scriptsize $\pm$1.03} & \underline{96.89{\scriptsize $\pm$0.29}} & 91.81{\scriptsize $\pm$0.76} & 79.77{\scriptsize $\pm$0.95} & 42.52{\scriptsize $\pm$2.52} & 44.10{\scriptsize $\pm$4.24} & 51.72{\scriptsize $\pm$0.69} & 89.63{\scriptsize $\pm$0.54} & \underline{95.74{\scriptsize $\pm$0.41}} & \underline{85.67{\scriptsize $\pm$0.70}} & 3.2 \\
\cmidrule{1-5}  

      3&   &  & \checkmark  &                \textbf{JKNet-GCN} & 30.49{\scriptsize $\pm$1.71} & 84.25{\scriptsize $\pm$0.71} & 71.72{\scriptsize $\pm$1.47} & 69.61{\scriptsize $\pm$0.42} & 40.11{\scriptsize $\pm$2.54} & 43.31{\scriptsize $\pm$3.12} & 48.15{\scriptsize $\pm$0.93} & 87.41{\scriptsize $\pm$0.38} & 94.39{\scriptsize $\pm$0.40} & 83.80{\scriptsize $\pm$0.65} & 5.3 \\
\cmidrule{1-5}   

      4& \checkmark  & \checkmark &   &         \textbf{SIGN} & 36.76{\scriptsize $\pm$1.00} & 96.06{\scriptsize $\pm$0.68} & 91.81{\scriptsize $\pm$0.58} & 81.56{\scriptsize $\pm$0.57} & 42.13{\scriptsize $\pm$1.99} & 44.66{\scriptsize $\pm$3.46} & \underline{52.47{\scriptsize $\pm$0.95}} & \underline{90.29{\scriptsize $\pm$0.50}} & 95.53{\scriptsize $\pm$0.43} & 85.59{\scriptsize $\pm$0.79} & 3.0 \\
\cmidrule{1-5}  

      5&   & \checkmark &  \checkmark  &            \textbf{r-IGNN} & \underline{37.58{\scriptsize $\pm$1.39}} & 96.49{\scriptsize $\pm$0.39} & \underline{92.32{\scriptsize $\pm$0.66}} & \underline{90.36{\scriptsize $\pm$0.43}} & \underline{44.67{\scriptsize $\pm$2.08}} & \underline{46.63{\scriptsize $\pm$3.80}} & 52.10{\scriptsize $\pm$1.02} & 89.76{\scriptsize $\pm$0.49} & 95.53{\scriptsize $\pm$0.42} & 85.20{\scriptsize $\pm$0.61} & \underline{2.6} \\
\cmidrule{1-5}    

     6&  \checkmark  & \checkmark &  \checkmark  &  \textbf{c-IGNN} & \textbf{38.51{\scriptsize $\pm$0.94}} & \textbf{97.24{\scriptsize $\pm$0.34}} & \textbf{93.27{\scriptsize $\pm$0.40}} & \textbf{90.97{\scriptsize $\pm$0.36}} & \textbf{45.71{\scriptsize $\pm$2.13}} & \textbf{50.79{\scriptsize $\pm$4.92}} & \textbf{53.03{\scriptsize $\pm$0.61}} & \textbf{90.41{\scriptsize $\pm$0.59}} & \textbf{95.91{\scriptsize $\pm$0.29}} & \textbf{86.37{\scriptsize $\pm$0.44}} & \textbf{1.0} \\

        \bottomrule[1pt]
    \end{tabular}
    }
\end{table*}

\begin{wrapfigure}{R}{0.5\textwidth}
    \centering
    \includegraphics[width=0.5\textwidth]{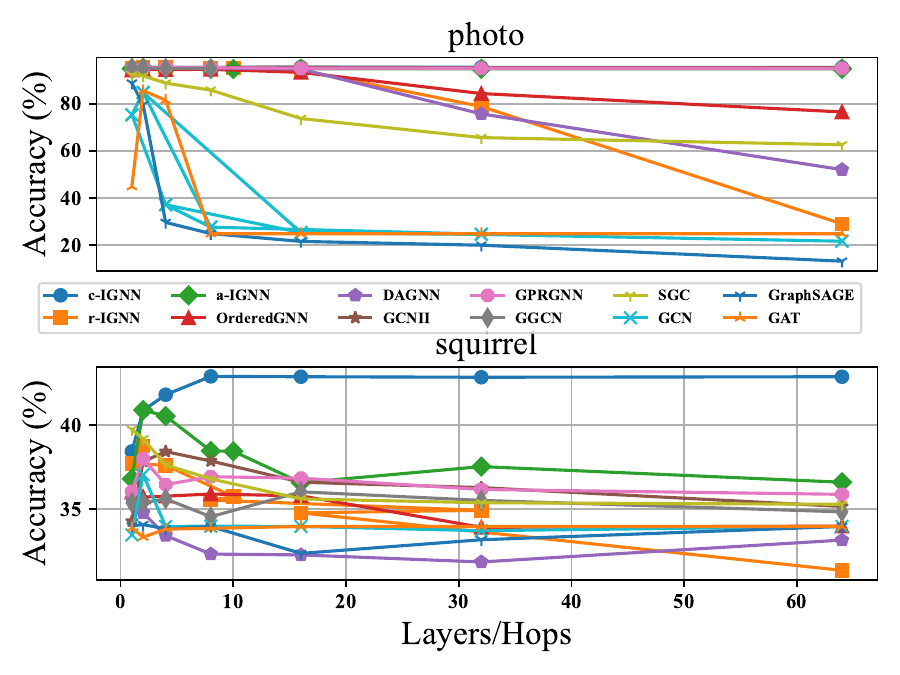}
    \caption{Performance of Different Hops}
    \label{fig:hops}
\end{wrapfigure}\cref{tab:nst-f} presents the ablation of the three principles.
It is important to note that \SN cannot be applied without \IN, so the ablations do not include any combinations of \SN without \IN.
Several key conclusions can be drawn:
\textbf{\textit{First}},
the best performance is achieved when all principles are applied, as c-\model obtains the highest average rank (Rank 1) (line 6 vs. others).
\textbf{\textit{Second}}, 
JKNet-GCN shows a significant performance gap depending on \IN(line 3 vs. line 5), where the difference lies in whether each hop is aggregated independently with the ego feature transformation included.
This indicates that incorporating \IN and the ego representation into the final representation enhances generalization.
\textbf{\textit{Third}},
\SN and \RN demonstrate excellent synergy, yielding significantly improved results when used together.
Although \IN is incorporated in lines 4–6, adding either \SN or \RN alone (lines 4, 5) does not lead to the best improvement compared to incorporating both, as seen in c-\model (line 6).

\subsection{Performance of Different Neighborhood Hops~(RQ3)}
\cref{fig:hops} illustrates various method performance across different hops. 
In the homophilic context (photo), many inceptive methods effectively mitigating the oversmoothing issue, such as GCNII, GPRGNN, \model and OrderedGNN.
Conversely, in the heterophilic scenario (squirrel), most of them consistently struggle with high-order neighborhoods, as evidenced by a trend of initial improvement followed by a decline in performance.
In contrast, c-\model exhibits a notable increase in performance that stabilizes thereafter, highlighting the effectiveness of incorporating all three principles in improving hop-wise and overall generalization as well as alliviating the dilemma.

\section{Conclusion}
This paper advances GNN universality across varying homophily by identifying the smoothness-generalization dilemma, which impairs learning in high-order homophilic neighborhoods and all heterophilic ones. We propose the Inceptive Graph Neural Network (\model), a unified message-passing framework built on three key design principles: separative neighborhood transformation, inceptive neighborhood aggregation, and neighborhood relationship learning.
These principles alleviate the dilemma by enabling distinct hop-wise generalization, improving overall generalization, and approximating arbitrary graph filters for adaptive smoothness.
Extensive benchmarking against 30 baselines demonstrates \model’s superiority and reveals notable universality in certain homophilic GNN variants. 
For limitation discussion, please refer to Appendix~\ref{sec:limitaions}.

\begin{ack}
This work is supported by the National Natural Science Foundation of China (Grant No. 62476245), Zhejiang Provincial Natural Science Foundation of China (Grant No. LTGG23F030005).
\end{ack}


\medskip

\bibliographystyle{unsrtnat}
\bibliography{reference}

\section*{NeurIPS Paper Checklist}

\begin{enumerate}

\item {\bf Claims}
    \item[] Question: Do the main claims made in the abstract and introduction accurately reflect the paper's contributions and scope?
    \item[] Answer: \answerYes{}, 
    \item[] Justification: The abstract and introduction include the claims made in the paper.
    \item[] Guidelines:
    \begin{itemize}
        \item The answer NA means that the abstract and introduction do not include the claims made in the paper.
        \item The abstract and/or introduction should clearly state the claims made, including the contributions made in the paper and important assumptions and limitations. A No or NA answer to this question will not be perceived well by the reviewers. 
        \item The claims made should match theoretical and experimental results, and reflect how much the results can be expected to generalize to other settings. 
        \item It is fine to include aspirational goals as motivation as long as it is clear that these goals are not attained by the paper. 
    \end{itemize}

\item {\bf Limitations}
    \item[] Question: Does the paper discuss the limitations of the work performed by the authors?
    \item[] Answer: \answerYes{} 
    \item[] Justification: Please refer to \cref{sec:limitaions}.
    \item[] Guidelines:
    \begin{itemize}
        \item The answer NA means that the paper has no limitation while the answer No means that the paper has limitations, but those are not discussed in the paper. 
        \item The authors are encouraged to create a separate "Limitations" section in their paper.
        \item The paper should point out any strong assumptions and how robust the results are to violations of these assumptions (e.g., independence assumptions, noiseless settings, model well-specification, asymptotic approximations only holding locally). The authors should reflect on how these assumptions might be violated in practice and what the implications would be.
        \item The authors should reflect on the scope of the claims made, e.g., if the approach was only tested on a few datasets or with a few runs. In general, empirical results often depend on implicit assumptions, which should be articulated.
        \item The authors should reflect on the factors that influence the performance of the approach. For example, a facial recognition algorithm may perform poorly when image resolution is low or images are taken in low lighting. Or a speech-to-text system might not be used reliably to provide closed captions for online lectures because it fails to handle technical jargon.
        \item The authors should discuss the computational efficiency of the proposed algorithms and how they scale with dataset size.
        \item If applicable, the authors should discuss possible limitations of their approach to address problems of privacy and fairness.
        \item While the authors might fear that complete honesty about limitations might be used by reviewers as grounds for rejection, a worse outcome might be that reviewers discover limitations that aren't acknowledged in the paper. The authors should use their best judgment and recognize that individual actions in favor of transparency play an important role in developing norms that preserve the integrity of the community. Reviewers will be specifically instructed to not penalize honesty concerning limitations.
    \end{itemize}

\item {\bf Theory assumptions and proofs}
    \item[] Question: For each theoretical result, does the paper provide the full set of assumptions and a complete (and correct) proof?
    \item[] Answer: \answerYes{} 
    \item[] Justification: Please refer to \cref{sec:revisit}, \cref{sec:ana-theo} and \cref{sec:infoloss,sec:tm:poly,sec:propositions,sec:ignn}.
    \item[] Guidelines:
    \begin{itemize}
        \item The answer NA means that the paper does not include theoretical results. 
        \item All the theorems, formulas, and proofs in the paper should be numbered and cross-referenced.
        \item All assumptions should be clearly stated or referenced in the statement of any theorems.
        \item The proofs can either appear in the main paper or the supplemental material, but if they appear in the supplemental material, the authors are encouraged to provide a short proof sketch to provide intuition. 
        \item Inversely, any informal proof provided in the core of the paper should be complemented by formal proofs provided in appendix or supplemental material.
        \item Theorems and Lemmas that the proof relies upon should be properly referenced. 
    \end{itemize}

    \item {\bf Experimental result reproducibility}
    \item[] Question: Does the paper fully disclose all the information needed to reproduce the main experimental results of the paper to the extent that it affects the main claims and/or conclusions of the paper (regardless of whether the code and data are provided or not)?
    \item[] Answer: \answerYes{} 
    \item[] Justification: Please refer to \cref{sec:hyper}.
    \item[] Guidelines:
    \begin{itemize}
        \item The answer NA means that the paper does not include experiments.
        \item If the paper includes experiments, a No answer to this question will not be perceived well by the reviewers: Making the paper reproducible is important, regardless of whether the code and data are provided or not.
        \item If the contribution is a dataset and/or model, the authors should describe the steps taken to make their results reproducible or verifiable. 
        \item Depending on the contribution, reproducibility can be accomplished in various ways. For example, if the contribution is a novel architecture, describing the architecture fully might suffice, or if the contribution is a specific model and empirical evaluation, it may be necessary to either make it possible for others to replicate the model with the same dataset, or provide access to the model. In general. releasing code and data is often one good way to accomplish this, but reproducibility can also be provided via detailed instructions for how to replicate the results, access to a hosted model (e.g., in the case of a large language model), releasing of a model checkpoint, or other means that are appropriate to the research performed.
        \item While NeurIPS does not require releasing code, the conference does require all submissions to provide some reasonable avenue for reproducibility, which may depend on the nature of the contribution. For example
        \begin{enumerate}
            \item If the contribution is primarily a new algorithm, the paper should make it clear how to reproduce that algorithm.
            \item If the contribution is primarily a new model architecture, the paper should describe the architecture clearly and fully.
            \item If the contribution is a new model (e.g., a large language model), then there should either be a way to access this model for reproducing the results or a way to reproduce the model (e.g., with an open-source dataset or instructions for how to construct the dataset).
            \item We recognize that reproducibility may be tricky in some cases, in which case authors are welcome to describe the particular way they provide for reproducibility. In the case of closed-source models, it may be that access to the model is limited in some way (e.g., to registered users), but it should be possible for other researchers to have some path to reproducing or verifying the results.
        \end{enumerate}
    \end{itemize}

\item {\bf Open access to data and code}
    \item[] Question: Does the paper provide open access to the data and code, with sufficient instructions to faithfully reproduce the main experimental results, as described in supplemental material?
    \item[] Answer: \answerYes{} 
    \item[] Justification: Please refer to \cref{sec:exp}.
    \item[] Guidelines:
    \begin{itemize}
        \item The answer NA means that paper does not include experiments requiring code.
        \item Please see the NeurIPS code and data submission guidelines (\url{https://nips.cc/public/guides/CodeSubmissionPolicy}) for more details.
        \item While we encourage the release of code and data, we understand that this might not be possible, so “No” is an acceptable answer. Papers cannot be rejected simply for not including code, unless this is central to the contribution (e.g., for a new open-source benchmark).
        \item The instructions should contain the exact command and environment needed to run to reproduce the results. See the NeurIPS code and data submission guidelines (\url{https://nips.cc/public/guides/CodeSubmissionPolicy}) for more details.
        \item The authors should provide instructions on data access and preparation, including how to access the raw data, preprocessed data, intermediate data, and generated data, etc.
        \item The authors should provide scripts to reproduce all experimental results for the new proposed method and baselines. If only a subset of experiments are reproducible, they should state which ones are omitted from the script and why.
        \item At submission time, to preserve anonymity, the authors should release anonymized versions (if applicable).
        \item Providing as much information as possible in supplemental material (appended to the paper) is recommended, but including URLs to data and code is permitted.
    \end{itemize}

\item {\bf Experimental setting/details}
    \item[] Question: Does the paper specify all the training and test details (e.g., data splits, hyperparameters, how they were chosen, type of optimizer, etc.) necessary to understand the results?
    \item[] Answer: \answerYes{} 
    \item[] Justification: Please refer to \cref{sec:exp} and \cref{sec:hyper}.
    \item[] Guidelines:
    \begin{itemize}
        \item The answer NA means that the paper does not include experiments.
        \item The experimental setting should be presented in the core of the paper to a level of detail that is necessary to appreciate the results and make sense of them.
        \item The full details can be provided either with the code, in appendix, or as supplemental material.
    \end{itemize}

\item {\bf Experiment statistical significance}
    \item[] Question: Does the paper report error bars suitably and correctly defined or other appropriate information about the statistical significance of the experiments?
    \item[] Answer: \answerYes{} 
    \item[] Justification: Please refer to \cref{sec:exp}.
    \item[] Guidelines:
    \begin{itemize}
        \item The answer NA means that the paper does not include experiments.
        \item The authors should answer "Yes" if the results are accompanied by error bars, confidence intervals, or statistical significance tests, at least for the experiments that support the main claims of the paper.
        \item The factors of variability that the error bars are capturing should be clearly stated (for example, train/test split, initialization, random drawing of some parameter, or overall run with given experimental conditions).
        \item The method for calculating the error bars should be explained (closed form formula, call to a library function, bootstrap, etc.)
        \item The assumptions made should be given (e.g., Normally distributed errors).
        \item It should be clear whether the error bar is the standard deviation or the standard error of the mean.
        \item It is OK to report 1-sigma error bars, but one should state it. The authors should preferably report a 2-sigma error bar than state that they have a 96\% CI, if the hypothesis of Normality of errors is not verified.
        \item For asymmetric distributions, the authors should be careful not to show in tables or figures symmetric error bars that would yield results that are out of range (e.g. negative error rates).
        \item If error bars are reported in tables or plots, The authors should explain in the text how they were calculated and reference the corresponding figures or tables in the text.
    \end{itemize}

\item {\bf Experiments compute resources}
    \item[] Question: For each experiment, does the paper provide sufficient information on the computer resources (type of compute workers, memory, time of execution) needed to reproduce the experiments?
    \item[] Answer: \answerYes{} 
    \item[] Justification: Please refer to \cref{sec:exp} and \cref{sec:hyper}.
    \item[] Guidelines:
    \begin{itemize}
        \item The answer NA means that the paper does not include experiments.
        \item The paper should indicate the type of compute workers CPU or GPU, internal cluster, or cloud provider, including relevant memory and storage.
        \item The paper should provide the amount of compute required for each of the individual experimental runs as well as estimate the total compute. 
        \item The paper should disclose whether the full research project required more compute than the experiments reported in the paper (e.g., preliminary or failed experiments that didn't make it into the paper). 
    \end{itemize}
    
\item {\bf Code of ethics}
    \item[] Question: Does the research conducted in the paper conform, in every respect, with the NeurIPS Code of Ethics \url{https://neurips.cc/public/EthicsGuidelines}?
    \item[] Answer: \answerYes{} 
    \item[] Justification: No deviations.
    \item[] Guidelines:
    \begin{itemize}
        \item The answer NA means that the authors have not reviewed the NeurIPS Code of Ethics.
        \item If the authors answer No, they should explain the special circumstances that require a deviation from the Code of Ethics.
        \item The authors should make sure to preserve anonymity (e.g., if there is a special consideration due to laws or regulations in their jurisdiction).
    \end{itemize}

\item {\bf Broader impacts}
    \item[] Question: Does the paper discuss both potential positive societal impacts and negative societal impacts of the work performed?
    \item[] Answer: \answerNA{} 
    \item[] Justification: This work is a foundational research and not tied to particular applications.
    \item[] Guidelines:
    \begin{itemize}
        \item The answer NA means that there is no societal impact of the work performed.
        \item If the authors answer NA or No, they should explain why their work has no societal impact or why the paper does not address societal impact.
        \item Examples of negative societal impacts include potential malicious or unintended uses (e.g., disinformation, generating fake profiles, surveillance), fairness considerations (e.g., deployment of technologies that could make decisions that unfairly impact specific groups), privacy considerations, and security considerations.
        \item The conference expects that many papers will be foundational research and not tied to particular applications, let alone deployments. However, if there is a direct path to any negative applications, the authors should point it out. For example, it is legitimate to point out that an improvement in the quality of generative models could be used to generate deepfakes for disinformation. On the other hand, it is not needed to point out that a generic algorithm for optimizing neural networks could enable people to train models that generate Deepfakes faster.
        \item The authors should consider possible harms that could arise when the technology is being used as intended and functioning correctly, harms that could arise when the technology is being used as intended but gives incorrect results, and harms following from (intentional or unintentional) misuse of the technology.
        \item If there are negative societal impacts, the authors could also discuss possible mitigation strategies (e.g., gated release of models, providing defenses in addition to attacks, mechanisms for monitoring misuse, mechanisms to monitor how a system learns from feedback over time, improving the efficiency and accessibility of ML).
    \end{itemize}
    
\item {\bf Safeguards}
    \item[] Question: Does the paper describe safeguards that have been put in place for responsible release of data or models that have a high risk for misuse (e.g., pretrained language models, image generators, or scraped datasets)?
    \item[] Answer: \answerNA{} 
    \item[] Justification: We use publicly available datasets of no such risks.
    \item[] Guidelines:
    \begin{itemize}
        \item The answer NA means that the paper poses no such risks.
        \item Released models that have a high risk for misuse or dual-use should be released with necessary safeguards to allow for controlled use of the model, for example by requiring that users adhere to usage guidelines or restrictions to access the model or implementing safety filters. 
        \item Datasets that have been scraped from the Internet could pose safety risks. The authors should describe how they avoided releasing unsafe images.
        \item We recognize that providing effective safeguards is challenging, and many papers do not require this, but we encourage authors to take this into account and make a best faith effort.
    \end{itemize}

\item {\bf Licenses for existing assets}
    \item[] Question: Are the creators or original owners of assets (e.g., code, data, models), used in the paper, properly credited and are the license and terms of use explicitly mentioned and properly respected?
    \item[] Answer: \answerYes{} 
    \item[] Justification: Please refer to \cref{sec:exp}.
    \item[] Guidelines:
    \begin{itemize}
        \item The answer NA means that the paper does not use existing assets.
        \item The authors should cite the original paper that produced the code package or dataset.
        \item The authors should state which version of the asset is used and, if possible, include a URL.
        \item The name of the license (e.g., CC-BY 4.0) should be included for each asset.
        \item For scraped data from a particular source (e.g., website), the copyright and terms of service of that source should be provided.
        \item If assets are released, the license, copyright information, and terms of use in the package should be provided. For popular datasets, \url{paperswithcode.com/datasets} has curated licenses for some datasets. Their licensing guide can help determine the license of a dataset.
        \item For existing datasets that are re-packaged, both the original license and the license of the derived asset (if it has changed) should be provided.
        \item If this information is not available online, the authors are encouraged to reach out to the asset's creators.
    \end{itemize}

\item {\bf New assets}
    \item[] Question: Are new assets introduced in the paper well documented and is the documentation provided alongside the assets?
    \item[] Answer: \answerYes{} 
    \item[] Justification: We provided an URL in \cref{sec:exp}.
    \item[] Guidelines:
    \begin{itemize}
        \item The answer NA means that the paper does not release new assets.
        \item Researchers should communicate the details of the dataset/code/model as part of their submissions via structured templates. This includes details about training, license, limitations, etc. 
        \item The paper should discuss whether and how consent was obtained from people whose asset is used.
        \item At submission time, remember to anonymize your assets (if applicable). You can either create an anonymized URL or include an anonymized zip file.
    \end{itemize}

\item {\bf Crowdsourcing and research with human subjects}
    \item[] Question: For crowdsourcing experiments and research with human subjects, does the paper include the full text of instructions given to participants and screenshots, if applicable, as well as details about compensation (if any)? 
    \item[] Answer: \answerNA{} 
    \item[] Justification: The paper does not involve crowdsourcing nor research with human subjects.
    \item[] Guidelines:
    \begin{itemize}
        \item The answer NA means that the paper does not involve crowdsourcing nor research with human subjects.
        \item Including this information in the supplemental material is fine, but if the main contribution of the paper involves human subjects, then as much detail as possible should be included in the main paper. 
        \item According to the NeurIPS Code of Ethics, workers involved in data collection, curation, or other labor should be paid at least the minimum wage in the country of the data collector. 
    \end{itemize}

\item {\bf Institutional review board (IRB) approvals or equivalent for research with human subjects}
    \item[] Question: Does the paper describe potential risks incurred by study participants, whether such risks were disclosed to the subjects, and whether Institutional Review Board (IRB) approvals (or an equivalent approval/review based on the requirements of your country or institution) were obtained?
    \item[] Answer: \answerNA{} 
    \item[] Justification: The paper does not involve crowdsourcing nor research with human subjects.
    \item[] Guidelines:
    \begin{itemize}
        \item The answer NA means that the paper does not involve crowdsourcing nor research with human subjects.
        \item Depending on the country in which research is conducted, IRB approval (or equivalent) may be required for any human subjects research. If you obtained IRB approval, you should clearly state this in the paper. 
        \item We recognize that the procedures for this may vary significantly between institutions and locations, and we expect authors to adhere to the NeurIPS Code of Ethics and the guidelines for their institution. 
        \item For initial submissions, do not include any information that would break anonymity (if applicable), such as the institution conducting the review.
    \end{itemize}

\item {\bf Declaration of LLM usage}
    \item[] Question: Does the paper describe the usage of LLMs if it is an important, original, or non-standard component of the core methods in this research? Note that if the LLM is used only for writing, editing, or formatting purposes and does not impact the core methodology, scientific rigorousness, or originality of the research, declaration is not required.
    \item[] Answer: \answerNA{} 
    \item[] Justification: The core method development in this research does not involve LLMs as any important, original, or non-standard components.
    \item[] Guidelines:
    \begin{itemize}
        \item The answer NA means that the core method development in this research does not involve LLMs as any important, original, or non-standard components.
        \item Please refer to our LLM policy (\url{https://neurips.cc/Conferences/2025/LLM}) for what should or should not be described.
    \end{itemize}

\end{enumerate}






\newpage
\appendix

\textbf{Table of Contents}
\newcommand{\pn}[1]{$\cdots\cdots\cdots\cdots$~\pageref{#1}}
\begin{table}[ht]
    \begin{tabular}{rlc}
         \textbf{\ref{sec:proofs}}: &Proofs of Theoretical Results & \pn{sec:proofs}  \\
            &\textbf{\ref{sec:infoloss}}: Proofs of Theorems~\ref{tm:infoloss} and Corollary~\ref{cl:infoloss}&\pn{sec:infoloss} \\
            &\textbf{\ref{sec:tm:poly}}: Proof of~\cref{tm:poly}& \pn{sec:tm:poly} \\
            &\textbf{\ref{sec:propositions}}: Proofs of~\cref{prop:sign}& \pn{sec:propositions}\\
            &\textbf{\ref{sec:ignn}}: Proof of \cref{tm:ignn}& \pn{sec:ignn}\\
         \\\textbf{\ref{sec:complex}}: &Model Analysis & \pn{sec:complex}\\
             &\textbf{\ref{sec:complex theo}}: Complexity Analysis & \pn{sec:complex theo}\\
             &\textbf{\ref{sec:params}}: Parameter Count Analysis & \pn{sec:params}\\
             &\textbf{\ref{sec:runtime}}: Runtime Efficiency Comparision & \pn{sec:runtime}\\
         \\\textbf{\ref{sec:qua}}: &Additional Quantitative Analysis & \pn{sec:qua}\\
         \\\textbf{\ref{sec:add}}: &Additional Theoretical Analysis & \pn{sec:add}\\
            &\textbf{\ref{sec:comp}}: Exisiting GNNs with Partial Inceptive Architectures & \pn{sec:comp}\\
            &\textbf{\ref{sec: initial residual}}: Analysis of the Initial Residual Variant & \pn{sec: initial residual}\\
         \\\textbf{\ref{sec:add exp}}: & Experimental Settings and Additional Results & \pn{sec:add exp}\\
             &\textbf{\ref{sec:varying}}: Varying Homophily across Hops and Nodes & \pn{sec:varying}\\
             &\textbf{\ref{sec:hyper}}: Hyperparameters and Search Spaces & \pn{sec:hyper}\\
         \\\textbf{\ref{sec:limitaions}}: & Limiation Discussion & \pn{sec:limitaions}\\
    \end{tabular}
    \label{tab:toc}
\end{table}

\section{Proofs of Theoretical Results}
\label{sec:proofs}

\subsection{Proofs of Theorems~\ref{tm:infoloss} and Corollary~\ref{cl:infoloss}}
\label{sec:infoloss}

\begin{reptheorem}{tm:infoloss}
   Given a graph $\mathcal{G}(\mathbf{X},\mathbf{A})$, let the representation obtained via $k$ rounds of GCN message passing on symmetrically normalized $\widehat{\mathbf{A}}$  be denoted as $\mathbf{H}^{(k)}_G = \sigma(\widehat{\mathbf{A}}\mathbf{H}^{(k-1)}\mathbf{W}^{(k)})$, 
    and the Lipschitz constant of this $k$-layer graph neural network be denoted as $\hat{L}_G$.
    Given the distance from $\mathbf{X}$ to the subspace $\mathcal{M}$ as $d_\mathcal{M}(\mathbf{X})=\mathcal{D}$, then the distance from $\mathbf{H}^{(k)}_G$ to  $\mathcal{M}$ satisfies: 
    \begin{equation}
        d_\mathcal{M}(\mathbf{H}_G^{(k)})\le  \hat{L}_G \lambda^k \mathcal{D},
    \end{equation}
    where $\hat{L}_G=\|\prod_{i=0}^k \mathbf{W}^{(i)}\|_2$, and $\lambda<1$ is the second largest eigenvalue of $\widehat{\mathbf{A}}$.
\end{reptheorem}

\begin{proof}[proof of Theorem~\ref{tm:infoloss}]
To prove Theorem~\ref{tm:infoloss}, we need to borrow the following notations and Lemmas from~\citet{subspace}.
For $N, D, F \in \mathbb{N}_{+}$, $\widehat{\mathbf{A}} \in \mathbb{R}^{N \times N}$ is a symmetric matrix and $\mathbf{W}^{(k)} \in \mathbb{R}^{D \times F}$ for $k \in \mathbb{N}_{+}$.
For $M \leq N$, let $\mathbf{U}$ be a $M$-dimensional subspace of $\mathbb{R}^N$.
We assume $\mathbf{U}$ and $\widehat{\mathbf{A}}$ satisfy the following properties that generalize the situation where $\mathbf{U}$ is the eigenspace associated with the smallest eigenvalue of the graph Laplacian $\widehat{\mathbf{L}}=\mathbf{I}_N-\widehat{\mathbf{A}}$ (that is, zero).
We endow $\mathbb{R}^N$ with the ordinal inner product and denote the orthogonal complement of $\mathbf{U}$ by $\mathbf{U}^{\perp}:=\left\{\mathbf{u} \in \mathbb{R}^N \mid\langle \mathbf{u}, \mathbf{v}\rangle=0, \forall \mathbf{v} \in \mathbf{U}\right\}$.
We can regard $\widehat{\mathbf{A}}$ as a linear mapping $\left.\widehat{\mathbf{A}}\right|_{\mathbf{U}^{\perp}}: \mathbf{U}^{\perp} \rightarrow \mathbf{U}^{\perp}$. 
Choose the orthonormal basis $\left(e_m\right)_{m=M+1, \ldots, N}$ of $U^{\perp}$ consisting of the eigenvalue of $\left.\widehat{\mathbf{A}}\right|_{U^{\perp}}$.
Let $\lambda_m$ be the eigenvalue of $\widehat{\mathbf{A}}$ to which $e_m$ is associated $(m=M+1, \ldots, N)$.
Note that since the operator norm of $\left.\widehat{\mathbf{A}}\right|_{\mathbf{U}^{\perp}}$ is $\lambda$, we have $\left|\lambda_m\right| \leq \lambda$ for all $m=M+1, \ldots, N$.
Since $\left(e_m\right)_{m \in[N]}$ forms the orthonormal basis of $\mathbb{R}^N$, we can uniquely write $\mathbf{X} \in \mathbb{R}^{N \times D}$ as $\mathbf{X}=\sum_{m=1}^N e_m \otimes \boldsymbol{\omega}_m$ for some $\boldsymbol{\omega}_m \in \mathbb{R}^D$ with $\otimes$ denoting the Kronecker product.
Then, we have 
\begin{equation}
    d_{\mathcal{M}}^2(\mathbf{X})=\sum_{m=M+1}^N\left\|\boldsymbol{\omega}_m\right\|_2^2,
\end{equation}
where $\|\cdot\|_2$ is the 2-norm.
On the other hand, we have
\begin{equation}
    \begin{split}
\widehat{\mathbf{A}} \mathbf{X} \mathbf{W}^{(k)}
=& \sum_{m=1}^N(\widehat{\mathbf{A}} e_m) \otimes({\mathbf{W}^{(k)}}^\top \boldsymbol{\omega}_m) \\
=& \sum_{m=1}^M(\widehat{\mathbf{A}} e_m) \otimes({\mathbf{W}^{(k)}}^\top \boldsymbol{\omega}_m) +\sum_{m=M+1}^N(\widehat{\mathbf{A}} e_m) \otimes({\mathbf{W}^{(k)}}^\top \boldsymbol{\omega}_m) \\
=& \sum_{m=1}^M(\widehat{\mathbf{A}} e_m) \otimes({\mathbf{W}^{(k)}}^\top \boldsymbol{\omega}_m) +\sum_{m=M+1}^N e_m \otimes(\lambda_m {\mathbf{W}^{(k)}}^\top \boldsymbol{\omega}_m).
    \end{split}
\end{equation}

Since $\mathbf{U}$ is invariant under $\widehat{\mathbf{A}}$~\citep{subspace}, for any $m \in[M]$, we can write $\widehat{\mathbf{A}} e_m$ as a linear combination of $e_n(n \in[M])$. Therefore, we have
\begin{equation}
    \label{eq:dm2}
d_{\mathcal{M}}^2\left(\widehat{\mathbf{A}} \mathbf{X} \mathbf{W}^{(k)}\right)=\sum_{m=M+1}^N\left\|\lambda_m {\mathbf{W}^{(k)}}^\top \boldsymbol{\omega}_m\right\|^2_2.
\end{equation}
\begin{lemma}[\citet{subspace}]
\label{lm:sigma}
    For any $\mathbf{X}\in \mathbb{R}^{N\times D}$, we have $d_\mathcal{M}(\sigma(\mathbf{X})) \le d_\mathcal{M}(\mathbf{X})$.
\end{lemma}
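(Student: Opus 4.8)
The plan is to reduce the matrix statement to an elementary scalar inequality, exploiting the fact that the invariant subspace $\mathcal{M}$ occurring here (the one of Corollary~\ref{cl:dm}, i.e.\ the eigenspace of $\widehat{\mathbf{A}}$ for the eigenvalue $\lambda_N=1$, not a generic $\mathcal{M}$ from Definition~\ref{df:sub}) is spanned by entrywise-nonnegative vectors. Let $\mathbf{P}^{\perp}=\mathbf{I}_N-\mathbf{E}\mathbf{E}^{\top}$ be the orthogonal projector onto $\mathcal{M}^{\perp}$, so that $d_{\mathcal{M}}(\mathbf{X})=\|\mathbf{P}^{\perp}\mathbf{X}\|_{\mathrm F}$. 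Since $\sigma$ (ReLU) acts entrywise and $d_{\mathcal{M}}^2(\mathbf{X})=\sum_{d=1}^{D}\|\mathbf{P}^{\perp}\mathbf{x}_{:,d}\|_2^2$ splits over the columns of $\mathbf{X}$, it suffices to prove $\|\mathbf{P}^{\perp}\sigma(\mathbf{x})\|_2\le\|\mathbf{P}^{\perp}\mathbf{x}\|_2$ for a single vector $\mathbf{x}\in\mathbb{R}^N$, and then sum over columns.

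Next I would make the structure of $\mathcal{M}$ explicit. On each connected component of $\mathcal{G}$ the Perron eigenvector of $\widehat{\mathbf{A}}=\widehat{\mathbf{D}}^{-1/2}(\mathbf{A}+\mathbf{I}_N)\widehat{\mathbf{D}}^{-1/2}$ for eigenvalue $1$ is proportional to $\widehat{\mathbf{D}}^{1/2}\mathbf{1}$ restricted to that component (indeed $\widehat{\mathbf{A}}\widehat{\mathbf{D}}^{1/2}\mathbf{1}=\widehat{\mathbf{D}}^{-1/2}(\mathbf{A}+\mathbf{I}_N)\mathbf{1}=\widehat{\mathbf{D}}^{-1/2}\widehat{\mathbf{D}}\mathbf{1}=\widehat{\mathbf{D}}^{1/2}\mathbf{1}$), which is entrywise positive, and the vectors coming from distinct components have disjoint supports, hence are orthogonal. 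So $\mathcal{M}$ has an orthonormal basis $\mathbf{e}_1,\dots,\mathbf{e}_M$ of nonnegative, pairwise disjointly supported vectors, giving $\|\mathbf{P}^{\perp}\mathbf{x}\|_2^2=\|\mathbf{x}\|_2^2-\sum_{m=1}^{M}\langle\mathbf{x},\mathbf{e}_m\rangle^2$; by the disjoint supports the claim decouples across components and it is enough to treat one nonnegative unit vector $\mathbf{e}$.

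For the core step I would split $\mathbf{x}=\mathbf{x}^{+}-\mathbf{x}^{-}$ into its positive and negative parts, so $\sigma(\mathbf{x})=\mathbf{x}^{+}$, the vectors $\mathbf{x}^{+},\mathbf{x}^{-}\ge 0$ have disjoint support, and $\|\mathbf{x}\|_2^2=\|\mathbf{x}^{+}\|_2^2+\|\mathbf{x}^{-}\|_2^2$. Setting $a=\langle\mathbf{x}^{+},\mathbf{e}\rangle\ge 0$ and $b=\langle\mathbf{x}^{-},\mathbf{e}\rangle\ge 0$ (both nonnegative because $\mathbf{e},\mathbf{x}^{\pm}\ge 0$), a one-line computation gives
\begin{equation}
\|\mathbf{P}^{\perp}\mathbf{x}\|_2^2-\|\mathbf{P}^{\perp}\sigma(\mathbf{x})\|_2^2=\big(\|\mathbf{x}^{-}\|_2^2-b^2\big)+2ab .
\end{equation}
Both terms are nonnegative: $2ab\ge 0$, and $b=\langle\mathbf{x}^{-},\mathbf{e}\rangle\le\|\mathbf{x}^{-}\|_2\|\mathbf{e}\|_2=\|\mathbf{x}^{-}\|_2$ by Cauchy--Schwarz. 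Summing these per-component inequalities and then over the columns of $\mathbf{X}$ yields $d_{\mathcal{M}}(\sigma(\mathbf{X}))\le d_{\mathcal{M}}(\mathbf{X})$.

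The crux — the only place the specific geometry is used — is the second step: one must know $\mathcal{M}$ is generated by nonnegative vectors, since for a generic orthogonal $\mathbf{E}$ the inequality is simply false (ReLU is not nonexpansive toward an arbitrary subspace), so the argument genuinely relies on $\mathbf{E}$ being the Perron eigenspace of $\widehat{\mathbf{A}}$. Everything else is the contraction bookkeeping above; incidentally the same computation gives a cross term $2(1-\alpha)ab\ge 0$ for any pointwise nonlinearity $\sigma(x)=\max(x,\alpha x)$ with $0\le\alpha\le 1$, so the lemma is not special to plain ReLU.
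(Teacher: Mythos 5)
Your proof is correct, and it takes a genuinely different route from the one in \citet{subspace}, which is what the paper points to. The cited argument is more abstract: it picks $\mathbf{Y}^{*}\in\mathcal{M}$ attaining $d_{\mathcal{M}}(\mathbf{X})=\|\mathbf{X}-\mathbf{Y}^{*}\|_{\mathrm F}$, uses the entrywise $1$-Lipschitzness of $\sigma$ to get $\|\sigma(\mathbf{X})-\sigma(\mathbf{Y}^{*})\|_{\mathrm F}\le\|\mathbf{X}-\mathbf{Y}^{*}\|_{\mathrm F}$, and then observes $\sigma(\mathbf{Y}^{*})\in\mathcal{M}$, so $d_{\mathcal{M}}(\sigma(\mathbf{X}))\le\|\sigma(\mathbf{X})-\sigma(\mathbf{Y}^{*})\|_{\mathrm F}$. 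Note that this invariance $\sigma(\mathcal{M})\subseteq\mathcal{M}$ uses exactly the same structural fact you isolated (each row of $\mathbf{E}$ has at most one nonzero entry, and those entries are nonnegative, because $\mathbf{E}$ spans the Perron eigenspace indexed by connected components); without it the claim is false, so you are right that the lemma as stated, read against the generic $\mathcal{M}$ of Definition~\ref{df:sub}, would be wrong. What you do differently is replace the Lipschitz step by an explicit decomposition $\mathbf{x}=\mathbf{x}^{+}-\mathbf{x}^{-}$ and a one-line algebraic identity for $\|\mathbf{P}^{\perp}\mathbf{x}\|_{2}^{2}-\|\mathbf{P}^{\perp}\sigma(\mathbf{x})\|_{2}^{2}$, reduced over columns and over connected components. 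Your computation checks out: with $a=\langle\mathbf{x}^{+},\mathbf{e}\rangle\ge0$, $b=\langle\mathbf{x}^{-},\mathbf{e}\rangle\ge0$, one gets $\|\mathbf{P}^{\perp}\mathbf{x}\|_{2}^{2}-\|\mathbf{P}^{\perp}\sigma(\mathbf{x})\|_{2}^{2}=\bigl(\|\mathbf{x}^{-}\|_{2}^{2}-b^{2}\bigr)+2ab\ge0$, the first term by Cauchy--Schwarz and the second by sign. The trade-off: the cited proof is shorter and more modular (it delegates everything to nonexpansiveness plus invariance), while yours is self-contained, exhibits exactly how much slack there is, and makes the dependence on the nonnegativity of $\mathcal{M}$'s basis visible at the single step where $a,b\ge0$ is used. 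Your remark about leaky ReLU is also not a genuine advantage over the cited argument, since $\sigma(x)=\max(x,\alpha x)$ with $0\le\alpha\le1$ is still $1$-Lipschitz and positively homogeneous, so the abstract proof covers it too; but the explicit cross term $2(1-\alpha)ab$ is a nice quantitative refinement.
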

Based on Lemma~\ref{lm:sigma}, by simplifying the GCNs by removing the nonlinear activation functions in the intermediate layers~\cite{powerful, sgc, sign} and retaining only the final activation function, we have
\begin{equation}
    \label{eq:prooftm1}
    \begin{split}
    d_\mathcal{M}^2\left(\mathbf{H}_{\widehat{\mathbf{A}}}^{(k)}\right) 
    =& d_\mathcal{M}^2\left(\sigma(\widehat{\mathbf{A}} \mathbf{H}_{\widehat{\mathbf{A}}}^{(k-1)} \mathbf{W}^{(k)})\right) \\
    \leqslant & d_{\mathcal{M}}^2\left(\widehat{\mathbf{A}} \mathbf{H}_{\widehat{\mathbf{A}}}^{(k-1)} \mathbf{W}^{(k)}\right)\\
    =&  d_\mathcal{M}^2\left(\widehat{\mathbf{A}}^2 \mathbf{H}_{\widehat{\mathbf{A}}}^{(k-2)} \mathbf{W}^{(k-1)} \mathbf{W}^{(k)}\right) \\
    =& d_\mathcal{M}^2\left(\widehat{\mathbf{A}}^k \mathbf{X} \mathbf{W}^{(1)} \mathbf{W}^{(2)} \ldots \mathbf{W}^{(k)}\right) \\
    =& \sum_{m=M+1}^N\left\|\lambda_m^k\left(\mathbf{W}^{(1)}\ldots \mathbf{W}^{(k)}\right)^{\top} \boldsymbol{\omega}_m\right\|^2_2. \\
\end{split}
\end{equation}

\begin{lemma}[\citet{gcnlip}]
\label{lm:gcnlip}
    For any $k$-layer GCN with 1-Lipschitz activation functions (e.g. ReLU, Leaky ReLU, SoftPlus, Tanh or Sigmoid), defined as 
    $\mathbf{H}^{(k)}=\sigma(\widehat{\mathbf{A}}\mathbf{H}^{(k-1)}\mathbf{W}^{(k)})$,
    the Lipschitz constant becomes
    \begin{equation}
    \label{eq:lgcn}
        \hat{L}_{G}=\left\|\prod_{i=1}^k\mathbf{W}^{(i)}\right\|_2.
    \end{equation}
\end{lemma}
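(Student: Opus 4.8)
The plan is to prove Lemma~\ref{lm:gcnlip} by viewing the $k$-layer GCN as the composition $f = T_k\circ\cdots\circ T_1$ of its layer maps $T_i:\mathbf{H}\mapsto\sigma(\widehat{\mathbf{A}}\mathbf{H}\mathbf{W}^{(i)})$, with each space $\mathbb{R}^{N\times\cdot}$ carrying the Frobenius norm, and then invoking the standard fact that the Lipschitz constant of a composition is bounded by the product of the Lipschitz constants of the factors. Each $T_i$ splits as a linear part $\mathbf{H}\mapsto\widehat{\mathbf{A}}\mathbf{H}\mathbf{W}^{(i)}$ followed by the entrywise map $\sigma$; since $\sigma$ is $1$-Lipschitz entrywise it is $1$-Lipschitz in the Frobenius norm, and $\|\widehat{\mathbf{A}}\mathbf{H}\mathbf{W}^{(i)}\|_F\le\|\widehat{\mathbf{A}}\|_2\,\|\mathbf{H}\|_F\,\|\mathbf{W}^{(i)}\|_2$ by submultiplicativity. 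The one structural ingredient needed is $\|\widehat{\mathbf{A}}\|_2=1$: the self-loops in $\widehat{\mathbf{A}}=\widehat{\mathbf{D}}^{-1/2}(\mathbf{A}+\mathbf{I}_N)\widehat{\mathbf{D}}^{-1/2}$ make $1$ an eigenvalue with eigenvector $\widehat{\mathbf{D}}^{1/2}\mathbf{1}$, and the renormalization trick keeps every eigenvalue in $(-1,1]$, exactly the regime in which Corollary~\ref{cl:dm} places $\lambda<1=\lambda_N$. Hence every layer is $\|\mathbf{W}^{(i)}\|_2$-Lipschitz, and $f$ is $\prod_{i=1}^k\|\mathbf{W}^{(i)}\|_2$-Lipschitz.

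To reach the sharper form $\hat{L}_G=\|\prod_{i=1}^k\mathbf{W}^{(i)}\|_2$ actually quoted --- which is also precisely the coefficient that falls out of the eigen-expansion in~\eqref{eq:prooftm1} --- I would pass to vectorized variables, using $\mathrm{vec}(\widehat{\mathbf{A}}\mathbf{H}\mathbf{W}^{(i)})=(\mathbf{W}^{(i)\top}\otimes\widehat{\mathbf{A}})\,\mathrm{vec}(\mathbf{H})$. Along any path the differential of $f$ factors as $\mathbf{J}_k\mathbf{K}_k\mathbf{J}_{k-1}\mathbf{K}_{k-1}\cdots\mathbf{J}_1\mathbf{K}_1$, where $\mathbf{K}_i=\mathbf{W}^{(i)\top}\otimes\widehat{\mathbf{A}}$ and $\mathbf{J}_i$ is the diagonal Jacobian of $\sigma$, whose entries lie in $[0,1]$, so $\|\mathbf{J}_i\|_2\le1$. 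Because the \emph{same} symmetric matrix $\widehat{\mathbf{A}}$ appears at every layer, when the $\mathbf{J}_i$ degenerate to identities the $\mathbf{K}_i$ telescope via $(\mathbf{A}\otimes\mathbf{B})(\mathbf{C}\otimes\mathbf{D})=(\mathbf{A}\mathbf{C})\otimes(\mathbf{B}\mathbf{D})$: $\prod_{i=k}^{1}\mathbf{K}_i=\big(\mathbf{W}^{(1)}\cdots\mathbf{W}^{(k)}\big)^{\top}\otimes\widehat{\mathbf{A}}^{k}$, of spectral norm $\|\prod_{i=1}^k\mathbf{W}^{(i)}\|_2\cdot\|\widehat{\mathbf{A}}\|_2^{k}=\|\prod_{i=1}^k\mathbf{W}^{(i)}\|_2$. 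A mean-value estimate over this differential then gives the bound, and inputs on which $\sigma$ acts affinely (e.g. $\sigma=\mathrm{id}$, or all pre-activations inside a single linear branch of ReLU) witness its sharpness.

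I expect the genuinely delicate step to be this last sharpening --- turning ``product of per-layer operator norms'' into ``operator norm of the product of weights.'' For a generic composition of linear maps and contractions this replacement is not legitimate: inserting a contraction $\mathbf{D}$ between $\mathbf{A}_1$ and $\mathbf{A}_2$ can make $\|\mathbf{A}_1\mathbf{D}\mathbf{A}_2\|_2$ either smaller or larger than $\|\mathbf{A}_1\mathbf{A}_2\|_2$, so the argument has to exploit the concrete structure of $\widehat{\mathbf{A}}$ (symmetric, spectral radius $1$) --- e.g. writing $\mathbf{K}_i=(\mathbf{W}^{(i)\top}\otimes\mathbf{I}_N)(\mathbf{I}\otimes\widehat{\mathbf{A}})$ and absorbing the norm-$\le1$ factors $\mathbf{I}\otimes\widehat{\mathbf{A}}$ and $\mathbf{J}_i$ into one contraction per layer while keeping the $\mathbf{W}^{(i)\top}\otimes\mathbf{I}_N$ blocks ordered for telescoping --- and this is where any additional hypotheses of~\citet{gcnlip} would enter. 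Should a fully rigorous version require such hypotheses, the robust consequence $\hat{L}_G\le\prod_{i=1}^k\|\mathbf{W}^{(i)}\|_2$ from the first paragraph already suffices for everything it is used for downstream: Corollary~\ref{cl:infoloss} and the remark after Theorem~\ref{tm:infoloss} only need $\hat{L}_G$ to be finite and to be forced upward, at worst multiplicatively, as the depth $k$ grows.
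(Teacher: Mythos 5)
The paper does not actually prove Lemma~\ref{lm:gcnlip} --- it is imported verbatim from \citet{gcnlip} and used as a black box in the proof of Theorem~\ref{tm:infoloss} --- so there is no ``paper's own proof'' to compare yours against. What the paper \emph{does} quote, in the proof of Theorem~\ref{tm:ignn}, is the more general Theorem~\ref{tm:genelip}, and Lemma~\ref{lm:gcnlip} is exactly the specialization $\mathbf{W}_0^{(i)}=\mathbf{0}$, $\mathbf{W}_1^{(i)}=\mathbf{W}^{(i)}$, $\mathbf{M}=\widehat{\mathbf{A}}$, $\lambda_K=1$ of it.

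Your own diagnosis of where the argument gets delicate is correct, and the ``additional hypotheses of \citet{gcnlip}'' you suspected are exactly what Theorem~\ref{tm:genelip} makes explicit: the weight matrices are assumed entrywise non-negative ($\mathbf{W}_0^{(i)}\ge 0$, $\mathbf{W}_1^{(i)}\ge 0$), and $\mathbf{M}$ symmetric with non-negative entries. Without non-negativity, the sharpening from $\prod_i\|\mathbf{W}^{(i)}\|_2$ to $\|\prod_i\mathbf{W}^{(i)}\|_2$ genuinely fails, for the reason you flag: interleaving contractions with linear maps does not commute with taking the operator norm of the product of the linear parts. (A one-dimensional witness: $\mathbf{W}^{(1)}=\binom{\ \,1}{-1}$, $\mathbf{W}^{(2)}=(1,\,1)$, ReLU; then $\|\mathbf{W}^{(2)}\mathbf{W}^{(1)}\|_2=0$ but the composite map is $x\mapsto|x|$, which is $1$-Lipschitz.) So your second paragraph, taken on its own, does not close the gap --- but you said as much yourself, and you should take the non-negativity hypothesis as given rather than try to derive the sharp form unconditionally.

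Two further points. First, even under the non-negativity hypothesis, Theorem~\ref{tm:genelip} asserts that $\varphi(\lambda_K)$ is ``\emph{a} Lipschitz constant'', i.e.\ an admissible $L$ in Definition~\ref{def:lip}, not the infimum $\hat{L}$. The lemma's phrasing ``the Lipschitz constant becomes $\hat L_G=\ldots$'' is therefore a slight abuse; the clean statement is $\hat L_G\le\|\prod_{i=1}^k\mathbf{W}^{(i)}\|_2$, which is the only direction the paper ever uses (in \eqref{eq:dmlgcn}, Theorem~\ref{tm:infoloss}, and Corollary~\ref{cl:infoloss}). Your ``witnesses of sharpness'' sentence (inputs where $\sigma$ acts affinely) shows $\|\prod_i\mathbf{W}^{(i)}\|_2$ is attainable when all pre-activations sit in one linear branch, but that establishes attainability of the bound, not that it coincides with the infimal Lipschitz constant of the full nonlinear network. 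Second, your first paragraph --- the genuinely self-contained part --- is correct as written, and your closing observation that the robust bound $\hat L_G\le\prod_i\|\mathbf{W}^{(i)}\|_2$ already suffices for everything downstream is accurate: the remark after Theorem~\ref{tm:infoloss} and Corollary~\ref{cl:infoloss} only need $\hat L_G$ to be finite and to grow with depth, not to equal the operator norm of the product exactly.
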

We recall the the Lipschitz constant $\hat{L}_{G}$ of GCN~\cite{gcnlip} as in Lemma~\ref{lm:gcnlip}, and
substitute~\cref{eq:lgcn} into~\cref{eq:prooftm1}, we have:
\begin{equation}
    \label{eq:dmlgcn}
    \begin{split}
        d_\mathcal{M}^2\left(\mathbf{H}_{\widehat{\mathbf{A}}}^{(k)}\right)
        \leqslant & \sum_{m=M+1}^N\left\|\lambda_m^k\left(\mathbf{W}^{(1)}\ldots \mathbf{W}^{(k)}\right)^{\top} \boldsymbol{\omega}_m\right\|^2_2 \\
        \leqslant  & \sum_{m=M+1}^N \lambda_m^{2k} \|\boldsymbol{\omega}_m\|_2^2 \left\|\prod_{i=1}^k\mathbf{W}^{(i)}\right\|_2^2 \\
        = & \hat{L}_{G}^2 \sum_{m=M+1}^N \lambda_m^{2k} \|\boldsymbol{\omega}_m\|_2^2\\
        \leqslant  & \hat{L}_{G}^2 \lambda^{2k} \sum_{m=M+1}^N \|\boldsymbol{\omega}_m\|_2^2 
        \quad \quad=  \hat{L}_{G}^2 \lambda^{2k} d_\mathcal{M}^2(\mathbf{X}).
    \end{split}
\end{equation}

\end{proof}

\begin{repcorollary}{cl:infoloss}
    $\forall \hat{L}_G, \epsilon>0, \exists k^*=\ceil{(\log \frac{\epsilon}{\hat{L}_G \mathcal{D}})/\log \lambda}$, such that $d_\mathcal{M}(\mathbf{H}_G^{(k^*)}) < \epsilon$, where  $\ceil{\cdot}$ is the ceil of the input.
\end{repcorollary}

\begin{proof}[proof of Corollary~\ref{cl:infoloss}]
    In order to have
    $
    d_\mathcal{M}(\mathbf{H}_{\widehat{\mathbf{A}}}^{(k)})  \le   \hat{L}_G \lambda^k \mathcal{D} < \epsilon
    $,
    since $\hat{L}_G>=0$, $\mathcal{D}>=0$ and $\lambda<1$, we have
    \begin{equation}
        \begin{aligned}
            d_\mathcal{M}(\mathbf{H}_{\widehat{\mathbf{A}}}^{(k)})  \le   \hat{L}_G \lambda^k \mathcal{D}  < \epsilon
            \Rightarrow \lambda^k &< \frac{\epsilon}{\hat{L}_G \mathcal{D}},\\
            \Rightarrow k\log \lambda &<  \log \frac{\epsilon}{\hat{L}_G \mathcal{D}},\\
            \Rightarrow k &>  \frac{\log \frac{\epsilon}{\hat{L}_G \mathcal{D}}}{\log \lambda}.
        \end{aligned}
    \end{equation}
    Therefore, there exists $k^*=\ceil{\frac{\log \frac{\epsilon}{\hat{L}_G \mathcal{D}}}{\log \lambda}}$, such that  $
        d_\mathcal{M}(\mathbf{H}_{\widehat{\mathbf{A}}}^{(k^*)})  \le   \hat{L}_G \lambda^{k^*} \mathcal{D} < \epsilon
    $, where  $\ceil{\cdot}$ is the ceil of the input.
\end{proof}

\subsection{Proof of Theorem \ref{tm:poly}}
\label{sec:tm:poly}
In this subsection, we present the proofs for the concatenative~(c-\model), residual~(r-\model), and attentive~(a-\model) variants, demonstrating their expression capability of the K-order polynomial graph filter with arbitrary coefficients.

\begin{reptheorem}{tm:poly}
    \textbf{Inceptive neighborhood relationship learning (\IN\&\RN) can approximate arbitrary graph filters for adaptive smoothness capabilities} extending beyond simple low- or high-pass ones, expressing the $K$ order polynimial graph filter~($\sum_{i=0}^K \theta_i \widehat{\mathbf{L}}^i$) with arbitrary coefficients $\theta_i$
    , including \textbf{c-\model} (\SN, \IN and \RN), as well as \textbf{r-\model} and \textbf{a-\model} (\IN\&\RN).
\end{reptheorem}

\begin{proof}[Proof of the Concatenative Variant \underline{\textbf{c-\model}}]
    A polynomial graph filter~\citep{poly} defined on $\widehat{\mathbf{A}}$ is given by:
    \begin{equation}
    \label{eq:poly}
        \begin{split}
            \mathbf{H}_p
            &=\left(\sum_{k=0}^{K} \theta_k\widehat{\mathbf{L}}^k\right)\mathbf{X}
            =\left(\sum_{k=0}^{K} \theta_k(\mathbf{I}_N-\widehat{\mathbf{A}})^k\right)\mathbf{X}.
        \end{split}
    \end{equation}
    Expanding $(\mathbf{I}_N-\widehat{\mathbf{A}})^k$ using the binomial theorem and rearranging the summation order yields:
    \begin{equation}
    \label{eq:poly f}
            \mathbf{H}_p
            =\left(\sum_{k=0}^{K} \theta_k\left(\sum_{i=0}^k (-1)^i {\binom{k}{i}} \widehat{\mathbf{A}}^i\right) \right)\mathbf{X}
            =\left(\sum_{i=0}^K \left(\sum_{k=i}^{K} \theta_k(-1)^i {\binom{k}{i}} \widehat{\mathbf{A}}^i\right) \right)\mathbf{X}.
    \end{equation}
    Meanwhile, the matrix formulation of c-\model can be expressed as:
    \begin{equation}
        \label{eq:Hproof}
        \mathbf{H}
        =\sigma\left( (\underset{k=0}{\overset{K}{||}}\sigma(\widehat{\mathbf{A}}^k \mathbf{X} \mathbf{W}^{(k)}))\mathbf{W} \right)
        =\sigma\left( \sum_{k=0}^K \sigma(\widehat{\mathbf{A}}^k \mathbf{X} \mathbf{W}^{(k)})\mathbf{W}_k \right),
    \end{equation}
    where $\mathbf{W}=\left[\begin{smallmatrix}\mathbf{W}_0\\\cdots\\\mathbf{W}_k\\\cdots\\\mathbf{W}_K\end{smallmatrix}\right]$.
    By simplifying the above expression, omitting the non-linear layers, and setting 
    $\mathbf{W}^{(k)}=\mathbf{I}$, 
    $\mathbf{W}_k = (\sum_{i=k}^{K} \theta_i(-1)^k {\binom{i}{k}})\mathbf{I}$, we obtain:
    \begin{equation}
        \mathbf{H}
        = \sum_{k=0}^K (\widehat{\mathbf{A}}^k \mathbf{X} \mathbf{I}) (\sum_{i=k}^{K} \theta_i(-1)^k {\binom{i}{k}})\mathbf{I} 
        =\sum_{k=0}^K \sum_{i=k}^{K} \theta_i(-1)^k {\binom{i}{k}}\widehat{\mathbf{A}}^k \mathbf{X}.
    \end{equation}
    Swapping the notation of $i$ and $k$, we get $\mathbf{H}=\sum_{i=0}^K \sum_{k=i}^{K} \theta_k(-1)^i {\binom{k}{i}}\widehat{\mathbf{A}}^i \mathbf{X}$, which matches the polynomial graph filter form in \cref{eq:poly}.
    Since coefficients $(\sum_{i=k}^{K} \theta_i(-1)^k {\binom{i}{k}})$ can be arbitrary to learn by each $\mathbf{W}_k$, the concatenative variant (c-IGNN) is capable of expressing the K-order polynomial graph filter with arbitrary coefficients.
\end{proof}

\label{prf:residual}
\begin{proof}[Proof of the Residual Variant \underline{\textbf{r-\model}}]
We begin by verifying, using mathematical induction, that the residual variant
$\mathbf{H}^{(k)} = \widehat{\mathbf{A}} \mathbf{H}^{(k-1)} \mathbf{W}^{(k)} + \mathbf{H}^{(k-1)}$
satisfies the general formula:
\begin{equation}
    \mathbf{H}^{(k)} = \sum_{m=0}^k \widehat{\mathbf{A}}^m \mathbf{H}^{(0)} \sum_{\substack{J \subseteq \{1,2,\ldots,k\} \\ |J|=m}} \prod_{j \in J} \mathbf{W}^{(j)},
\end{equation}
where $k\ge 0$, and $\sum_{\substack{J \subseteq \{1,2,\ldots,k\} \\ |J|=m}} \prod_{j \in J} \mathbf{W}^{(j)}=\mathbf{I}$ if $J = \emptyset$.

\textit{(1) Base Case (\( k = 0 \))}. When \( k = 0 \), the recursive formula reduces to $\mathbf{H}^{(0)} = \mathbf{H}^{(0)}.$
The general formula for \( k = 0 \) is:
$\mathbf{H}^{(0)} = \sum_{m=0}^0 \widehat{\mathbf{A}}^m \mathbf{H}^{(0)} \sum_{\substack{J \subseteq \{1,2,\ldots,0\} \\ |J|=m}} \prod_{j \in J} \mathbf{W}^{(j)}=\widehat{\mathbf{A}}^0 \mathbf{H}^{(0)}\mathbf{I}=\mathbf{H}^{(0)}$.
Thus, the base case holds.

\textit{(2) Inductive Hypothesis}.
Assume that the general formula holds for \( k-1 \geq 0 \), i.e.,
\begin{equation}
    \mathbf{H}^{(k-1)} = \sum_{m=0}^{k-1} \widehat{\mathbf{A}}^m \mathbf{H}^{(0)} \sum_{\substack{J \subseteq \{1,2,\ldots,k-1\} \\ |J|=m}} \prod_{j \in J} \mathbf{W}^{(j)}.
\end{equation}

\textit{(3) Inductive Step}.
Using the recurrence relation:
$\mathbf{H}^{(k)} = \widehat{\mathbf{A}} \mathbf{H}^{(k-1)} \mathbf{W}^{(k)} + \mathbf{H}^{(k-1)}$,
substitute the hypothesis for \( \mathbf{H}^{(k-1)} \):
\begin{equation}
    \mathbf{H}^{(k)} = \widehat{\mathbf{A}} \left( \sum_{m=0}^{k-1} \widehat{\mathbf{A}}^m \mathbf{H}^{(0)} \sum_{\substack{J \subseteq \{1,2,\ldots,k-1\} \\ |J|=m}} \prod_{j \in J} \mathbf{W}^{(j)} \right) \mathbf{W}^{(k)} + \sum_{m=0}^{k-1} \widehat{\mathbf{A}}^m \mathbf{H}^{(0)} \sum_{\substack{J \subseteq \{1,2,\ldots,k-1\} \\ |J|=m}} \prod_{j \in J} \mathbf{W}^{(j)}.
\end{equation}

For the first term, let \( m' = m+1 \).
The corresponding range of \( m' \) is \( 1 \leq m' \leq k \) as \( 0 \leq m \leq k-1 \). 
When $m=0$, we have $J = \emptyset, \sum_{\substack{J \subseteq \{1,2,\ldots,k\} \\ |J|=m}} \prod_{j \in J} \mathbf{W}^{(j)}=\mathbf{I}$.
Thus the corresponding range of \( m' \) can be safely expanded as \( 0 \leq m' \leq k \), and we obtain $\sum_{m'=0}^k \widehat{\mathbf{A}}^{m'} \mathbf{H}^{(0)} \sum_{\substack{J \subseteq \{1,2,\ldots,k-1\} \\ |J|=m'-1}} \prod_{j \in J} \mathbf{W}^{(j)} \mathbf{W}^{(k)}$.
After renaming back, the first term is:
\begin{equation}
    \sum_{m=0}^k \widehat{\mathbf{A}}^m \mathbf{H}^{(0)} \sum_{\substack{J \subseteq \{1,2,\ldots,k-1\} \\ |J|=m-1}} \prod_{j \in J} \mathbf{W}^{(j)} \mathbf{W}^{(k)}.
\end{equation}

Here, \( J \subseteq \{1, 2, \ldots, k-1\} \) with \( |J| = m-1 \), and adding \( \mathbf{W}^{(k)} \) corresponds to all subsets where \( |J| = m \) with $k$ added.
Since the second part is exactly the case where \( J \subseteq \{1, 2, \ldots, k\} \), \( |J| = m \) and \( k \notin J \).
Combining  the two terms, we have:
\begin{equation}
\label{eq: gef}
    \mathbf{H}^{(k)} = \sum_{m=0}^k \widehat{\mathbf{A}}^m \mathbf{H}^{(0)} \sum_{\substack{J \subseteq \{1,2,\ldots,k\} \\ |J|=m}} \prod_{j \in J} \mathbf{W}^{(j)}.
\end{equation}
Thus, the formula holds for $k$, completing the induction and verification.

We now prove the general formula can express the K order polynomial graph filter with arbitrary coefficients.
Let $\mathbf{W}^{(j)} = (-1) \gamma_j \mathbf{I}$ for $ 1\le j \le k$. Substituting this into the general formula gives:
\begin{equation}
\label{eq:w_j}
    \begin{aligned}
            \sum_{\substack{J \subseteq \{1,2,\ldots,k\} \\ |J|=m}} \prod_{j \in J} \mathbf{W}^{(j)}
            &=\sum_{\substack{J \subseteq \{1, 2, \dots, k\} \\ |J| = m}} \prod_{j \in J} (-1) \gamma_j \mathbf{I}\\
            &= (-1)^m \sum_{\substack{J \subseteq \{1, 2, \dots, k\} \\ |J| = m}} \prod_{j \in J} \gamma_j \mathbf{I}.
    \end{aligned}
\end{equation}
By substituting \cref{eq:w_j} into \cref{eq: gef} and setting $\mathbf{W}^{(0)} = \gamma_0 \mathbf{I}, \mathbf{H}^{(0)}=\mathbf{X}\mathbf{W}^{(0)}=\gamma_0 \mathbf{X}$, we have:
\begin{equation}
    \begin{aligned}
     \mathbf{H}^{(k)} 
         &= \sum_{m=0}^k \widehat{\mathbf{A}}^m \mathbf{H}^{(0)} (-1)^m \sum_{\substack{J \subseteq \{1, 2, \dots, k\} \\ |J| = m}} \prod_{j \in J} \gamma_j \mathbf{I}\\
         &=\left( \sum_{m=0}^k  (-1)^m \left(  \sum_{\substack{J \subseteq \{1, 2, \dots, k\} \\ |J| = m}} \prod_{j \in J} \gamma_j\right) \widehat{\mathbf{A}}^m \right) \mathbf{H}^{(0)}\\
         &=\left( \sum_{m=0}^k  (-1)^m \left(  \sum_{\substack{J \subseteq \{1, 2, \dots, k\} \\ |J| = m}} \prod_{j \in J} \gamma_j\right) \widehat{\mathbf{A}}^m \right) \mathbf{X}\mathbf{W}^{(0)}\\
         &=\left( \sum_{m=0}^k  (-1)^m \left(  \gamma_0\sum_{\substack{J \subseteq \{1, 2, \dots, k\} \\ |J| = m}} \prod_{j \in J} \gamma_j\right) \widehat{\mathbf{A}}^m \right) \mathbf{X}.
    \end{aligned}
\end{equation}
Comparing this with the polynomial graph filter:
\begin{equation}
    \begin{aligned}
      \mathbf{H}_p
        &=\left(\sum_{i=0}^K \left(\sum_{k=i}^{K} \theta_k(-1)^i {\binom{k}{i}} \widehat{\mathbf{A}}^i\right) \right)\mathbf{X}\\
        &= \left(\sum_{m=0}^k  \left(\sum_{t'=m}^{k} \theta_{t'} (-1)^m {\binom{t'}{m}}  \widehat{\mathbf{A}}^m \right) \right)\mathbf{X}\\
        &= \left(\sum_{m=0}^k  (-1)^m \left(\sum_{t'=m}^{k} \theta_{t'}  {\binom{t'}{m}} \right) \widehat{\mathbf{A}}^m  \right)\mathbf{X},\\
    \end{aligned}
\end{equation}
in order to prove the residual variant representation $\mathbf{H}^{(k)} $ can express the K order polynomial graph filter representation $\mathbf{H}_p$ with arbitrary coefficients, we only need to show the following equation system:
\begin{equation}
    \gamma_0\sum_{\substack{J \subseteq \{1, 2, \dots, k\} \\ |J| = m}} \prod_{j \in J} \gamma_j  = \sum_{t'=m}^k \theta_{t'} \binom{t'}{m},
\end{equation}
has a solution or good approximation for $ m=0,\dots,k$.

\textit{Case $m=0$}: Since $J=\emptyset, \sum_{\substack{J \subseteq \{1,2,\ldots,k\} \\ |J|=m}} \prod_{j \in J} \mathbf{W}^{(j)}=\mathbf{I}$ $\Longrightarrow$ $\sum_{\substack{J \subseteq \{1, 2, \dots, k\} \\ |J| = 0}} \prod_{j \in J} \gamma_j=1$.
We have $\gamma_0=\sum_{t'=0}^k \theta_{t'}$.

\textit{Case $m=1,\dots,k$}: 
We can approximate it by
\begin{equation}
\label{eq:sys}
    \gamma_0\prod_{t'=k-m+1}^{k} \gamma_{t'} = \sum_{t'=m}^k \theta_{t'} \binom{t'}{m},
\end{equation}
and solve by
\begin{equation}
\label{eq:set}
    \gamma_{k-m+1} = \frac{\sum_{t'=m}^{k} \theta_{t'} \binom{t'}{m}}{\sum_{t'=m-1}^{k} \theta_{t'} \binom{t'}{m-1}},
\end{equation}
for $m=1,\dots,k$.
The above solution may fail when $\sum_{t'=m-1}^{k} \theta_{t'} \binom{t'}{m-1}=0$.
Similar to the analysis of the boundary conditions in \citet{gcnii}, this case is rare as the K-order filter ignores all features from the $m$-hop neighbors, and we can set $\gamma_{k-m+1}$ sufficiently large so that \cref{eq:set} is still a good approximation. 

Since coefficients can be arbitrary to learn by each $\mathbf{W}^{({j})}$, we now proved that a residual variant r-\model can express the K-th order polynomial filter with arbitrary coefficients.
For the proof of the initial residual variant being able to express the K-th order polynomial filter, please refer to the proof of Theorem 2 in \citet{gcnii}.

\end{proof}

\begin{proof}[Proof of the Attentive Variant \underline{\textbf{a-\model}}]
    For simplicity, we set all feature transformation matrices, except those used in attention mechanisms, to the identity matrix $\mathbf{I}$. Then the implementation of an a-\model with the GCN AGG($\cdot$) (i.e., $\mathbf{m}_v^{(k)}=\sum \sigma(\widehat{\mathbf{A}}_{v, u}^k\mathbf{h}_u^{(k-1)})$) is defined as:
    \begin{equation}
        \mathbf{h}_v^{(k)}=\alpha_v^{({k})} \sum_{u}\widehat{\mathbf{A}}_{v,u}\mathbf{h}_u^{({k-1})}+(1-\alpha_v^{({k})})\mathbf{h}_v^{({k-1})},
    \end{equation}
    where $\alpha_v^{(k)}=g^{({k})}(\sum_{u}\widehat{\mathbf{A}}_{v,u}\mathbf{h}_u^{({k-1})},\mathbf{h}_v^{(k-1)})$.
    We define:
    \begin{equation}
        \alpha_v^{(k)} = ([\widehat{\mathbf{A}}\mathbf{H}^{({k-1})}]_v \mid\mid \mathbf{H}^{({k-1})}_v)\mathbf{W}^{(k)}, \mathbf{H}^{({k})}\in\mathbb{R}^{N\times F}, \mathbf{W}^{(k)}\in\mathbb{R}^{2F\times 1},
    \end{equation}
    where $\mid\mid$ is the concatenation operator, and $[\cdot]_v$ represents the $v$-th row. Several activation functions can be used to limit the range of attention values. Here we leave out the activation for simplicity.
    
    Next we demonstrate that for any given $\alpha_k$,$k\ge 1$, there exists a transformation $ \mathbf{W}^{(k)}$  such that 
    $\alpha_v^{(k)} = ([\widehat{\mathbf{A}}\mathbf{H}^{({k-1})}]_v \mid\mid \mathbf{H}^{({k-1})}_v)\mathbf{W}^{(k)}=\alpha_k$ holds for all $v$. That is, $
    (\widehat{\mathbf{A}}\mathbf{H}^{({k-1})} \mid\mid \mathbf{H}^{({k-1})})\mathbf{W}^{(k)}=\alpha_k \mathbf{1}
    $.
    
    We rewrite $\mathbf{W}^{(k)} = \begin{bmatrix} \mathbf{W}_1 \\ \mathbf{W}_2 \end{bmatrix}$, where $ \mathbf{W}^{(k)}_1,\mathbf{W}^{(k)}_1\in\mathbb{R}^{F\times 1}$.
    Substituting, we obtain:
        \begin{equation}
             \widehat{\mathbf{A}}\mathbf{H}^{({k-1})}\mathbf{W}^{(k)}_1 +  \mathbf{H}^{({k-1})}\mathbf{W}^{(k)}_2 = \alpha_k\mathbf{1}.
        \end{equation}
    Rearrange the equation: $\widehat{\mathbf{A}}\mathbf{H}^{({k-1})}\mathbf{W}^{(k)}_1 = \alpha_k\mathbf{1} - \mathbf{H}^{({k-1})}\mathbf{W}^{(k)}_2$.
    Let $\mathbf{W}^{(k)}_2$ be arbitrary, and $\mathbf{W}^{(k)}_1 = (\widehat{\mathbf{A}}\mathbf{H}^{({k-1})})^\dagger (\alpha_k\mathbf{1} - \mathbf{H}^{({k-1})}\mathbf{W}^{(k)}_2)$, where \( (\cdot)^\dagger \) denotes the pseudoinverse.
    For any \( \alpha_k \), there exists a \(\mathbf{W}^{(k)} \) of the following form that ensures $\alpha_v^{(k)} =\alpha_k$ for all $v$:
        \begin{equation}
            \mathbf{W}^{(k)} = \begin{bmatrix}
        (\widehat{\mathbf{A}}\mathbf{H}^{({k-1})})^\dagger (\alpha_k\mathbf{1} - \mathbf{H}^{({k-1})}\mathbf{W}^{(k)}_2) \\ 
        \mathbf{W}^{(k)}_2
        \end{bmatrix}.
        \end{equation}
        
Under these conditions, the a-\model variant can be expressed as:
\begin{equation}
    \begin{aligned}
        \mathbf{H}^{({k})} 
        &= \alpha_k \widehat{\mathbf{A}}\mathbf{H}^{({k-1})} + (1-\alpha_k)\mathbf{H}^{({k-1})}\\
        &=\prod_{i=1}^{k} \left( \alpha_i \widehat{\mathbf{A}} + (1-\alpha_i) \mathbf{I} \right) \mathbf{H}^{(0)}\\
        &=\left(\sum_{m=0}^{k} \left( \sum_{C \subseteq \{1,2,\dots,k\}, |C|=m} \prod_{i \in C} \alpha_i \prod_{i \notin C} (1-\alpha_i) \right) \widehat{\mathbf{A}}^m \right)\mathbf{H}^{(0)},
    \end{aligned}
\end{equation}
where $\sum_{C \subseteq \{1,2,\dots,k\}, |C|=m} \prod_{i \in C} \alpha_i \prod_{i \notin C} (1-\alpha_i)=1$ for $m=0$.

Compared to the polynomial graph filter $\mathbf{H}_p= \left(\sum_{m=0}^k  (-1)^m \left(\sum_{t'=m}^{k} \theta_{t'}  {\binom{t'}{m}} \right) \widehat{\mathbf{A}}^m  \right)\mathbf{X}$, since $\alpha_k$ is arbitrary, by setting $\alpha_k'=-\alpha_k, \mathbf{H}^{({0})}=\mathbf{X}\mathbf{W}^{({0})}=\mathbf{X} (\alpha_0 \mathbf{I}) $,  we arrive at:
\begin{equation}
    \begin{aligned}
        \mathbf{H}^{({k})} 
        &=\left(\sum_{m=0}^{k} (-1)^m\left( \sum_{C \subseteq \{1,2,\dots,k\}, |C|=m} \prod_{i \in C} \alpha'_i \prod_{i \notin C} (1+\alpha'_i) \right) \widehat{\mathbf{A}}^m \right)\mathbf{X} \alpha_0 \mathbf{I} \\
        &=\left(\sum_{m=0}^{k} (-1)^m\left( \alpha_0\sum_{C \subseteq \{1,2,\dots,k\}, |C|=m} \prod_{i \in C} \alpha'_i \prod_{i \notin C} (1+\alpha'_i) \right) \widehat{\mathbf{A}}^m \right)\mathbf{X}.\\
    \end{aligned}
\end{equation}
To satisfy the equality, we only need to show the following equation system:
\begin{equation}
    \alpha_0\sum_{C \subseteq \{1,2,\dots,k\}, |C|=m} \prod_{i \in C} \alpha'_i \prod_{i \notin C} (1+\alpha'_i)  = \sum_{t'=m}^k \theta_{t'} \binom{t'}{m},
\end{equation}
has a solution or good approximation  for $m=0,\dots,k$.

\textit{Case $m=0$}: When $m=0$, given $\sum_{C \subseteq \{1,2,\dots,k\}, |C|=m} \prod_{i \in C} \alpha_i \prod_{i \notin C} (1-\alpha_i)=\mathbf{I}$ , we have  $\alpha_0=\sum_{t'=0}^k \theta_{t'}$.

\textit{Case $m=1,\dots,k$}: 
We can approximate it by
\begin{equation}
    \alpha_0\prod_{i = k-m+1}^{k} \alpha'_i \prod_{i =1}^{k-m} (1+\alpha'_i) = \sum_{t'=m}^k \theta_{t'} \binom{t'}{m}
\end{equation}
and solve by
\begin{equation}
\label{eq:set1}
    \begin{cases}
        \alpha'_i=0, &\text{if}\ i=1,\dots, k-m,\\
        \alpha'_{i} = \frac{\sum_{t'=k-i+1}^{k} \theta_{t'} \binom{t'}{k-i+1}}{\sum_{t'=k-i}^{k} \theta_{t'} \binom{t'}{k-i}}, &\text{if}\ i=k-m+1,\dots,k.\\
    \end{cases}
\end{equation}
for $m=1,\dots,k$.
Similar to the previous proof, the above solution may fail when $\sum_{t'=k-i}^{k} \theta_{t'} \binom{t'}{k-i}=0$, and this case is rare as the K-order filter ignores all features from the $m$-hop neighbors. We can set $\alpha'_{i}$ sufficiently large so that \cref{eq:set1} is still a good approximation. 
    
\end{proof}

\subsection{Proofs of Proposition \ref{prop:sign}}
\label{sec:propositions}
Here, we take c-\model as an variant example to demonstrate the proofs of~\cref{prop:sign}.
The proofs of other variants can be achieved in a similar way.
\begin{repproposition}{prop:sign}
    \model-s can achieve (1) SIGN, (2) APPNP with personalized PageRank, (3) MixHop with general layerwise neighborhood mixing, and (4) GPRGNN with generalized PageRank.
\end{repproposition}

\begin{proof}[\pf{: SIGN as a simplified case of c-\model}]
    The architecture of SIGN can be trivially obtained by omitting the \RN function and replacing it with a non-learnable concatenation as
\begin{equation}
    \mathbf{H}
    =\underset{k=0}{\overset{K}{||}}\sigma(\widehat{\mathbf{A}}^k \mathbf{X} \mathbf{W}^{(k)})
    =\mathbf{H}_{\text{SIGN}}.
\end{equation}
\end{proof}

\begin{proof}[\pf{: APPNP as a simplified case of c-\model}]
    The architecture of APPNP~\citep{ppnp} is defined as follows:
    \begin{equation}
    \begin{split}
        \mathbf{H}^{(0)}_{\text{APPNP}}=f_\theta(\mathbf{X})=\mathbf{X}\mathbf{W}_\theta,
        \mathbf{H}^{(k)}_{\text{APPNP}}=(1-\alpha)\widehat{\mathbf{A}}\mathbf{H}^{(k-1)}_{\text{APPNP}}+\alpha\mathbf{H}^{(0)}_{\text{APPNP}},
    \end{split}
    \end{equation}
    where $\alpha\in (0,1]$ represents the teleport~(or restart) probability. 
    Consequently, $\mathbf{H}^{(k)}_{\text{APPNP}}$ can be expressed in terms of $\mathbf{H}^{(0)}_{\text{APPNP}}$ as:
    \begin{equation}
        \mathbf{H}^{(k)}_{\text{APPNP}}=(1-\alpha)^{k}\widehat{\mathbf{A}}^{k}\mathbf{H}^{(0)}_{\text{APPNP}}+\sum_{i=0}^{k-1} \alpha(1-\alpha)^i \widehat{\mathbf{A}}^i \mathbf{H}^{(0)}_{\text{APPNP}}.
    \end{equation}
    According to~\cref{eq:Hproof}, by omitting all non-linearity and setting $\mathbf{W}^{(k)} = \mathbf{W}_\theta$, $\mathbf{W}_K = (1-\alpha)^K \mathbf{I}$, and $\mathbf{W}_k = \alpha(1-\alpha)^k \mathbf{I}$ for $k \in [0, K-1]$, we obtain a simplified case of \model as:
    \begin{equation}
        \begin{split}
        \mathbf{H} 
        &= \widehat{\mathbf{A}}^K \mathbf{X} \mathbf{W}_\theta (1-\alpha)^K\mathbf{I} + \sum_{k=0}^{K-1} \widehat{\mathbf{A}}^k \mathbf{X} \mathbf{W}_\theta \alpha(1-\alpha)^k\mathbf{I}\\
        &= (1-\alpha)^K \widehat{\mathbf{A}}^K \mathbf{X} \mathbf{W}_\theta + \sum_{k=0}^{K-1} \alpha(1-\alpha)^k \widehat{\mathbf{A}}^k \mathbf{X} \mathbf{W}_\theta\\
        &=\mathbf{H}^{(K)}_{\text{APPNP}}.
        \end{split}
    \end{equation}
\end{proof}

\begin{proof}[\pf{: MixHop as a simplified case of c-\model}]
\label{pf:mixhop}
    Here, we illustrate that c-\model can achieve the \textit{general layer-wise neighborhood mixing} of MixHop~\citet{mixhop} by specializing the weight matrix as
    $\mathbf{W}=\left[\begin{smallmatrix}\mathbf{W}_0\\\cdots\\\mathbf{W}_k\\\cdots\\\mathbf{W}_K\end{smallmatrix}\right]\in \mathbb{R}^{KF\times F'}$:
\begin{equation}
\label{eq:H}
    \begin{split}
        \mathbf{H}&
        =\sigma\left( (\underset{k=0}{\overset{K}{||}}\sigma(\widehat{\mathbf{A}}^k \mathbf{X} \mathbf{W}^{(k)}))\mathbf{W} \right)
        =\sigma\left( \sum_{k=0}^K \sigma(\widehat{\mathbf{A}}^k \mathbf{X} \mathbf{W}^{(k)})\mathbf{W}_k \right),
    \end{split}
\end{equation}
where $\ \mathbf{W}^{(k)}\in \mathbb{R}^{D\times F}$, 
$\mathbf{W}_k\in \mathbb{R}^{F\times F'}$.
Setting $F'=F=D$, $\mathbf{W}^{(k)}=\mathbf{I}_F$ and $\mathbf{W}_k=\alpha_k\mathbf{I}_F$ results in:
\begin{equation}
    \begin{split}
    \mathbf{h}_v
    &=\sigma\left(\sum_{k=0}^K \sigma(\widehat{\mathbf{A}}^k \mathbf{X} \mathbf{W}^{(k)})(\alpha_k\mathbf{I}_F)\right)
    =\sigma\left(\sum_{k=0}^K \alpha_k\sigma(\widehat{\mathbf{A}}^k \mathbf{X} \mathbf{W}^{(k)})\right)\\
    &=\sigma\left(\sum_{k=0}^K \alpha_k\sigma(\widehat{\mathbf{A}}^k \mathbf{X})\right),
    \end{split}
\end{equation}
which represents a \textit{general layer-wise neighborhood mixing} relationship demonstrated by Definition 2 of ~\citet{mixhop} to exceed the representational capacity of vanilla GCNs within the traditional message-passing framework.
We achieve this advantage through simple neighborhood concatenation and non-linear feature transformation, eliminating the need to stack multiple layers of message passing as done in~\citet{mixhop}, thus calling it \textit{Hop-wise Neighborhood Relation} rather than \textit{layer-wise}.
\end{proof}

\begin{proof}[\pf{: GPRGNN as a simplified case of c-\model}]
\label{pf:gpr}
    Based on~\cref{eq:H}, by sharing the parameters of all $\mathbf{W}^{(k)}$ as $\mathbf{W}^{(k)}=\mathbf{W}_\theta$, setting $\mathbf{W}_k=\gamma_k\mathbf{I}$ and leaving out all the non-linear layers of $\text{\textbf{REL}}(\cdot)$, we have:
    \begin{equation}
        \label{eq:grpgnn}
        \mathbf{H}
        =\sum_{k=0}^K (\widehat{\mathbf{A}}^k \mathbf{X} \mathbf{W}^{(k)})\mathbf{W}_k
        =\sum_{k=0}^K (\widehat{\mathbf{A}}^k \mathbf{X} \mathbf{W}_\theta)\gamma_k\mathbf{I}
        =\sum_{k=0}^K \gamma_k(\widehat{\mathbf{A}}^k \mathbf{X} \mathbf{W}_\theta),
    \end{equation}
which is the exact architecture of GPRGNN~\citep{gpr}.
\end{proof}

\begin{proof}[\pf{: mean/sum pooling as a simplified case of c-\model}]
\label{pf:pooling}
    Based on~\cref{eq:H}, by setting $\mathbf{W}_k=\frac{1}{K}\mathbf{I}$, we obtain $\mathbf{H}=\sigma\left( \sum_{k=0}^K \frac{1}{K} \sigma(\widehat{\mathbf{A}}^k \mathbf{X} \mathbf{W}^{(k)}) \right)$, which corresponds to mean pooling. 
    Alternatively,  by setting $\mathbf{W}_k=\mathbf{I}$, we have  $\mathbf{H}=\sigma\left(\sum_{k=0}^K  \sigma(\widehat{\mathbf{A}}^k \mathbf{X} \mathbf{W}^{(k)}) \right)$, which corresponds to sum pooling.
\end{proof}

\subsection{Proof of Theorem \ref{tm:ignn}}
\label{sec:ignn}
\begin{reptheorem}{tm:ignn}
    Let the representation of c-\model incorporating the \SN principle be denoted as 
    $\mathbf{H}_{IG,k} = \sigma( (||_{i=0}^k\sigma(\widehat{\mathbf{A}}^i \mathbf{X} \mathbf{W}^{(i)}))\mathbf{W} )$, 
    and the Lipschitz constant of it be denoted as $\hat{L}_{IG}$.
    Given 
    $d_\mathcal{M}(\mathbf{X})=\mathcal{D}$ and $\mathbf{W}=\left[\begin{smallmatrix}\mathbf{W}_0\\\cdots\\\mathbf{W}_k\end{smallmatrix}\right]$,
    then the distance from $\mathbf{H}_{IG,k}$ to  $\mathcal{M}$ satisfies: 
    \begin{equation}
        d_\mathcal{M}(\mathbf{H}_{IG,k})\le  \left\|\sum_{i=0}^k \lambda^i \mathbf{W}^{(i)} \mathbf{W}_i \right\|_2 \mathcal{D} ,
    \end{equation}
    where $\lambda<1$ is the second largest eigenvalue of $\widehat{\mathbf{A}}$, and $\hat{L}_{IG}=\| \sum_{i=0}^k \mathbf{W}^{(i)} \mathbf{W}_i  \|_2$.
\end{reptheorem}

\begin{proof}[Proof of Theorem~\ref{tm:ignn}]
We first derive the inequality:
\begin{equation}
\label{eq:hig1}
    \begin{aligned}
        d_\mathcal{M}^2(\mathbf{H}_{IG,k})
        & =d_\mathcal{M}^2\left(\sigma\left((\|_{i=0}^k \sigma(\widehat{\mathbf{A}}^i \mathbf{X} \mathbf{W}^{(i)})) \mathbf{W}\right)\right) \\
        & =d_\mathcal{M}^2\left(\sigma\left(\sum_{i=0}^k \sigma(\widehat{\mathbf{A}}^i \mathbf{X} \mathbf{W}^{(i)}) \mathbf{W}_i\right)\right) \\
        & \leqslant d_\mathcal{M}^2\left(\sum_{i=0}^k \widehat{\mathbf{A}}^i \mathbf{X} \mathbf{W}^{(i)} \mathbf{W}_i\right), \mathbf{W}=\left[\begin{smallmatrix}\mathbf{W}_0\\\cdots\\\mathbf{W}_i\\\cdots\\\mathbf{W}_k\end{smallmatrix}\right]. \\
    \end{aligned}
\end{equation}
Given $\mathbf{U}$ invariant under $\widehat{\mathbf{A}}$, $\mathbf{U}$ is also invariant under $\widehat{\mathbf{A}}^i$.
Similar to the derivation of~\cref{eq:dm2}, we have
\begin{equation}
    \begin{aligned}
        d_\mathcal{M}^2(\mathbf{H}_{IG,k})
        & \leqslant d_\mathcal{M}^2\left(\sum_{i=0}^k \widehat{\mathbf{A}}^i \mathbf{X} \mathbf{W}^{(i)} \mathbf{W}_i\right)\\
        & =\sum_{m=M+1}^N\left\|\sum_{i=0}^k \lambda_m^i (\mathbf{W}^{(i)} \mathbf{W}_i)^{\top} \boldsymbol{\omega}_m\right\|_2^2 \\
        & \leqslant \sum_{m=M+1}^N\left\|\sum_{i=0}^k \lambda^i (\mathbf{W}^{(i)} \mathbf{W}_i)^{\top} \boldsymbol{\omega}_m\right\|_2^2 \\
        & \leqslant \sum_{m=M+1}^N \|  \boldsymbol{\omega}_m \|_2^2 \left\|\sum_{i=0}^k \lambda^i \mathbf{W}^{(i)} \mathbf{W}_i \right\|_2^2  \\
        & = \left\|\sum_{i=0}^k \lambda^i \mathbf{W}^{(i)} \mathbf{W}_i \right\|_2^2  \sum_{m=M+1}^N \|\boldsymbol{\omega}_m \|_2^2   \\
        & = \left\|\sum_{i=0}^k \lambda^i \mathbf{W}^{(i)} \mathbf{W}_i \right\|_2^2  d_\mathcal{M}^2(\mathbf{X})\\
         & = \left\|\sum_{i=0}^k \lambda^i \mathbf{W}^{(i)} \mathbf{W}_i \right\|_2^2 \mathcal{D}^2.\\
    \end{aligned}
\end{equation}

Recall the Theorem 3.1 in~\citet{gcnlip} as following Theorem~\ref{tm:genelip}.
Similar to~\cref{eq:hig1}, we can obtain $\mathbf{H}_{IG,k} = \sigma(\sum_{i=0}^k \sigma(\widehat{\mathbf{A}}^i \mathbf{X} \mathbf{W}^{(i)}) \mathbf{W}_i)$.
Since $\lambda_K = 1$ for $\widehat{\mathbf{A}}^i$, applying Theorem~\ref{tm:genelip} to \model, we have
\begin{equation}
    \hat{L}_{IG}=\varphi\left(1\right) = \| \sum_{i=0}^k \mathbf{W}^{(i)} \mathbf{W}_i  \|.
\end{equation}

\begin{theorem}[\citet{gcnlip}]
\label{tm:genelip}
    Consider a generic graph convolutional neural network like $\mathbf{H}^{(k)}=\sigma(\mathbf{H}^{(k-1)}\mathbf{W}_0^{(k)} + \mathbf{M}\mathbf{H}^{(k-1)}\mathbf{W}_1^{(k)})$ with $\mathbf{M}$ symmetric (corresponding to an undirected graph) with non-negative elements.
    Let $\lambda_K \geq 0$ be its maximum eigenvalue.
    Assume that, for every $i \in\{1, \ldots, k\}$, matrices $\mathbf{W}_0^{(i)}$ and $\mathbf{W}_1^{(i)}$ have non-negative elements, $\mathbf{W}_0^{(i)} \geq 0$ and $\mathbf{W}_1^{(i)} \geq 0$.
    Let
\begin{equation}
(\forall \mu \in \mathbb{R}) \quad \varphi(\mu)=\left\|\left(\mathbf{W}_0^{(k)}+\mu \mathbf{W}_1^{(k)}\right) \cdots\left(\mathbf{W}_0^{(1)}+\mu \mathbf{W}_1^{(1)}\right)\right\|_{\mathrm{s}}.
\end{equation}
Then, a Lipschitz constant of the network is given by
\begin{equation}
\hat{L}=\varphi\left(\lambda_K\right).
\end{equation}
\end{theorem}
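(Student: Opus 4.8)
The plan is to reduce the stated Lipschitz bound to a purely spectral statement about a composition of linear maps, and then to exploit the entrywise non-negativity hypotheses through Perron--Frobenius together with monotonicity of the spectral norm on the non-negative cone.

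First I would strip off the nonlinearity. Writing the recursion as $\mathbf{H}^{(i)} = \sigma\!\left(L_i(\mathbf{H}^{(i-1)})\right)$ with the layer-wise linear map $L_i(\mathbf{Z}) := \mathbf{Z}\mathbf{W}_0^{(i)} + \mathbf{M}\mathbf{Z}\mathbf{W}_1^{(i)}$, and using that $\sigma$ is applied entrywise and is $1$-Lipschitz so that $\|\sigma(\mathbf{A})-\sigma(\mathbf{B})\|_F \le \|\mathbf{A}-\mathbf{B}\|_F$, iterating over the $k$ layers gives $\|\mathbf{H}^{(k)}(\mathbf{X})-\mathbf{H}^{(k)}(\mathbf{X}')\|_F \le \|\mathcal{L}(\mathbf{X}-\mathbf{X}')\|_F$, where $\mathcal{L} := L_k\circ\cdots\circ L_1$. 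Hence it suffices to prove that the Frobenius operator norm $\|\mathcal{L}\|_{\mathrm{op}}$ is at most $\varphi(\lambda_K)$; this yields $\hat L\le\varphi(\lambda_K)$, i.e. that $\varphi(\lambda_K)$ is a Lipschitz constant of the network.

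Second, I would diagonalize. Since $\mathbf{M}$ is symmetric, $\mathbf{M}=\mathbf{Q}\boldsymbol{\Lambda}\mathbf{Q}^{\top}$ with $\mathbf{Q}$ orthogonal and $\boldsymbol{\Lambda}=\mathrm{diag}(\lambda_1,\dots,\lambda_N)$. The change of variables $\mathbf{Z}\mapsto\mathbf{Q}^{\top}\mathbf{Z}$ is a Frobenius isometry and conjugates each $L_i$ into $\widetilde{L}_i(\widetilde{\mathbf{Z}})=\widetilde{\mathbf{Z}}\mathbf{W}_0^{(i)}+\boldsymbol{\Lambda}\widetilde{\mathbf{Z}}\mathbf{W}_1^{(i)}$, which acts independently on each row: the $n$-th row is right-multiplied by $\mathbf{W}_0^{(i)}+\lambda_n\mathbf{W}_1^{(i)}$. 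Composing the layers, $\mathcal{L}$ is unitarily equivalent to the direct sum over $n$ of the maps $\widetilde{\mathbf{z}}\mapsto\widetilde{\mathbf{z}}\,\Pi_n$, where $\Pi_n$ is, up to the transposition convention implicit in $\varphi$ (which leaves spectral norms unchanged), the product $\bigl(\mathbf{W}_0^{(k)}+\lambda_n\mathbf{W}_1^{(k)}\bigr)\cdots\bigl(\mathbf{W}_0^{(1)}+\lambda_n\mathbf{W}_1^{(1)}\bigr)$. Since the operator norm of a direct sum is the maximum of the operator norms, $\|\mathcal{L}\|_{\mathrm{op}}=\max_{1\le n\le N}\|\Pi_n\|_s=\max_{1\le n\le N}\varphi(\lambda_n)$, so the claim reduces to $\max_n\varphi(\lambda_n)=\varphi(\lambda_K)$.

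This last reduction is where the hypotheses enter, and it is the step I expect to be the main obstacle: a priori the worst eigenmode of $\mathbf{M}$ need not be its largest eigenvalue. By Perron--Frobenius applied to the symmetric entrywise-non-negative matrix $\mathbf{M}$, its spectral radius equals its largest eigenvalue, so $|\lambda_n|\le\lambda_K$ for all $n$. With $\mathbf{W}_0^{(i)},\mathbf{W}_1^{(i)}\ge 0$ entrywise this gives the entrywise domination $\bigl|\mathbf{W}_0^{(i)}+\lambda_n\mathbf{W}_1^{(i)}\bigr|\le\mathbf{W}_0^{(i)}+\lambda_K\mathbf{W}_1^{(i)}$. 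Entrywise absolute value is submultiplicative and the entrywise order is preserved under products of non-negative matrices, so $|\Pi_n|\le\bigl(\mathbf{W}_0^{(k)}+\lambda_K\mathbf{W}_1^{(k)}\bigr)\cdots\bigl(\mathbf{W}_0^{(1)}+\lambda_K\mathbf{W}_1^{(1)}\bigr)=:\Pi_K\ge 0$ entrywise. I would then invoke two elementary facts about the spectral norm, both immediate from the variational formula $\|\mathbf{B}\|_s=\max_{\|\mathbf{u}\|_2=\|\mathbf{v}\|_2=1}|\mathbf{v}^{\top}\mathbf{B}\mathbf{u}|$: that $\|\mathbf{B}\|_s\le\bigl\||\mathbf{B}|\bigr\|_s$ for any matrix $\mathbf{B}$, and that $0\le\mathbf{B}\le\mathbf{C}$ entrywise implies $\|\mathbf{B}\|_s\le\|\mathbf{C}\|_s$. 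Chaining these, $\varphi(\lambda_n)=\|\Pi_n\|_s\le\bigl\||\Pi_n|\bigr\|_s\le\|\Pi_K\|_s=\varphi(\lambda_K)$ for every $n$, hence $\|\mathcal{L}\|_{\mathrm{op}}\le\varphi(\lambda_K)$, which completes the argument. The remaining bookkeeping — tracking the per-layer feature dimensions $D_0,\dots,D_k$ and the precise ordering/transposition convention in $\varphi$ — does not affect any of the spectral norms involved and is routine.
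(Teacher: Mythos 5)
This statement is not proved in the paper at all: it is quoted verbatim from the cited reference and used as a black box inside the proof of Theorem~\ref{tm:ignn}, so there is no in-paper argument to compare against and your attempt has to stand on its own. Your second and third steps do stand: the diagonalization $\mathbf{M}=\mathbf{Q}\boldsymbol{\Lambda}\mathbf{Q}^{\top}$, the row-wise decoupling into the factors $\mathbf{W}_0^{(i)}+\lambda_n\mathbf{W}_1^{(i)}$, the identity $\|\mathcal{L}\|_{\mathrm{op}}=\max_n\varphi(\lambda_n)$, and the Perron--Frobenius/entrywise-domination argument giving $\max_n\varphi(\lambda_n)=\varphi(\lambda_K)$ are all correct and are the right way to evaluate the norm of the linearized network.

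The genuine gap is in your first step. From $\|\sigma(\mathbf{A})-\sigma(\mathbf{B})\|_F\le\|\mathbf{A}-\mathbf{B}\|_F$ alone, iterating yields only $\|\mathbf{H}^{(k)}(\mathbf{X})-\mathbf{H}^{(k)}(\mathbf{X}')\|_F\le\prod_{i}\|L_i\|_{\mathrm{op}}\,\|\mathbf{X}-\mathbf{X}'\|_F$: after the first activation you retain only a bound on the norm of the layer difference, not its algebraic form, so the $L_i$ cannot be recomposed into $\mathcal{L}=L_k\circ\cdots\circ L_1$ inside a single norm. The intermediate claim $\|\mathbf{H}^{(k)}(\mathbf{X})-\mathbf{H}^{(k)}(\mathbf{X}')\|_F\le\|\mathcal{L}(\mathbf{X}-\mathbf{X}')\|_F$ is in fact false for general $1$-Lipschitz activations and general weights: with $\sigma=\mathrm{ReLU}$, $\mathbf{W}^{(1)}=(1,-1)$ and $\mathbf{W}^{(2)}=(1,1)^{\top}$ the two-layer network computes $x\mapsto\sigma(x)+\sigma(-x)=|x|$, which is $1$-Lipschitz, while $\mathbf{W}^{(1)}\mathbf{W}^{(2)}=0$, so the claimed bound would force the network to be constant. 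The missing idea --- which is the actual content of the cited theorem, not routine bookkeeping --- is that for a \emph{nondecreasing} $1$-Lipschitz $\sigma$ one has $\sigma(a)-\sigma(b)=\theta\,(a-b)$ with $\theta\in[0,1]$, so the network difference factors exactly as nested Hadamard (entrywise) products $\mathbf{D}_k\odot L_k\bigl(\cdots \mathbf{D}_1\odot L_1(\mathbf{X}-\mathbf{X}')\cdots\bigr)$ with multiplier matrices $\mathbf{D}_i$ having entries in $[0,1]$; only then does the entrywise non-negativity of $\mathbf{M}$, $\mathbf{W}_0^{(i)}$, $\mathbf{W}_1^{(i)}$ allow you to dominate everything, entrywise and hence in spectral norm, by the case $\mathbf{D}_i\equiv\mathbf{1}$, recovering $\|\mathcal{L}\|_{\mathrm{op}}$. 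In other words, the non-negativity hypotheses must be used twice --- once to eliminate the activations and once for the eigenvalue comparison --- whereas you invoke them only for the latter, and you also need (and should state) monotonicity of $\sigma$, which $1$-Lipschitzness does not supply.
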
 

\end{proof}

\section{Model Analysis}
\label{sec:complex}

The computational complexity and parameter count of vanilla GCN, r-\model, a-\model, c-\model and Fast c-\model are presented in \cref{tab:complexity}.
Several key observations are:
\begin{enumerate}
     \item \textbf{r-IGNN}: The residual connection does not significantly change the complexity compared to GCN. If the representation of the previous hop also has a transformation in the residual connection, then it will require more parameters.

    \item \textbf{a-IGNN}: The model adaptively determines $\alpha_v^{(k)}$ for each node, which slightly reduces the parameter count. Its per-layer complexity is lower than others, but still scales with the number of edges and nodes.

    \item \textbf{c-IGNN}: The explicit multi-hop aggregation increases computational cost compared to GCN. The complexity grows with $K$, making it more expensive as the number of hops increases. However, it better captures long-range dependencies and enjoys hop-wise distinct generalization and overall generalization, which holds significance in GNN universality across varying homophily.

    \item \textbf{Fast c-IGNN (see \cref{sec: fast c})}: By decoupling aggregation into preprocessing, it shifts the expensive aggregation operations outside training, making training complexity independent of the aggregation. This makes it scalable for large graphs.
    Among these models, Fast c-IGNN achieves the best scalability by precomputing multi-hop information. In contrast, a-IGNN and r-IGNN require more computational resources due to their recursive neighborhood aggregation.
\end{enumerate}

\begin{table}[t]
    \centering
    \caption{Comparison of Computational Complexity and Parameter Count}
    \label{tab:complexity}
    \resizebox{.8\textwidth}{!}{
    \begin{tabular}{c|ccc}
        \toprule
        \textbf{Model} & \textbf{Per-layer Complexity} & \textbf{Total Training Complexity} & \textbf{Parameter Count} \\
        \midrule
        Vanilla GCN & $\mathcal{O}(NDF + |\mathcal{E}|F + NF^2)$ & $\mathcal{O}(NDF + K(|\mathcal{E}|F + NF^2))$ & $\mathcal{O}(DF + KF^2)$ \\
        \hline
        r-IGNN & $\mathcal{O}(NDF + |\mathcal{E}|F + NF^2)$ & $\mathcal{O}(NDF + K(|\mathcal{E}|F + NF^2))$ & $\mathcal{O}(DF + KF^2)$ \\
        a-IGNN & $\mathcal{O}(NDF + |\mathcal{E}|F + NF)$ & $\mathcal{O}(NDF + K(|\mathcal{E}|F + NF))$ & $\mathcal{O}(DF + K \cdot 2F)$ \\
        c-IGNN & $\mathcal{O}(NDF + |\mathcal{E}|F + NF^2)$ & $\mathcal{O}(NDF + K(|\mathcal{E}|F + NF^2))$ & $\mathcal{O}(DF + KF^2)$ \\
        Fast c-IGNN & \specialcell{
            \textbf{Preprocessing}: $\mathcal{O}(K|\mathcal{E}|D)$, \\
            \textbf{Training}: $\mathcal{O}(KNDF+KNF^2)$
        } & $\mathcal{O}(K(NDF + NF^2))$ & $\mathcal{O}(K(DF + F^2))$ \\
        \bottomrule
    \end{tabular}
    }
\end{table}


\subsection{Complexity Analysis}
\label{sec:complex theo}

\paragraph{Complexity of Baseline - Vanilla GCN}:
\begin{equation}
    \mathbf{H}^{(k)} = \sigma(\widehat{\mathbf{A}}\mathbf{H}^{(k-1)} \mathbf{W}^{(k)}).
\end{equation}
\textit{Complexity per layer}: (1) Pre linear transformation: $\mathcal{O}(NDF)$ (2) Aggregation: $\mathcal{O}(|\mathcal{E}|F)$ (assuming a sparse adjacency matrix with $|\mathcal{E}|$ edges); (3) Transformation: $\mathcal{O}(NF^2)$; (4) Total training complexity: $\mathcal{O}(NDF+ |\mathcal{E}|F + NF^2)$.

Therefore, the total complexity (K layers) of the vanilla GCN is: $\mathcal{O}(NDF + K(|\mathcal{E}|F + NF^2))$.


\paragraph{Complexity of r-\model}:
\begin{equation}
    \mathbf{H}^{(k)} = \sigma(\widehat{\mathbf{A}}\mathbf{H}^{(k-1)} \mathbf{W}^{(k)})+ \mathbf{H}^{(k-1)}.
\end{equation}
\textit{Complexity per layer:} (1) Pre linear transformation: $\mathcal{O}(NDF)$ (2) Aggregation: $\mathcal{O}(|\mathcal{E}|F)$ ; (3) Transformation: $\mathcal{O}(NF^2)$; (4) Total training complexity: $\mathcal{O}(NDF+|\mathcal{E}|F + NF^2)$.

Therefore, the total complexity (K layers) of r-\model is the same as the vanilla GCN: $\mathcal{O}(NDF+K(|\mathcal{E}|F + NF^2))$.

  
\paragraph{Complexity of a-\model}:
\begin{equation}
    \mathbf{h}_v^{(k)}=\alpha_v^{({k})} \sum_{u}\widehat{\mathbf{A}}_{v,u}\mathbf{h}_u^{({k-1})}+(1-\alpha_v^{({k})})\mathbf{h}_v^{({k-1})}.
\end{equation}
\begin{equation}
    \alpha_v^{(k)} = ([\widehat{\mathbf{A}}\mathbf{H}^{({k-1})}]_v \mid\mid \mathbf{H}^{({k-1})}_v)\mathbf{W}^{(k)}.
\end{equation}
\textit{Complexity per layer:} (1) Pre linear transformation: $\mathcal{O}(NDF)$ (2) Aggregation: $\mathcal{O}(|\mathcal{E}|F)$; (3) Computation of $\alpha_v^{(k)}$: $\mathcal{O}(NF)$; (3) Total training complexity: $\mathcal{O}(NDF+ |\mathcal{E}|F + NF)$.

Therefore, the total complexity (K layers) of a-\model is lower since it does not use a full weight matrix but instead relies on a gating mechanism: $\mathcal{O}(NDF+K(|\mathcal{E}|F + NF))$.


\paragraph{Complexity of original c-\model}:
\begin{equation}
    \mathbf{H} =\sigma\left( \sum_{k=0}^K \sigma(\widehat{\mathbf{A}}^k \mathbf{X} \mathbf{W}^{(k)})\mathbf{W}_k \right).
\end{equation}
\textit{Complexity:} (1) Pre linear transformation: $\mathcal{O}(NDF)$ (2) Multi-hop propagation: $\mathcal{O}(K|\mathcal{E}|F)$; (3) Feature transformation: $\mathcal{O}(KNF^2)$; (4) Summation and final transformation: $\mathcal{O}(KNF)$; (5) Total training complexity: $\mathcal{O}(NDF+K(|\mathcal{E}|F + NF^2))$.


\paragraph{Complexity of the Fast c-\model}:
\label{sec: fast c}

To enhance IGNN's efficiency, we employ a preprocessing technique to decouple expensive aggregation operations from training. 
By examining the matrix formulation of IGNN: $\mathbf{H}_{IG,k}
=\sigma( (||_{i=0}^k\sigma(\widehat{\mathbf{A}}^i \mathbf{X} \mathbf{W}^{(i)}))\mathbf{W} )$, we observe that the aggregations  $\widehat{\mathbf{A}}^i \mathbf{X}$  for different hop neighborhoods are independent and can be computed in parallel. To optimize this, we preprocess these aggregations  $m_i = \widehat{\mathbf{A}}^i \mathbf{X}$ and store them prior to training. This approach reduces both the time spent on aggregations and the memory overhead during training.

The overall time complexity can thus be divided into two components:
\begin{enumerate}
    \item Preprocessing: This involves recursively computing  $\widehat{\mathbf{A}}^i \mathbf{X}$ for K hops, with a complexity of $\mathcal{O}(K|\mathcal{E}|D)$ for sparse cases;
    \item Training: During training, the complexity of the operation $(||_{i=0}^K\sigma(\mathbf{m}^i \mathbf{W}^{(i)}))\mathbf{W}, \mathbf{m}^i\in R^{N\times D}, \mathbf{W}^{(i)}\in R^{D\times F}, \mathbf{W}\in R^{KF\times F}$ is $\mathcal{O}(KNDF+KNF^2)$
\end{enumerate}

The only aggregation operation occurs during preprocessing, ensuring that training efficiency is decoupled from the edges. This design makes IGNN scalable and efficiency.

\subsection{Parameter Count Analysis}
\label{sec:params}
Parameter Counts are presented as:
\begin{itemize}
    \item \textbf{r-\model:} Since each layer has a weight matrix \(\mathbf{W}^{(k)} \in \mathbb{R}^{F \times F}\), the total number of parameters for \(K\) layers are $O(DF+KF^2)$.
    \item \textbf{a-\model:} Each layer has a weight matrix \(\mathbf{W}^{(k)} \in \mathbb{R}^{2F \times 1}\).
Thus, the total parameters for \(K\) layers are $O(DF+K \cdot 2F)$.
    \item \textbf{c-\model:} As each layer has \(\mathbf{W}^{(k)} \in \mathbb{R}^{F \times F}\) and \(\mathbf{W}_k \in \mathbb{R}^{F \times F}\), the total parameters are $O(DF+KF^2)$.
    \item \textbf{Fast c-\model:} The total parameters are $\mathcal{O}(KDF + KF^2)$.
\end{itemize}

\subsection{Runtime Efficiency Evaluation}
\label{sec:runtime}
We empirically evaluated the training efficiency of the 10 top models listed in Table~\ref{tab:overallperformance}, using a consistent hidden dimensionality of 512 across all methods to ensure a fair comparison.
To provide a comprehensive analysis, we measured the average training time (in seconds) over 100 epochs under two representative settings:
\begin{itemize}
    \item \textbf{Squirrel} (heterophilic, 2223 nodes, full-batch): hop sizes of 2, 8, 16, and 32.
    \item \textbf{OGB-Arxiv} (homophilic, 169,343 nodes, full-batch): hop sizes of 2 and 10.
\end{itemize}
The average training runtimes under each setting are reported. The three most efficient models per benchmark are emphasized in \textbf{bold}.

\begin{table}[t]
\centering
\caption{Training time (in seconds) on Squirrel dataset across different hop sizes.}
\label{tab:runtime-squirrel}
\begin{tabular}{lcccc|c}
\toprule
\textbf{Model / Hop} & \textbf{2} & \textbf{8} & \textbf{16} & \textbf{32} & \textbf{Avg. Rank} \\
\midrule[0.8pt]
IncepGCN       & 1.6$\pm$0.1 & 10.2$\pm$0.4 & 34.7$\pm$1.5 & 130.9$\pm$5.3 & 8.75 \\
SIGN           & 1.0$\pm$0.1 & 1.6$\pm$0.3  & 2.7$\pm$0.1  & 4.7$\pm$0.3   & \textbf{1.00} \\
DAGNN          & 1.6$\pm$0.3 & 2.4$\pm$0.2  & 3.2$\pm$0.1  & 5.4$\pm$0.3   & \textbf{2.62} \\
GCNII          & 1.8$\pm$0.2 & 3.9$\pm$0.1  & 6.4$\pm$0.1  & 10.3$\pm$0.2  & 5.88 \\
OrderedGNN     & 2.0$\pm$0.2 & 4.6$\pm$0.3  & 7.6$\pm$0.9  & 15.8$\pm$1.3  & 8.25 \\
DIFFormer      & 4.5$\pm$0.2 & 10.5$\pm$0.5 & 18.4$\pm$0.6 & 36.7$\pm$2.7  & 9.75 \\
SGFormer       & 4.3$\pm$0.1 & 10.9$\pm$0.1 & 21.5$\pm$4.8 & 50.2$\pm$6.0  & 10.25 \\
\midrule[0.8pt]
a-IGNN         & 1.7$\pm$0.1 & 4.2$\pm$0.1  & 7.5$\pm$0.1  & 12.6$\pm$0.2  & 6.75 \\
r-IGNN         & 1.6$\pm$0.1 & 3.3$\pm$0.1  & 6.0$\pm$0.2  & 11.2$\pm$0.5  & \textbf{4.75} \\
c-IGNN         & 1.9$\pm$0.1 & 3.4$\pm$0.1  & 5.6$\pm$0.1  & 10.3$\pm$0.2  & 5.38 \\
fast c-IGNN    & 1.4$\pm$0.1 & 2.4$\pm$0.1  & 3.5$\pm$0.4  & 6.9$\pm$0.1   & \textbf{2.62} \\
\bottomrule
\end{tabular}
\end{table}

\begin{table}[t]
\centering
\caption{Training time (in seconds) on OGB-Arxiv dataset. OOM indicates out-of-memory errors.}
\label{tab:runtime-arxiv}
\begin{tabular}{lcc|c}
\toprule
\textbf{Model/Hop} & \textbf{2} & \textbf{10} & \textbf{Avg. Rank} \\
\midrule[0.8pt]
IncepGCN    & OOM        & OOM        & - \\
SIGN        & 6.3$\pm$0.0  & 19.0$\pm$0.1 & \textbf{2.0} \\
DAGNN       & 4.0$\pm$0.0  & 5.9$\pm$0.0  & \textbf{1.0} \\
GCNII       & 33.1$\pm$1.1 & 141.9$\pm$0.4 & 7.5 \\
OrderedGNN  & 29.5$\pm$0.0 & OOM         & 7.0 \\
DIFFormer   & 50.7$\pm$0.3 & OOM         & 9.0 \\
SGFormer    & 66.2$\pm$0.1 & OOM         & 10.0 \\
\midrule[0.8pt]
a-IGNN      & 20.2$\pm$1.7 & 80.4$\pm$0.1 & 5.5 \\
r-IGNN      & 21.6$\pm$1.3 & 78.3$\pm$0.3 & 5.5 \\
c-IGNN      & 16.0$\pm$1.0 & 42.7$\pm$0.1 & 4.0 \\
fast c-IGNN & 15.1$\pm$0.7 & 38.5$\pm$0.4 & \textbf{3.0} \\
\bottomrule
\end{tabular}
\end{table}

These results demonstrate that our IGNN variants—particularly \textit{fast c-IGNN}—consistently achieve competitive or superior training efficiency across both heterophilic and homophilic graph settings. The runtime advantages are especially pronounced under large-hop configurations, owing to fast c-IGNN's use of precomputation and caching strategies for efficient neighborhood aggregation. This design enables fast c-IGNN to scale effectively without compromising expressiveness or generalization capability. Note that all results reported for c-IGNN in Table~\ref{tab:overallperformance} correspond to the fast c-IGNN variant.

\section{Additional Quatitative Analysis}
\label{sec:qua}
\begin{figure}[t]
    \centering    
    \begin{subfigure}[]{0.48\columnwidth}
        \centering    
        \includegraphics[width=\textwidth]{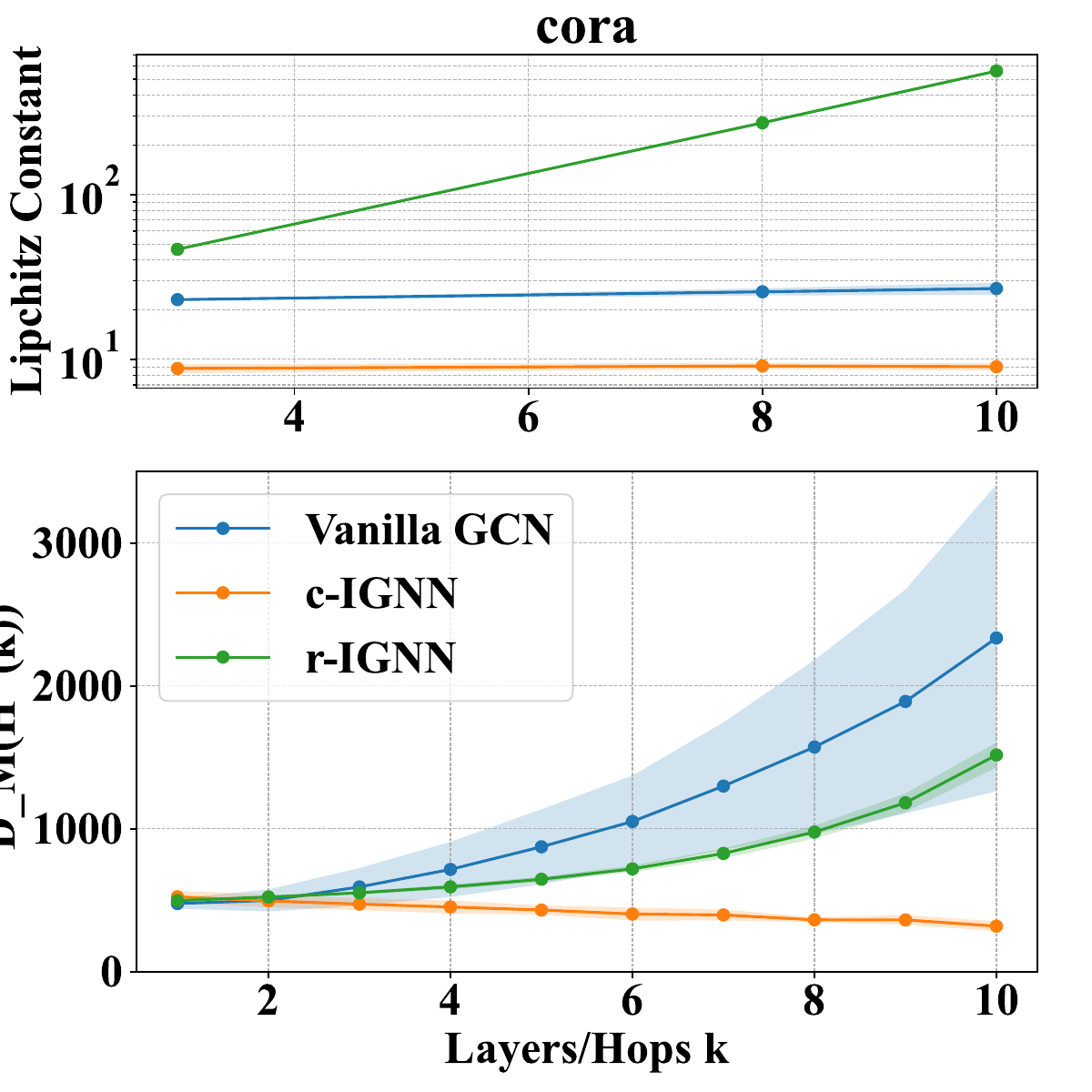}
        \caption{Cora hops=10}
        \label{fig:c1}
    \end{subfigure}
    \begin{subfigure}[]{0.48\columnwidth}
        \centering    
        \includegraphics[width=\textwidth]{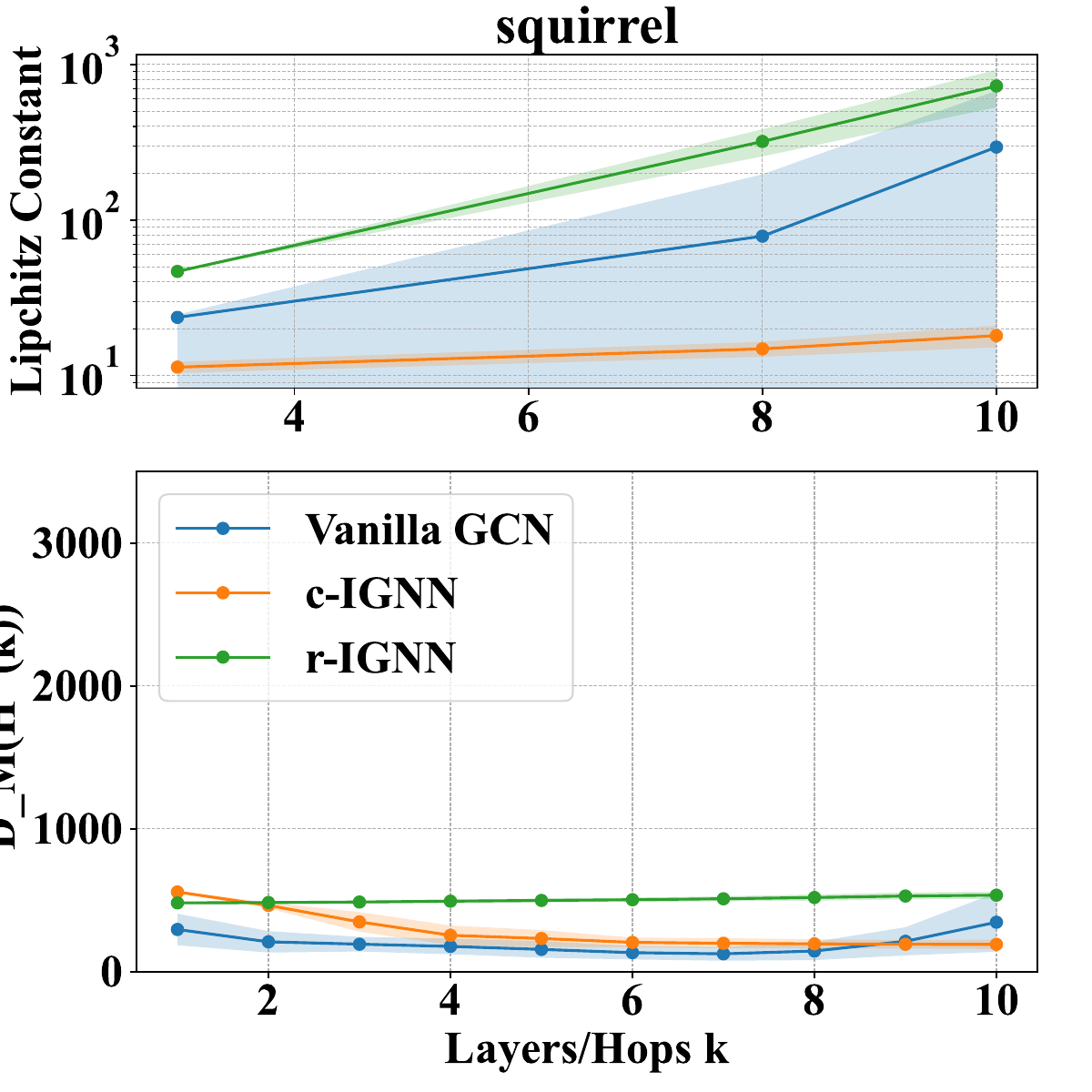}
        \caption{Squirrel hops=10}
        \label{fig:s1}
    \end{subfigure}
    \caption{
    Additional Quantitative Experiments (1).
    }
    \label{fig:qua1}
\end{figure}
\begin{figure}[htbp]
    \centering    
    \begin{subfigure}[]{0.48\columnwidth}
        \centering    
        \includegraphics[width=\textwidth]{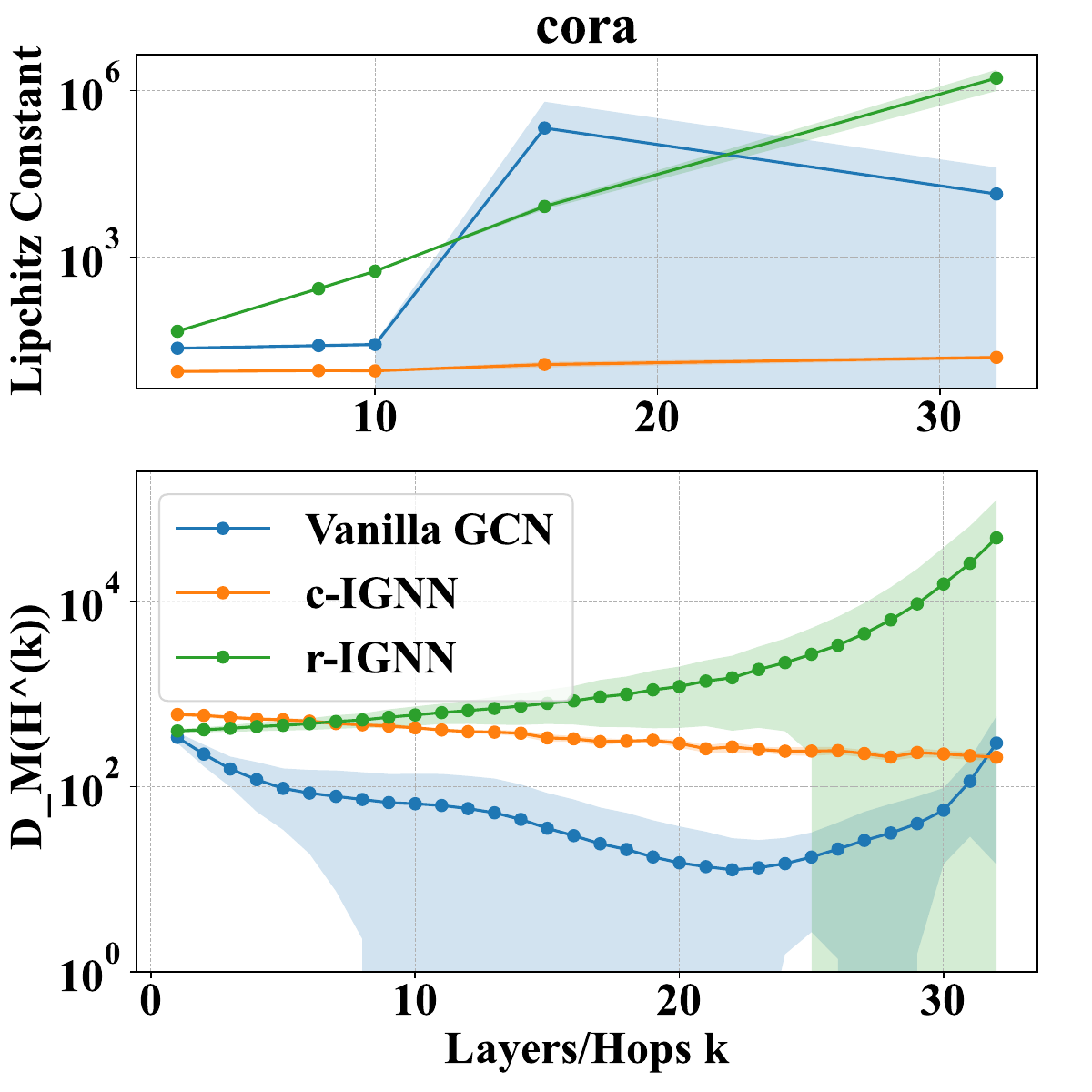}
        \caption{Cora hops=32}
        \label{fig:c2}
    \end{subfigure}
    \begin{subfigure}[]{0.48\columnwidth}
        \centering    
        \includegraphics[width=\textwidth]{imgs/cora_hops64.pdf}
        \caption{Cora hops=64}
        \label{fig:c3}
    \end{subfigure}
    \begin{subfigure}[]{0.48\columnwidth}
        \centering    
        \includegraphics[width=\textwidth]{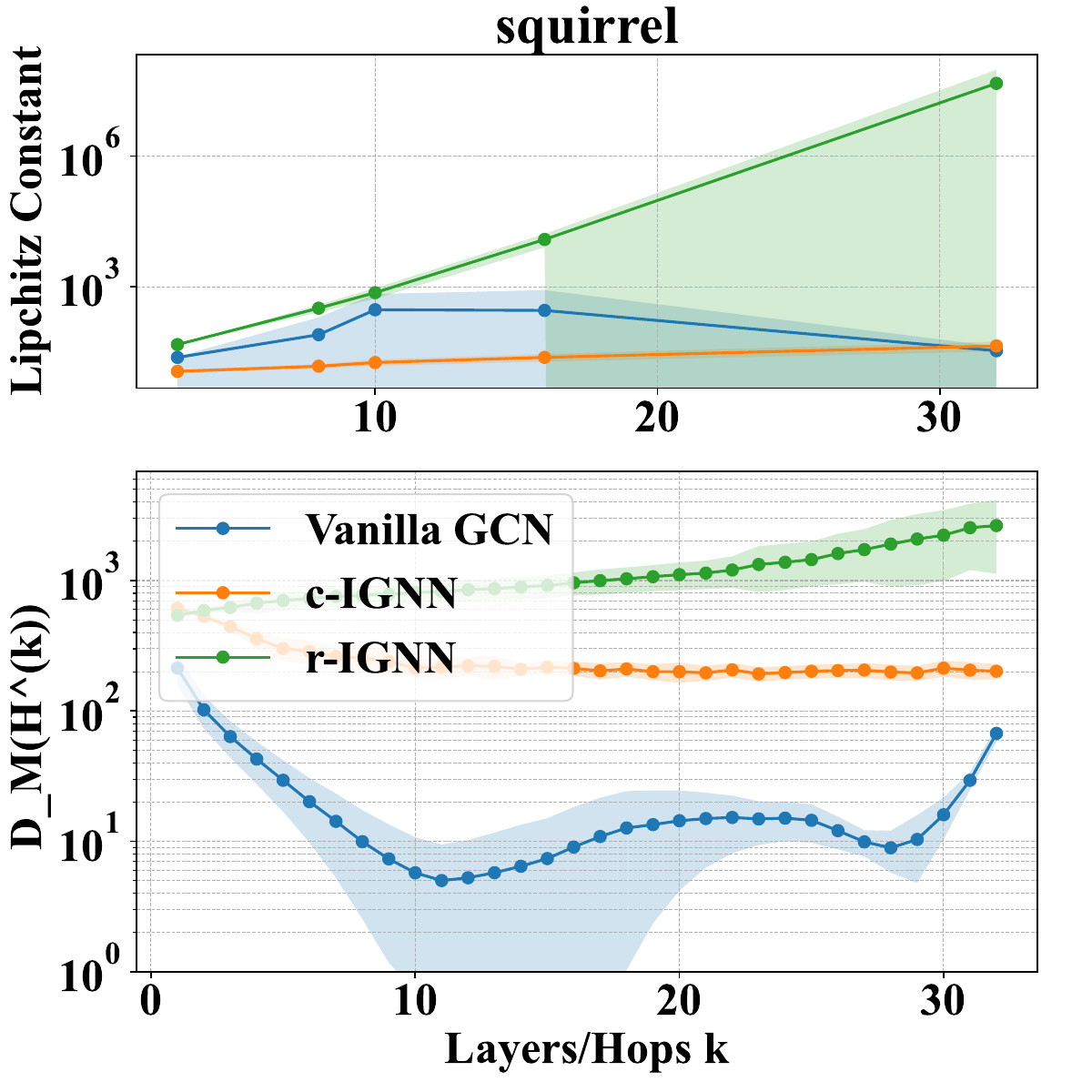}
        \caption{Squirrel hops=32}
        \label{fig:s2}
    \end{subfigure}
    \begin{subfigure}[]{0.48\columnwidth}
        \centering    
        \includegraphics[width=\textwidth]{imgs/squirrel_hops64.pdf}
        \caption{Squirrel hops=64}
        \label{fig:s3}
    \end{subfigure}
    \caption{
    Additional Quantitative Experiments (2).
    }
    \label{fig:qua2}
\end{figure}

We conducted additional quantitative experiments to evaluate the smoothness–generalization dilemma by measuring the smoothness $d_\mathcal{M}(\mathbf{H}^{(k)})$ and the empirical Lipschitz constant $\hat{L}$ following the implementation in \citet{qua} across different models: vanilla GCN, c-IGNN (integrating all three proposed principles), and r-IGNN (adopting only the IN and RN principles), as shown in \cref{fig:qua1,fig:qua2}. 

\textbf{The results provide strong empirical support for our theoretical claims regarding the dilemma.}

\paragraph{Key Observations:}
\begin{enumerate}
    \item \textbf{Vanilla GCN and the Dilemma.} While $d_\mathcal{M}(\mathbf{H}^{(k)})$ initially increases (indicating reduced smoothness) for $k \leq 10$ (\cref{fig:qua1}), this trend does not persist for larger hops. Specifically, for $k \geq 32$ (\cref{fig:qua2}), $d_\mathcal{M}(\mathbf{H}^{(k)})$ greatly decreases (reflecting increased smoothness), followed by a subsequent rise—likely due to the transition from approximation to classifier supervision. Meanwhile, $\hat{L}$ exhibits an inverse trend, \emph{in alignment with our theoretical predictions of the smoothness-generalization dilemma}.
    
    \item \textbf{r-IGNN.} Although r-IGNN alleviates oversmoothing by yielding higher $d_\mathcal{M}(\mathbf{H}^{(k)})$, it also shows a continuous increase in $\hat{L}$, suggesting that \emph{generalization capability deteriorates} as hop count increases.
    
    \item \textbf{c-IGNN.} By incorporating all three design principles, c-IGNN sustains \emph{stable and moderate trends} in both $\hat{L}$ and $d_\mathcal{M}(\mathbf{H}^{(k)})$, thereby ensuring robust generalization while avoiding excessive smoothing.
\end{enumerate}

\section{Additional Theoretical Analysis}
\label{sec:add}

\subsection{Exisiting GNNs with Partial Inceptive Architectures}
\label{sec:comp}

\begin{table}[t]
    \centering
    \caption{
    Comparison of Inceptive GNNs in incorporating three principles.
    }
\label{tab:comp}
    \resizebox{\textwidth}{!}{
    \begin{tabular}{c|cccccccccc|ccc}
        \toprule[1pt]
        \textbf{Methods} & APPNP & JKNet-GCN & IncepGCN & SIGN & MIXHOP & DAGNN & GCNII & GPRGCNN & ACMGCN & OrderedGNN &r-\model &a-\model &c-\model \\ 
            \midrule[0.8pt]
            
        \textbf{\SN} & ~ & ~ & \checkmark & \checkmark & \checkmark & ~ & ~ & ~ & ~ & ~ & && \checkmark \\ 
        
        \textbf{\IN} & \checkmark & ~ & \checkmark & \checkmark & \checkmark & \checkmark & \checkmark & \checkmark & \checkmark & \checkmark & \checkmark& \checkmark& \checkmark \\ 
        
        \textbf{\RN} & \checkmark & \checkmark & ~ & merged into SN & ~ & \checkmark & \checkmark & \checkmark & \checkmark & \checkmark & \checkmark& \checkmark& \checkmark \\ 
            \bottomrule[1pt]
    \end{tabular}
    }
\end{table}
\begin{table}[t]
    \centering
    \caption{Comparison of inceptive GNNs variants.
    The following notations are used only to illustrate the relevant forms and do not necessarily conform to the actual expressions.
    $\gamma_k$ denotes learnable coefficients, and $K$ is the network depth.
    $\textit{s}(\cdot)$ refers to the softmax function, while $g(\cdot)$ represents the ordered gating attention function.
    $\mathbf{W}_a$ is the weight matrix for the attention, and $\mathbf{W}_I/\mathbf{W}_L/\mathbf{W}_H/\mathbf{W}_{\text{mix}}$ denote weight matrices of full-/low-/high-pass/mixed signals, respectively.
    }
\label{tab:iGNNs}
    \resizebox{.8\textwidth}{!}{
    \begin{tabular}{c|c|c|c}
    
    \toprule[1pt]
        \textbf{Model}&\textbf{Subtype} & \SN ($\mathbf{W}$ of $k$-th hop) & \IN\&\RN(weight of $k$-th hop) \\ 
    \midrule[0.8pt]
        APPNP &  Residual & $\mathbf{W}_\theta$& 
                            \specialcellc{
                            $\alpha(1-\alpha)^k,$\\
                            $(1-\alpha)^K,\alpha\in(0,1]$\\
                            }\\
        JKNet & Concatenative &$\prod_{i=0}^k\mathbf{W}^{(i)}$ & — \\
        IncepGCN & Concatenative &$\prod_{i=0}^k\mathbf{W}^{(i)}$ & — \\
        SIGN & Concatenative &$\mathbf{W}^{(k)}$ & — \\
        MixHop & Concatenative&$\mathbf{W}^{(k)}$ & — \\
        DAGNN & Attentive &$\mathbf{W}_\theta$ &$\sigma(\widehat{\mathbf{A}}^k\mathbf{X}\mathbf{W}_\theta\mathbf{W}_a)$ \\
        GCNII & Residual &$\prod_{i=K-k+1}^K\mathbf{W}^{(i)}$ & implicit $\gamma_k$  \\
    \hline
        GPRGNN & Attentive &$\mathbf{W}_\theta$ & explicit $\gamma_k$ \\
        ACMGCN &Attentive & 
                        \specialcell{
                        $\left(\prod_{i=0}^k\mathbf{W}_{L/H}^{(i)}\cdot\right.$\\
                        $\left.\prod_{i=K-k+1}^K\mathbf{W}_{I}^{(i)}\right)$
                        } 
                        &
                        \specialcellc{
                            $\textit{s}\left(([\mathbf{H}_{I/L/H}^{(k)}\mathbf{W}_{I/L/H}^{(k)}]/T)\right.$\\
                            $\left.\mathbf{W}_{\text{mix}}^{(k)}\right)$
                        }
                        \\
        OrderedGNN &Attentive &$\mathbf{W}_\theta$ &$g(\mathbf{m}_v^{(k)},\mathbf{h}_v^{(k-1)})$  \\
    
    \midrule[0.8pt]
        r-\model  & Residual & $\sum_{\substack{J \subseteq \{1,2,\ldots,k\} \\ |J|=m}} \prod_{j \in J} \mathbf{W}^{(j)}$&implicit $\gamma_k$  \\
        a-\model  & Attentive & $\mathbf{W}_\theta$&explicit $\gamma_k$  \\
        c-\model  & Concatenative & $\mathbf{W}^{(k)}$&implicit $\gamma_k$  \\
    \bottomrule[1pt]
    
    \end{tabular}
    } 
\end{table}
\cref{tab:comp} shows the comparison of inceptive GNN variants in incorporating three principles, while \cref{tab:iGNNs} demonstrates the detailed \SN,\IN, and \RN architectures of each variant.
Except for c-\model, the other methods lack at least one principle.
The best performance of c-\model shows that the combination of all three principles can best eliminate the dilemma.

\subsection{Analysis of the Initial Residual \model Variant }
\label{sec: initial residual}
The initial residual connection in~\citet{gcnii} can be formulated as:
$\mathbf{H}^{(k)}
    =\sigma(\widehat{\mathbf{A}} \mathbf{H}^{(k-1)} \mathbf{W}^{(k)})+\mathbf{H}^{(0)}
$,
where $\mathbf{H}^{(0)}=\sigma(\mathbf{X}\mathbf{W}^{(0)})$.
Leaving out all non-linearity for simplicity, we can derive the expression for $\mathbf{H}^{(k)}$ in terms of $\mathbf{X}$ as:
\begin{equation}
    \mathbf{H}^{(k)}=\sum_{i=0}^{k} \widehat{\mathbf{A}}^{k-i} \mathbf{X}\mathbf{W}^{(0)}\left(\prod_{j=i+1}^{k} \mathbf{W}^{(j)}\right).
\end{equation}
This formulation is also an inceptive variant of \IN design.
It avoids an excessive increase in the parameter $\mathbf{W}^{(k)}$ for low-order neighborhoods when $k$ is small, as in original residual connection, thereby preventing the smoothing effect caused by multiplications of $\mathbf{W}^{(k)}$.
This distinction may provide insight into why initial residual connections offer greater relief to over-smoothing, as low-order neighborhood representation remains the performance of its lower-order GNN counterparts.

\section{Experimental Settings and Additional Empirical Results}
\label{sec:add exp}

\subsection{Varying Homophily across Hops and Nodes}
\label{sec:varying}
\begin{figure}[t]
    \centering    
    \begin{subfigure}[]{0.49\columnwidth}
        \centering    
        \includegraphics[width=0.8\textwidth]{imgs/comparison-no-self-norm.pdf}
        \caption{Remove self-loops}
        \label{fig:nsf}
    \end{subfigure}
    \begin{subfigure}[]{0.49\columnwidth}
        \centering    
        \includegraphics[width=0.8\textwidth]{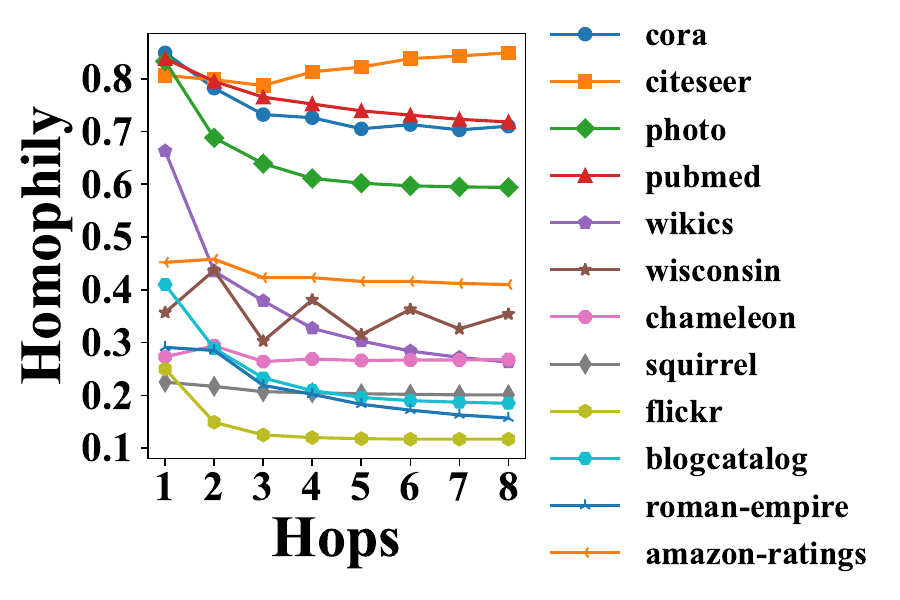}
        \caption{Add self-loops}
        \label{fig:sf}
    \end{subfigure}

    \begin{subfigure}[]{0.3\columnwidth}
        \centering    
        \includegraphics[width=\textwidth]{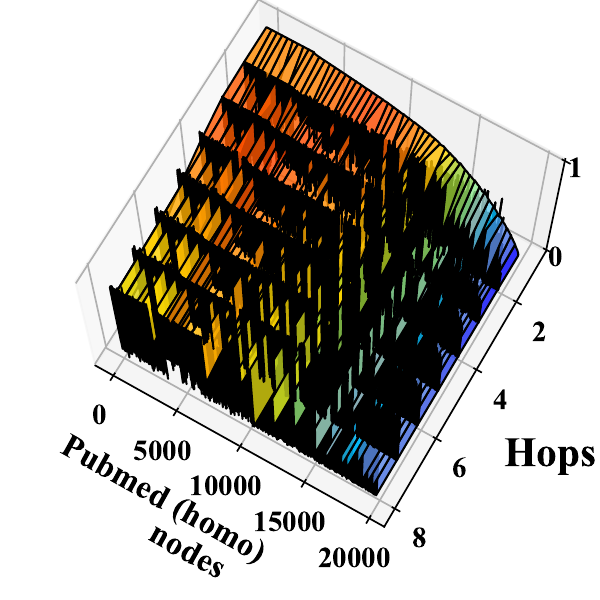}
        \caption{}
        \label{fig:nh-2}
    \end{subfigure}
    \begin{subfigure}[]{0.3\columnwidth}
        \centering    
        \includegraphics[width=\textwidth]{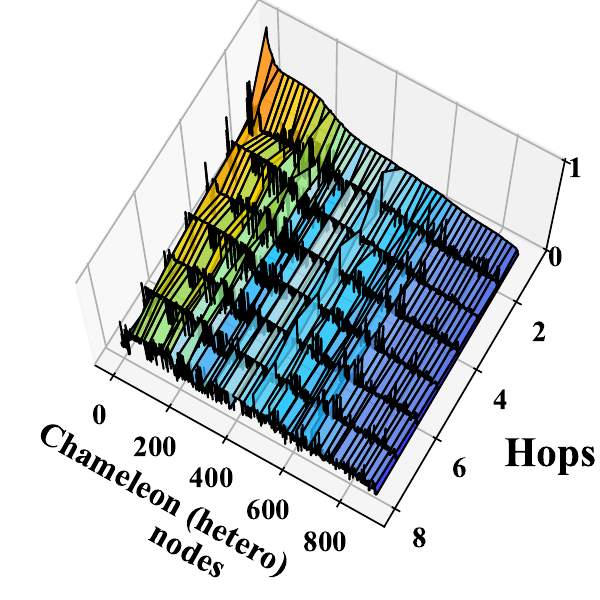}
        \caption{}
        \label{fig:nh-4}
    \end{subfigure}
    \begin{subfigure}[]{0.3\columnwidth}
        \centering    
        \includegraphics[width=\textwidth]{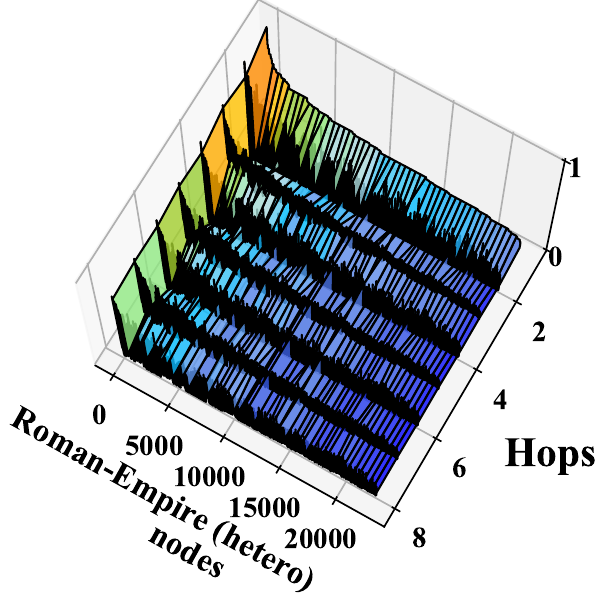}
        \caption{}
        \label{fig:nh-8}
    \end{subfigure}
    \caption{
    Varying homophily across hops and nodes.
    }
    \label{fig:varying}
\end{figure}

\cref{fig:varying} demonstrates the varying edge and node homophily inherent within a single graph.

\paragraph{Varying Homophily across Hops} We compute the edge homophily of each $i$-th hop based on $\mathbf{A}^i$ with self-loops removed (\cref{fig:nsf}) or added (\cref{fig:sf}). 
The edge homophily levels across hops all show diverse trends, including upward, downward, and oscillating, although the trends appear to be more stable after adding the self-loop.

\paragraph{Varying Homophily across Nodes} We compute the node homophily of N nodes in each $i$-th hop based on $\mathbf{A}^i$ with self-loops removed.
From \cref{fig:nh-2} to \ref{fig:nh-8}, two conclusions can be safely drawn that the node homophily levels (1) show a continuous variation from 0 to 1 among all nodes, and 
 (2) display an overall declining trend with fluctuations when the hop order increases.

\subsection{Best Hyperparameters and Search Spaces}
\label{sec:hyper}
We present the optimal hyperparameter settings for all \model-s in our public code repository: \href{https://github.com/galogm/IGNN}{https://github.com/galogm/IGNN}.

\subsubsection{Search Spaces of Baseline models}
\begin{table}[t]
    \centering
    \captionof{table}{Baselines. Incep. and Non. are \textit{inceptive} or not.}
\label{tab:baselines}
    \resizebox{0.6\textwidth}{!}{
    \begin{tabular}{c|c|l}
    \toprule[1pt]
        \textbf{Type}&\textbf{Subtype} & \textbf{Model}  \\ 
    \midrule[0.8pt]
        \multicolumn{2}{c|}{\textbf{Graph-agnostic}} &  MLP\\
    \hline
        \multirow{2}{*}{\specialcell{\textbf{Homo.}\\\textbf{GNNs}}} 
            & \textbf{Non.} 
                & \specialcell{
                        GCN~\citep{gcn},
                        SGC~\citep{sgc},
                        GAT~\citep{gat},
                        GraphSAGE~\citep{sage}
                    }\\
    \cmidrule[0.8pt]{2-3}
            & \textbf{Incep.} 
                & \specialcell{
                       APPNP~\citep{ppnp},
                        SIGN~\citep{sign}, 
                        JKNet~\citep{jknet}, 
                        MixHop~\citep{mixhop}, \\
                        FAGCN~\citep{fagcn},
                        $\omega$GAT~\citep{omege},
                        IncepGCN~\citep{dropedge}, \\
                        DAGNN~\citep{dagnn},
                        GCNII~\citep{gcnii}
                        }\\
    \hline
        \multirow{2}{*}{\specialcell{\textbf{Hetero.}\\\textbf{GNNs}}} 
            & \textbf{Non.} 
                & \specialcell{
                        H2GCN~\citep{H2GCN}, 
                        GBKGNN~\citep{gbk}, 
                        GGCN~\citep{ggcn}, \\
                        GloGNN~\citep{glo}, 
                        HOGGCN~\citep{hog},
                        }\\
    \cmidrule[0.8pt]{2-3}
            & \textbf{Incep.} 
                & \specialcell{
                        GPRGNN~\citep{gpr},
                        ACMGCN~\citep{acm},
                        OrderedGNN~\citep{OGNN},\\
                        N$^2$~\citep{n2},
                        CoGNN~\citep{cognn},
                        UniFilter~\citep{uni}
                        }\\
    \hline
        \multicolumn{2}{c|}{\specialcell{\textbf{Graph} \textbf{Transformer}\\}} 
                & \specialcell{
                        NodeFormer~\citep{nodeformer},
                        DIFFormer~\citep{difformer},
                        SGFormer~\citep{sgformer},\\
                        GOAT~\citep{goat},
                        Polynormer~\citep{polynormer},
                        }\\
            
    \bottomrule[1pt]
    \end{tabular}
    }
\end{table}
The code for all 30 baselines in \cref{tab:baselines} is in \href{https://github.com/galogm/IGNN/tree/master/benchmark}{https://github.com/galogm/IGNN/tree/master/benchmark}.
\begin{itemize}
    \item If a baseline has its own folder, a \textit{search.py} script is included for hyperparameter tuning with \textit{optuna}. See the \textit{README.md} in the folder for details.
    \item If a baseline does not have its own folder, it can be run with a provided script \textit{baselines.py}, which can conveniently derive the corresponding \textit{search.py} script.
    \item All search spaces used in the experiments are documented in \href{https://github.com/galogm/IGNN/blob/master/configs/search_grid.py}{https://github.com/galogm/IGNN/blob/master/configs/search\_grid.py}
\end{itemize}

\section{Limitation Discussion}
\label{sec:limitaions}
This work contributes to advancing the universality of Graph Neural Networks (GNNs) under varying levels of homophily by identifying the smoothness–generalization dilemma, which poses fundamental challenges to learning in both higher-order homophilic and heterophilic settings.
While our findings provide a unified theoretical and empirical foundation for this dilemma, we acknowledge the following limitations:
(1) Use of existing architectural components. Our proposed framework is constructed by revisiting and systematically organizing existing design principles rather than introducing entirely new architectural modules. This choice is intentional: by building on widely adopted components, our framework offers a practical and interpretable foundation for diagnosing and addressing smoothness-generalization related failures in GNNs. Nonetheless, the absence of newly designed modules may be seen as a limitation from a pure architectural perspective.
(2) Scope of theoretical analysis. Our theoretical formulation is grounded in the classical GCN setting to ensure analytical clarity and generality. While this enables clean and interpretable derivations, it does not explicitly cover more complex GNN architectures such as adaptive message-passing models.
However, we believe the identified dilemma and derived principles are broadly applicable, and extending the theoretical analysis to more expressive GNNs represents a promising direction for future work.



\end{document}